\documentclass{article}

\usepackage[numbers]{natbib}
\usepackage{authblk}
\usepackage[title,toc,titletoc,page]{appendix}
\usepackage{algorithm}
\usepackage{capt-of}
\usepackage[noend]{algpseudocode}
\usepackage{letltxmacro}
\LetLtxMacro{\oldalgorithmic}{\algorithmic}
\LetLtxMacro{\endoldalgorithmic}{\endalgorithmic}
\renewenvironment{algorithmic}[1][0]{%
  \hrulefill\par
  \oldalgorithmic[#1]}
  {\endoldalgorithmic\par
   \vspace*{-.5\baselineskip}
   \hrulefill\par
  }

\usepackage{fullpage}
\usepackage{url}            
\usepackage{booktabs}       
\usepackage{amsfonts}       
\usepackage{dsfont}
\usepackage{nicefrac}       
\usepackage{microtype}      
\usepackage{amsthm}
\usepackage{float}
\usepackage[english]{babel}
\usepackage{mathtools}
\usepackage{verbatim}
\usepackage{tabu}
\usepackage{amsmath}
\usepackage{mathrsfs}
\usepackage{enumitem}
\usepackage{amsfonts}
\usepackage{graphicx}
\usepackage{parskip}
\usepackage{color}
\usepackage{amssymb}
\usepackage{epstopdf}

\newtheorem{theorem}{Theorem}
\newtheorem{lemma}{Lemma}
\newtheorem{remark}[lemma]{Remark}

\newtheorem{proposition}{Proposition}
\newtheorem{fact}{Fact}
\newtheorem{corollary}{Corollary}
\newtheorem{definition}[lemma]{Definition}
\renewenvironment{proof}[1][Proof]{\begin{trivlist}
\item[\hskip \labelsep {\bfseries #1}]}{\qed\end{trivlist}}

\newcommand*\R[0]{\mathbb{R}}

\newcommand*\ddt[0]{\frac{d}{d t}}

\newcommand*\lin[1]{\left\langle #1\right\rangle}
\newcommand*\E[1]{\mathbb{E}\left[#1\right]}
\newcommand*\Ep[2]{\mathbb{E}_{#1}\left[#2\right]}

\newcommand*\lrbb[1]{\left\{#1\right\}}
\newcommand*\lrp[1]{\left(#1\right)}
\newcommand*\lrn[1]{\left\|#1\right\|}

\newcommand*\A[0]{\mathcal{A}}

\newcommand*\ind[1]{{\mathds{1}\lrbb{#1}}}

\renewcommand*{\qed}{\hfill\ensuremath{\blacksquare}}

\newcommand*{\p}{p}

\newcommand*{\pmu}{\boldsymbol{\mu}}

\def\vx{{\bf x}}
\def\vX{{\bf X}}
\def\vy{{\bf y}}
\def\vz{{\bf z}}
\def\vv{{\bf v}}
\def\rd{\mathrm{d}}
\def\rT{{\rm T}}
\def\ball{\mathbb{B}}
\def\mI{\ensuremath{\mathbb{I}}}

\newcommand{\mixingerror}{\epsilon}
\newcommand{\mixingtime}{\tau}

\newcommand{\density}{p}
\newcommand{\initial}{p^0}
\newcommand{\target}{\ensuremath{p^*}}

\newcommand{\stationary}{\ensuremath{\Pi}}
\newcommand{\warmparam}{\beta}
\newcommand{\func}{U}

\newcommand{\proposal}{\mathcal{P}}

\newcommand{\step}{h}

\newcommand{\stepbound}{\tilde{w}}
\newcommand{\scalar}{\alpha}
\newcommand{\tvscalar}{\gamma}

\newcommand{\transition}{\mathcal{T}}

\newcommand{\tagmala}{\text{\tiny MALA}}

\newcommand{\flow}{\phi}
\newcommand{\conductance}{\Phi}

\newcommand{\radius}{r}
\newcommand{\truncball}{\mathcal{R}}
\newcommand{\maxgrad}{\mathcal{D}}

\newcommand{\convexset}{\ensuremath{\mathcal{K}}}

\newcommand{\scparam}{m}
\newcommand{\smoothness}{L}
\newcommand{\condition}{\ensuremath{\kappa}}

\newcommand{\dims}{d}

\newcommand{\logsobconstU}{\rho_U}

\newcommand{\real}{\ensuremath{\mathbb{R}}}
\newcommand{\Ind}{\ensuremath{\mathbb{I}}}
\newcommand{\Exs}{\ensuremath{{\mathbb{E}}}}
\newcommand{\Prob}{\ensuremath{{\mathbb{P}}}}

\DeclarePairedDelimiterX{\infdivx}[2]{(}{)}{%
  #1\;\delimsize\|\;#2%
}
\newcommand{\kldiv}{\text{KL}\infdivx}

\newcommand{\NORMAL}{\ensuremath{\mathcal{N}}}

\newcommand{\brackets}[1]{\left[ #1 \right]}
\newcommand{\parenth}[1]{\left( #1 \right)}

\newcommand{\braces}[1]{\left\{ #1 \right \}}

\newcommand{\angles}[1]{\left\langle #1 \right \rangle}
\newcommand{\tp}{^\top}


\newcommand{\matsnorm}[2]{|\!|\!| #1 | \! | \!|_{{#2}}}
\newcommand{\vecnorm}[2]{\left\| #1\right\|_{#2}}

\title{Sampling Can Be Faster Than Optimization}

\author[a]{Yi-An Ma\thanks{yianma@berkeley.edu}}
\author[b]{Yuansi Chen\thanks{yuansi.chen@berkeley.edu}}
\author[a]{Chi Jin\thanks{chijin@cs.berkeley.edu}}
\author[a]{Nicolas Flammarion\thanks{flammarion@berkeley.edu}}
\author[a, b]{Michael I. Jordan\thanks{jordan@cs.berkeley.edu}}

\affil[a]{Department of Electrical Engineering and Computer Sciences, University of California, Berkeley, CA 94720}
\affil[b]{Department of Statistics, University of California, Berkeley, CA 94720}

\begin{document}
\maketitle

\begin{abstract}
Optimization algorithms and Monte Carlo sampling algorithms have provided
the computational foundations for the rapid growth in applications of statistical
machine learning in recent years.  There is, however, limited theoretical
understanding of the relationships between these two kinds of methodology,
and limited understanding of relative strengths and weaknesses.  Moreover,
existing results have been obtained primarily in the setting of convex
functions (for optimization) and log-concave functions (for sampling).
In this setting, where local properties determine global properties,
optimization algorithms are unsurprisingly more efficient computationally
than sampling algorithms.  We instead examine a class of \emph{nonconvex}
objective functions that arise in mixture modeling and multi-stable systems.
In this nonconvex setting, we find that the computational complexity of
sampling algorithms scales linearly with the model dimension while that
of optimization algorithms scales exponentially.
\end{abstract}

{M}achine learning and data science are fields that blend computer science
and statistics so as to solve inferential problems whose scale and complexity require
modern computational infrastructure.  The algorithmic foundations on which
these blends have been based repose on two general computational strategies, both which
have their roots in mathematics---optimization and Markov chain Monte Carlo
(MCMC) sampling.  Research on these strategies has mostly proceeded separately,
with research on optimization focused on estimation and prediction problems,
and with research on sampling focused on tasks that require uncertainty
estimates, such as forming credible intervals and conducting hypothesis
tests.  There is a trend, however, towards the use of common methodological
elements within the two strands of research~\citep{Amit_Grenander,Amit,Gareth_Gauss_Siedel,Meng_EM,
Dalalyan_JRSSB,Moulines_ULA,Dalalyan_user_friendly,Xiang_underdamped,Xiang_overdamped,
dwivedi2018log, Mangoubi1,Mangoubi2}.  In particular, both strands
have focused on the use of gradients and stochastic gradients---rather than
function values or higher-order derivatives---as providing a useful compromise
between the computational complexity of individual algorithmic steps and the
overall rate of convergence.  Empirically, the effectiveness of this compromise
is striking.  But the relative paucity of theoretical research linking
optimization and sampling has limited the flow of ideas; in particular,
the rapid recent advance of theory for optimization~\citep[see, e.g.,][]{Nesterov_intro}
has not yet translated into a similarly rapid advance of the theory for sampling.
Accordingly, machine learning has remained limited in its inferential scope,
with little concern for estimates of uncertainty.

Theoretical linkages have begun to appear in recent work~\citep[see, e.g.,][]
{Dalalyan_JRSSB,Moulines_ULA, Dalalyan_user_friendly,Xiang_underdamped,Xiang_overdamped,dwivedi2018log, Mangoubi1,Mangoubi2},
where tools from
optimization theory have been used to establish rates of convergence---notably
including non-asymptotic dimension dependence---for MCMC sampling.  The overall message from
these results is that sampling is slower than optimization---a message which
accords with the folk wisdom that sampling approaches are warranted only if
there is need for the stronger inferential outputs that they provide.  These
results are, however, obtained in the setting of \emph{convex} functions.
For convex functions, global properties can be assessed via local information.
Not surprisingly, gradient-based optimization is well suited to such a setting.

Our focus is the \emph{nonconvex} setting.  We consider a broad class of
problems that are strongly convex outside of a bounded region, but nonconvex
inside of it.  Such problems arise, for example, in Bayesian mixture model problems~\citep{GMM_book,Bayesian_GMM}, and in the noisy
multi-stable models that are common in statistical physics~\citep{Kramers,Landau}.
We find that when the nonconvex region has a constant and nonzero radius
in $\R^d$, the MCMC methods converge to $\epsilon$ accuracy in
$\widetilde{\mathcal{O}}\left({d}/{\epsilon}\right)$ or
$\widetilde{\mathcal{O}}\left(d^2 \ln\left(1/\epsilon\right)\right)$
steps whereas any optimization approach converges in
$\widetilde{\Omega} \left( ({1}/{\epsilon})^d \right)$ steps.
Thus, for this class of problems, sampling is more effective than
optimization.

We obtain these polynomial convergence results for the MCMC algorithms in
the nonconvex setting by working in continuous time and separating the problem
into two subproblems: given the target distribution we first exploit the properties
of a weighted Sobolev space endowed with that target distribution to obtain convergence
rates for the continuous dynamics, and we then discretize and find the appropriate
step size to retain those rates for the discretized algorithm.  This general framework
allows us to strengthen recent results in the MCMC literature~\citep{Eberle_ULA,Eberle_HMC,
Xiang_Nonconvex, Local_Nonconvex_W2} and examine a broader class of algorithms including
the celebrated Metropolis-Hastings method.

\section{Polynomial Convergence of MCMC Algorithms}
\renewcommand{\figurename}{Algorithm}

\begin{figure}[t!]
\begin{center}
{(Metropolis Adjusted) Langevin Algorithm}
\end{center}
\begin{algorithmic}
\State{Input: $\vx^{0}$, stepsizes $\{h^{k}\}$}
\For{$k = 0, 1, 2, \ldots, K-1$}
\State{
$\vx^{k+1} \leftarrow \vx^{k} - h^{k} \nabla U(\vx^{k}) + \xi$
}
\If{$\dfrac{\density\parenth{\vx^{k}|\vx^{k+1}} \target(\vx^{k})}{\density\parenth{\vx^{k+1}|\vx^{k}} \target\parenth{\vx^{k+1}}} < u$}
\State{$\vx^{k+1} \leftarrow \vx^{k}$}
\Comment{Metropolis Adjustment}
\EndIf
\EndFor
\State{Return $\vx^{K}$}
\end{algorithmic}
\captionof{figure}{
The (Metropolis adjusted) Langevin algorithm is a gradient-based MCMC algorithm.
In each step, one simulates $\xi\sim\mathcal{N}(0, 2h^{k} \mI)$ and
$u\sim\mathcal{U}[0,1]$ a uniform random variable between $0$ and $1$.
The conditional distribution $\density\parenth{\vx^{k}|\vx^{k+1}}$ is
the normal distribution centered at $\vx^{k} - h^{k} \nabla U(\vx^{k})$
and $p^*$ is the target distribution.  Without the Metropolis adjustment
step, the algorithm is called the unadjusted Langevin algorithm (ULA).
Otherwise, it is called the Metropolis Adjusted Langevin Algorithm (MALA).
}
\label{alg:mala}
\end{figure}

The \emph{Langevin algorithm} is a family of gradient-based MCMC sampling
algorithms~\citep{Langevin_origin,MALA,durmus2017}.  We present pseudocode for two variants of the algorithm
in Algorithm~\ref{alg:mala}, and, by way of comparison, we provide pseudocode for classical
gradient descent (GD) in Algorithm~\ref{alg:gd}.  The variant of the Langevin algorithm
which does not include the ``if'' statement is referred to as the \emph{unadjusted
Langevin algorithm} (ULA); as can be seen, it is essentially the same as
GD, differing only in its incorporation of a random term
$\xi\sim\mathcal{N}(0, 2h^{k} \mI)$ in the update.  The variant that
includes the ``if'' statement is referred to as the \emph{Metropolis Adjusted
Langevin Algorithm} (MALA); it is the standard Metropolis-Hastings algorithm
applied to the Langevin setting.
It is worth noting that ULA differs from stochastic optimization algorithms
in the scaling of the variance of the random term $\xi$: In stochastic gradient descent,
the variance of $\xi$ scales as squared stepsize, $\left(h^{k}\right)^2$.

We consider sampling from a smooth target distribution $\p^*$ that is strongly
log-concave outside of a region.  That is, for $\p^* \propto e^{- U}$, we assume
that $U$ is $m$-strongly convex outside of a region of radius $R$ and is $L$-Lipschitz
smooth.\footnote{$U$ being $L$-Lipschitz smooth means that
$\nabla U$ is $L$-Lipschitz continuous. Smoothness is crucial for the convergence
of gradient-based methods~\citep{roberts_counter}.} (See 
Supplement A for a formal statement of the assumptions). Let $\kappa=L/m$ denote
the \emph{condition number} of $U$; this is a parameter which
measures how much $U$ deviates from an isotropic quadratic function outside of
the region of radius $R$. We prove convergence of the Langevin sampling
algorithms for this target, establishing a convergence rate.  Given an error tolerance
$\mixingerror \in (0, 1)$ and an initial distribution $p^0$, define the
$\mixingerror$-mixing time in total variation distance as
\begin{align*}
   \mixingtime(\mixingerror; \initial) = \min \braces{k | \vecnorm{\density^k - \target}{\text{TV}} \leq \mixingerror}.
 \end{align*}

\begin{theorem}\label{thm:MCMC_main}
 Consider Algorithm~\ref{alg:mala} with initialization
 $\initial=\NORMAL\parenth{0,\frac{1}{\smoothness}\Ind_\dims}$ and error tolerance
 $\mixingerror \in (0, 1)$.  Then ULA with step size $h^k=\mathcal{O}\left(e^{-16LR^2} \kappa^{-1} L^{-1} \epsilon^2/d\right)$ satisfies
  \begin{align}
     \mixingtime_{\text{ULA}}(\mixingerror, \initial) \leq \mathcal{O}\parenth{ e^{32\smoothness R^2} \condition^2 \dfrac{\dims}{\mixingerror^2} \ln\left(\dfrac{\dims}{\mixingerror^2}\right) }. \label{thm:ULA_convergence_main}
  \end{align}
  For MALA with step size $h^k=\mathcal{O}\left(e^{-8LR^2} \kappa^{-1/2} L^{-1}\left(d\ln\kappa+\ln 1/\epsilon\right)^{-1/2} d^{-1/2}\right)$,
    \begin{align}
         \mixingtime_{\text{MALA}}(\mixingerror, \initial) \leq \mathcal{O}\parenth{ \frac{e^{40\smoothness R^2}}{\scparam}\condition^{3/2}\dims^{1/2} \parenth{\dims \ln\condition + \ln\parenth{\frac{1}{\mixingerror}}}^{3/2} }. \label{thm:MALA_convergence_main}
     \end{align}
\end{theorem}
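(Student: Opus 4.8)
\medskip
\noindent\textbf{Proof strategy.}
The plan is to follow the two-stage program announced in the introduction: first control the continuous-time Langevin diffusion $\mathrm{d}X_t = -\nabla U(X_t)\,\mathrm{d}t + \sqrt{2}\,\mathrm{d}B_t$ through a functional inequality for the target $p^* \propto e^{-U}$, and then pick the step size so that the discretization error is absorbed. The structural fact I would exploit is that, although $U$ is nonconvex on $B_R$, this region has \emph{bounded} radius, so by $L$-smoothness $U$ differs from its largest $m$-strongly convex minorant only on $B_R$, and there by at most $\mathcal{O}(LR^2)$. Hence $p^*$ is a bounded multiplicative perturbation, with oscillation $e^{\mathcal{O}(LR^2)}$, of an $m$-strongly log-concave measure; combining Bakry--Émery for the reference with a Holley--Stroock perturbation bound then yields a log-Sobolev inequality for $p^*$ with constant $C_{\mathrm{LS}} = \tfrac1m\, e^{\mathcal{O}(LR^2)}$, and a Cheeger isoperimetric inequality with constant $\psi = \sqrt{m}\, e^{-\mathcal{O}(LR^2)}$. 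The point to stress is that the prefactors are $\tfrac1m$ and $\sqrt m$ with \emph{no} dimension dependence; only the dimension-free ``energy barrier'' $LR^2$ sits in the exponent.

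For ULA I would then run the standard interpolation argument: differentiate $\mathrm{KL}(p_t\|p^*)$ along the piecewise-Langevin interpolant of the iterates and bound the error from freezing $\nabla U$ at the grid point using $L$-smoothness. This gives a one-step recursion $\mathrm{KL}(p_{k+1}\|p^*) \le e^{-h/C_{\mathrm{LS}}}\,\mathrm{KL}(p_k\|p^*) + \mathcal{O}(L^2 d h^2)$, hence $\mathrm{KL}(p_k\|p^*) \lesssim e^{-kh/C_{\mathrm{LS}}}\,\mathrm{KL}(p_0\|p^*) + \mathcal{O}(C_{\mathrm{LS}} L^2 d h)$. A direct computation bounds the warmth of $p^0 = \NORMAL(0,\tfrac1L I_d)$ by $\mathrm{KL}(p_0\|p^*) = \mathcal{O}(d\ln\kappa + LR^2)$ (compare $U$ with $\tfrac L2|\cdot|^2$, paying $e^{\mathcal{O}(LR^2)}$ for the region $B_R$). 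Forcing the bias term below $\epsilon^2$ fixes $h = \Theta(\epsilon^2/(C_{\mathrm{LS}} L^2 d)) = \Theta(e^{-\mathcal{O}(LR^2)}\kappa^{-1}L^{-1}\epsilon^2/d)$, the stated step size, and then $k = \Theta\!\bigl((C_{\mathrm{LS}}/h)\ln(\mathrm{KL}(p_0\|p^*)/\epsilon^2)\bigr) = \Theta(e^{\mathcal{O}(LR^2)}\kappa^2 d\,\epsilon^{-2}\ln(d/\epsilon^2))$ kills the transient term; Pinsker's inequality converts $\mathrm{KL}\le 2\epsilon^2$ into $\mathrm{TV}\le\epsilon$, giving \eqref{thm:ULA_convergence_main}.

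For MALA I would instead use the conductance route. First bound the warmness $\beta := \sup_x p^0(x)/p^*(x)$ by $\ln\beta = \mathcal{O}(d\ln\kappa + LR^2)$, by the same comparison. Next lower-bound the one-step conductance: nearby points have Gaussian-proposal overlap bounded below once $h = \mathcal{O}(1/(Ld))$ up to logarithmic factors, and the Metropolis filter accepts with probability bounded below by a constant provided the drift contribution to the log acceptance ratio is $\mathcal{O}(1)$, which over $B_R$ forces the additional factor $h \lesssim e^{-\mathcal{O}(LR^2)}\kappa^{-1/2}L^{-1}(d\ln\kappa + \ln(1/\epsilon))^{-1/2}d^{-1/2}$ --- exactly the stated step size. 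Combining this transition-overlap bound with the isoperimetric inequality of Step~1 gives a conductance lower bound $\Phi = \Omega(\sqrt h\,\psi) = \Omega(\sqrt h\,\sqrt m\, e^{-\mathcal{O}(LR^2)})$, and the classical warm-start bound $\tau \lesssim \Phi^{-2}\ln(\beta/\epsilon)$ for TV mixing then yields \eqref{thm:MALA_convergence_main} after substituting $h$, $\Phi$ and $\ln\beta$ (absorbing the additive $LR^2$ in $\ln\beta$ into the exponential prefactor and using $d^{1/2}(d\ln\kappa+\ln(1/\epsilon))^{1/2}\cdot(d\ln\kappa+\ln(1/\epsilon)) = d^{1/2}(d\ln\kappa+\ln(1/\epsilon))^{3/2}$).

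The main obstacle is the first step --- proving that the functional/isoperimetric constant of $p^*$ degrades only like $e^{\mathcal{O}(LR^2)}$, i.e.\ polynomially (indeed, with a dimension-free prefactor) in $d$. A priori a nonconvex pocket could create a near-metastable state with spectral gap exponentially small in $d$, and a naive Lyapunov-function estimate would in fact suggest exactly that; the resolution is that a pocket of \emph{bounded} radius under an $L$-smooth potential has an energy barrier of size only $\mathcal{O}(LR^2)$, independent of $d$, so the penalty is $e^{\mathcal{O}(LR^2)}$ rather than $e^{\Omega(d)}$. Turning this intuition into a perturbation-of-a-strongly-log-concave-reference estimate that is uniform in $d$ is the technical heart of the argument. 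A secondary, MALA-specific difficulty is bounding the Metropolis acceptance probability inside $B_R$: the usual analyses invoke global smoothness and strong convexity to control the drift term in the acceptance ratio, so one must localize that bound and absorb the nonconvex region into the $e^{\mathcal{O}(LR^2)}$ factor on the step size.
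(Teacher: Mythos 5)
Your overall architecture is the same as the paper's: (i) a log-Sobolev inequality for $p^*$ with constant $\rho_U \geq \tfrac{m}{2}e^{-16LR^2}$ obtained by combining Bakry--\'Emery for a globally strongly convex surrogate with the Holley--Stroock perturbation bound, the perturbation having oscillation $\mathcal{O}(LR^2)$ because the nonconvex pocket has bounded radius; (ii) for ULA, the interpolation/Gr\"onwall argument on $\kldiv{p_t}{p^*}$ with a discretization bias of order $L^2 d h^2$ per step, followed by Pinsker; (iii) for MALA, a warmness bound $\ln\beta = \mathcal{O}(d\ln\kappa + LR^2)$, a proposal-overlap plus acceptance-probability bound that forces the stated step size, an isoperimetric inequality derived from the functional inequality, and the Lov\'asz--Simonovits conductance bound. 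The step sizes and the final rates you derive match the statement, and your identification of the ``technical heart'' (dimension-free degradation of the functional constant) is exactly where the paper's Lemma on the construction of $\hat{U}$ does its work. Two remarks: the surrogate cannot simply be ``the largest $m$-strongly convex minorant,'' since Bakry--\'Emery needs a $C^1$ potential with a Hessian everywhere; the paper takes the convex extension of $U - \tfrac{m}{4}\|\cdot\|^2$ and mollifies it, which is where the explicit constant $16LR^2$ comes from. Also, the ULA recursion requires a uniform-in-$k$ bound on $\Ep{p_{kh}}{\|\vx\|^2}$ (proved by induction via Talagrand's inequality), which you elide.

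The one step that would fail as written is your isoperimetric constant $\psi = \sqrt{m}\,e^{-\mathcal{O}(LR^2)}$. That is a Buser-type bound ($\psi \gtrsim \sqrt{\rho}$), which is \emph{not} available here: Buser's inequality requires a curvature lower bound, and $U$ has regions of negative curvature inside $\ball(0,R)$ --- the paper flags precisely this in a footnote. What the log-Sobolev (hence Poincar\'e) inequality gives unconditionally, via the co-area/indicator-approximation argument, is only the \emph{linear} implication $\psi \geq \rho_U = \tfrac{m}{2}e^{-16LR^2}$. This is not a cosmetic difference: with your $\psi$, the substitution $\tau \lesssim \Phi^{-2}\ln(\beta/\epsilon) \sim (\psi^2 h)^{-1}\ln(\beta/\epsilon)$ yields $e^{\mathcal{O}(LR^2)}\kappa^{3/2}\dims^{1/2}(\dims\ln\kappa + \ln(1/\epsilon))^{3/2}$ \emph{without} the $1/\scparam$ prefactor, i.e.\ a strictly stronger bound than \eqref{thm:MALA_convergence_main} resting on an unjustified inequality; it is the weaker $\psi \geq \rho_U$ that produces the $e^{40LR^2}/\scparam$ prefactor ($e^{32LR^2}/\scparam^2$ from $\rho_U^{-2}$ times $e^{8LR^2}$ from the truncation radius entering the step size). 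So either replace the Cheeger claim by the linear Poincar\'e-to-isoperimetry implication and redo the substitution, or supply a genuinely new argument for a Buser-type bound under local nonconvexity.
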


\begin{figure}[t!]
\begin{center}
{Gradient Descent}
\end{center}
\begin{algorithmic}
\State{Input: $\vx^{0}$, stepsizes $\{h^{k}\}$}
\For{$k = 0, 1, 2, \ldots, K-1$}
\State{$\vx^{k+1} \leftarrow \vx^{k} - h^{k} \nabla U(\vx^{k})$}
\EndFor
\State{Return $\vx^{K}$}
\end{algorithmic}
\captionof{figure}{Gradient descent (GD) is a classical gradient-based optimization
algorithm which updates $\vx$ along the negative gradient direction.}
\label{alg:gd}
\end{figure}

\renewcommand{\figurename}{Figure}
\setcounter{figure}{0}

Comparing \eqref{thm:ULA_convergence_main} with \eqref{thm:MALA_convergence_main}, we see
that the Metropolis adjustment improves the mixing time of ULA to a logarithmic
dependence in $\mixingerror$, while sacrificing a factor of dimension $d$.
(Note, however, that these are upper bounds, and they depend on our specific
setting and our assumptions.
It should not be inferred from our results that ULA is generically faster than MALA in terms of dimension dependence.)
Comparing \eqref{thm:ULA_convergence_main} and \eqref{thm:MALA_convergence_main} with
previous results in the literature that provide upper bounds on the mixing time of
ULA and MALA for strongly convex potentials $U$~\citep{Dalalyan_JRSSB,
Moulines_ULA,Dalalyan_user_friendly,Xiang_underdamped,Xiang_overdamped,
dwivedi2018log,Mangoubi1,Mangoubi2}, we find that the local nonconvexity
results in an extra factor of $e^{\mathcal{O}\left( \smoothness R^2 \right)}$.
Thus, when the Lipschitz smoothness $L$ and radius of the nonconvex region
$R$ satisfy $LR^2$ is $\mathcal{O}\left(\log d\right)$, the computational
complexity is polynomial in dimension $\dims$.

Our proof of Theorem~\ref{thm:MCMC_main} involves a two-step framework that
applies more widely than our specific setting.  We first use properties
of $p^*\propto e^{-U}$ to establish linear convergence of a continuous
stochastic process that underlies Algorithm~\ref{alg:mala}.
We then discretize, finding an appropriate step size for the algorithm
to converge to the desired accuracy.  These two parts can be tackled
independently.  In this section, we provide an overview of the first part
of the argument in the case of the MALA algorithm.  The details, as well
as a presentation of the second part of the argument, are provided in
Supplement B.

Letting $t = \sum_{i=1}^k h^{i}$, assumed finite, a standard limiting process yields
the following stochastic differential equation (SDE) as a continuous-time limit of
Algorithm~\ref{alg:mala}: $\rd \vX_t = - \nabla U(\vX_t)\rd t + \sqrt{2} \rd B_t$,
where $B_t$ is a Brownian motion.  To assess the rate of convergence of this
SDE, we make use of the Kullback-Leibler (KL) divergence, which upper bounds the
total variation distance and allows us to obtain strong convergence guarantees
that include dimension dependence.  Denoting the probability distribution of
$\vX_t$  as $\tilde{\p}_t$, we obtain (see the derivation in Supplement B.2) the following
time derivative of the divergence of $\tilde{\p}_t$ to the target distribution $p^*$:
\begin{align}
\dfrac{\rd}{\rd t} \kldiv{\tilde{\p}_t}{p^*}
= - \Ep{\tilde{\p}_t}{\lrn{ \nabla \ln\lrp{\frac{\tilde{\p}_t(\vx)}{\p^*(\vx)}} }^2}. \label{eq:kl_derivative}
\end{align}

The property of $\p^* \propto e^{- U}$ that we require to turn this time derivative
into a convergence rate is that it satisfies a \emph{log-Sobolev inequality}.
Considering the Sobolev space defined by the weighted $L^2$ norm:
$\int g(\vx)^2 p^*(\vx) \rd \vx$, we say that $\p^*$ satisfies a log-Sobolev
inequality if there exists a constant $\rho>0$ such that for any smooth
function $g$ on $\R^d$, satisfying $\int_{\R^d} g(\vx) \p^*(\vx) \rd \vx = 1$,
we have:
\begin{align*}
\int g(\vx) \ln g(\vx) \cdot \p^*(\vx) \rd \vx
\leq \dfrac{1}{2\rho} \int \dfrac{ \lrn{ \nabla g(\vx) }^2}{g(\vx)} \p^*(\vx) \rd \vx.
\end{align*}
The largest $\rho$ for which this inequality holds is said to be the
\emph{log-Sobolev constant} for the objective $U$.  We denote it as $\rho_U$.
Taking $g = \tilde{\p}_t / \p^*$, we obtain:
\begin{align}
\kldiv{\tilde{\p}_t}{p^*}
&= \Ep{\tilde{\p}_t}{\ln\lrp{\frac{\tilde{\p}_t(\vx)}{\p^*(\vx)}}} \nonumber\\
&\leq \dfrac{1}{2\rho_U} \Ep{\tilde{\p}_t}{\lrn{ \nabla \ln\lrp{\frac{\tilde{\p}_t(\vx)}{\p^*(\vx)}} }^2 }. \label{eq:log-Sobolev_main}
\end{align}
Note the resemblance of this bound to the Polyak-\L{}ojasiewicz condition~\citep{Polyak}
used in optimization theory for studying the convergence of smooth and strongly
convex objective functions---in both cases the difference from the current iterate
to the optimum is upper bounded by the norm of the gradient squared.
Combining \eqref{eq:kl_derivative} with \eqref{eq:log-Sobolev_main},
we derive the promised linear convergence rate for the continuous process:
\begin{align*}
\dfrac{\rd}{\rd t} \kldiv{\tilde{\p}_t}{p^*}
\leq - 2\rho_U \kldiv{\tilde{\p}_t}{p^*}.
\end{align*}
In
Supplement B.2, we present similar results for the ULA algorithm, again using
the KL divergence.

The next step is to bound $\rho_U$ in terms of the basic smoothness and local
nonconvexity assumptions in our problem.  We first require an approximation result:
\begin{lemma}
For $U$ $m$-strongly convex outside of a region of radius $R$ and $L$-Lipschitz smooth,
there exists $\hat{U}\in C^1(\R^d)$ such that $\hat{U}$ is $m/2$ strongly convex
on $\R^d$, and has a Hessian that exists everywhere on $\R^d$.  Moreover, we have
$\sup\left(\hat{U}(\vx)-U(\vx)\right) - \inf\left(\hat{U}(\vx)-U(\vx)\right) \leq 16 LR^2$.
\label{lemma:hat_U}
\end{lemma}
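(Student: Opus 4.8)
The plan is to construct $\hat{U}$ by smoothly interpolating between the original potential $U$ and a quadratic lower envelope that dominates the strongly convex behavior. The key idea: since $U$ is $m$-strongly convex outside the ball $B_R$ of radius $R$, the function $U(\vx) - \frac{m}{4}\|\vx\|^2$ has controlled behavior, and we want to ``fill in'' the nonconvex well so that the result is genuinely $m/2$-strongly convex everywhere while remaining $C^1$ with an everywhere-defined Hessian.

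First I would fix a reference point, say the origin (or a point where $\nabla U$ is small), and consider the quadratic $Q(\vx) = U(\vx_0) + \nabla U(\vx_0)\tp(\vx - \vx_0) + \frac{m}{2}\|\vx - \vx_0\|^2$ or a suitable variant. The construction of $\hat{U}$ would then be of the form $\hat{U}(\vx) = U(\vx) + m \cdot \psi(\vx)$, where $\psi$ is a smooth bump-like correction supported near $B_R$: outside a slightly enlarged ball $B_{cR}$, $\psi \equiv 0$ so $\hat{U} = U$ retains the $m$-strong convexity (hence certainly $m/2$); inside, $\psi$ is chosen so that its Hessian contributes enough positive curvature $\gtrsim \frac{m}{2} \mI$ to overcome whatever nonconvexity $U$ has in the well. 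A cleaner alternative is to write $\hat{U} = \max_{\text{smooth}}\{U, Q\}$ using a soft-max (log-sum-exp type mollification at scale $\sim m R^2$) so that $\hat U$ agrees with $U$ where $U$ is large (outside $B_R$, where it dominates the quadratic after adjusting constants) and equals the strongly convex $Q$ where $U$ dips down; the soft-max of two $C^\infty$ functions is $C^\infty$, and the $m/2$-strong convexity follows because a (regularized) max of $m$-strongly convex functions is $m$-strongly convex and we only pay a factor of $2$ from the smoothing at the transition layer.

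The bound on $\sup(\hat U - U) - \inf(\hat U - U)$ is then a matter of estimating how far the quadratic lies below $U$ (or how much the correction $\psi$ must add) inside $B_R$. The worst case: $U$ could be $L$-smooth with a gradient as large as $\mathcal{O}(LR)$ at the boundary $\partial B_R$, so over a region of radius $R$ the function can vary by $\mathcal{O}(LR^2)$; the quadratic $Q$ with curvature $m \le L$ can lie below $U$ by at most $\mathcal{O}(LR^2)$ as well. Tracking constants through the interpolation/soft-max carefully yields the claimed $16 LR^2$. Here one uses that $\hat U = U$ outside the enlarged ball (so the difference is zero there, pinning the $\inf$ at $\le 0$), and inside the difference is nonnegative and bounded by the smoothing parameter plus the quadratic gap, each $\mathcal{O}(LR^2)$.

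The main obstacle I anticipate is the simultaneous requirement that $\hat U$ be $C^1$ with an \emph{everywhere-defined} Hessian (i.e., $C^{1,1}$ with a genuine second derivative, or $C^2$) while being \emph{globally} $m/2$-strongly convex \emph{and} matching $U$ exactly outside a bounded region. A naive max or a piecewise construction generically loses $C^1$ regularity at the gluing interface, and a naive mollification of $U$ spreads the correction over all of $\R^d$, destroying the ``$\hat U = U$ outside a ball'' property and potentially the $16LR^2$ constant. Reconciling these is the delicate part: I would handle it by doing the soft-max (or the additive bump) with a compactly supported smoothing kernel or a cutoff applied to the smoothing parameter, so that the regularization is active only in an annulus around $\partial B_R$ where $U$ and the quadratic are comparable, and verifying that in that annulus the combined Hessian stays $\succeq \frac{m}{2}\mI$ requires the most care — it is where curvature from $U$, curvature from $Q$, and curvature lost to the cutoff all interact.
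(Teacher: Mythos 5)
Your proposal correctly isolates the three competing requirements (global $\tfrac{m}{2}$-strong convexity, an everywhere-defined Hessian, and a bound on the oscillation of $\hat U-U$), but both constructions you sketch fail \emph{quantitatively}: they force $\sup(\hat U-U)-\inf(\hat U-U)=\Omega(\kappa LR^2)$ rather than $16LR^2$. For the additive bump $\hat U = U+m\psi$ with $m\psi$ supported in $\ball(0,R')$: since $\nabla^2 U$ can be as negative as $-L\mI$ inside $\ball(0,R)$, you need $\nabla^2(m\psi)\succeq (L+\tfrac{m}{2})\mI$ there, while outside $\ball(0,R)$ you can afford at most $\nabla^2(m\psi)\succeq -\tfrac{m}{2}\mI$. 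Restrict $g=m\psi$ to a line through the origin: $g''\geq L$ on $[-R,R]$ gives $g'(R)\geq LR$ (WLOG), and then $g''\geq -\tfrac{m}{2}$ together with $g(R')=g'(R')=0$ forces $g(R)\leq -\int_0^{2LR/m}\bigl(LR-\tfrac{m}{2}s\bigr)\,\rd s=-\kappa LR^2$; since $g\equiv 0$ outside the support, the oscillation is at least $\kappa LR^2$. The soft-max route has the analogous defect: to suppress the nonconvexity inside $\ball(0,R)$ the quadratic $Q$ (necessarily of curvature at least $\tfrac{m}{2}$ where it is selected) must dominate $U$ there, but for the radial example $U=f(\lrn{\vx})$ with $f'(R)=-LR$ and $f''=m$ for $r>R$, the potential $U$ dips by $\tfrac{\kappa}{2}LR^2$ below $f(R)$ at radius $(1+\kappa)R$ while any admissible $Q$ has risen by $\tfrac{\kappa}{4}LR^2$ above $f(0)$, so $\sup(\hat U-U)=\Omega(\kappa LR^2)$ while $\hat U-U\to 0$ at infinity. (Your appeal to ``a regularized max of strongly convex functions is strongly convex'' is also unavailable, since $U$ is not convex precisely where the max matters.)

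The missing idea, which is how the paper's proof achieves a $\kappa$-free constant, is to decouple the curvature requirement from the size of the correction by subtracting the quadratic \emph{first}. Set $\tilde U = U-\tfrac{m}{4}\lrn{\vx}^2$, which is $\tfrac{m}{2}$-strongly convex on $\Omega=\R^d\setminus\ball(0,R)$; replace it inside the hole by its convex extension $V(\vx)=\inf\bigl\{\sum_i\lambda_i\tilde U(\vx_i)\colon \vx_i\in\Omega,\ \sum_i\lambda_i\vx_i=\vx\bigr\}$, which agrees with $\tilde U$ on $\Omega$ exactly and is sandwiched between $\inf_{\partial\ball(0,R)}\tilde U$ and $\sup_{\partial\ball(0,R)}\tilde U$ inside (Lemma~\ref{lemma:V}); mollify at scale $\delta=\Theta(R/\kappa)$ and glue back to $\tilde U$ with an explicit partition of unity on the annulus $\tfrac43 R<\lrn{\vx}<\tfrac32 R$, where both pieces are already strongly convex and differ by only $O(LR\delta)$; finally add $\tfrac{m}{4}\lrn{\vx}^2$ back. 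The global strong convexity of $\hat U$ comes from the re-added quadratic, not from the correction, so the correction need only be \emph{convex} inside the well, and its size is governed by the oscillation of $\tilde U$ on $\ball(0,\tfrac32 R+\delta)$, which is at most $8LR^2$ by $L$-smoothness and the normalization $\nabla U(0)=0$. Without this decomposition, any construction that asks the correction itself to supply curvature $\approx L$ on $\ball(0,R)$ while vanishing (or stabilizing) outside is doomed to pay a factor of $\kappa$.
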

The proof of this lemma is presented in Supplement B.1.  The existence of the
smooth approximation established in this lemma can now be used to bound the
log-Sobolev constant using standard results.
\begin{proposition}
  \label{thm:log_sobolev_constant}
  For $\p^* \propto e^{- U}$, where $U$ is $m$-strongly convex outside of a
  region of radius $R$ and $L$-Lipschitz smooth,
  \begin{align}
    \rho_U \geq \dfrac{m}{2} e^{-16LR^2}.
  \end{align}
\end{proposition}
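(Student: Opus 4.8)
The plan is to combine Lemma~\ref{lemma:hat_U} with two classical tools from the theory of logarithmic Sobolev inequalities: the Bakry--\'Emery curvature criterion and the Holley--Stroock bounded-perturbation principle.

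First I would invoke Lemma~\ref{lemma:hat_U} to obtain $\hat{U}\in C^1(\R^d)$ that is $m/2$-strongly convex with a Hessian defined everywhere, so that $\nabla^2\hat{U}\succeq \tfrac{m}{2}\mI$ on all of $\R^d$. The Bakry--\'Emery criterion then guarantees that $\hat{\p}\propto e^{-\hat{U}}$ satisfies a log-Sobolev inequality with constant $\rho_{\hat{U}}\ge m/2$, in exactly the normalization used above (the entropy of $g$ with $\int g\,\hat{\p}=1$ is bounded by $\tfrac{1}{2\rho}\int \tfrac{\|\nabla g\|^2}{g}\,\hat{\p}$). This is the step for which the replacement of $U$ by a genuinely twice-differentiable, uniformly convex surrogate is essential: the original $U$ need not be twice differentiable and is not globally convex, so Bakry--\'Emery cannot be applied to it directly.

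Next I would write $U = \hat{U} + \psi$ with $\psi := U-\hat{U}$, and observe from the last clause of Lemma~\ref{lemma:hat_U} that the oscillation $\mathrm{osc}(\psi)=\sup\psi-\inf\psi = \sup(\hat{U} - U)-\inf(\hat{U} - U)\le 16LR^2$ is finite. The Holley--Stroock perturbation principle states that multiplying a density satisfying a log-Sobolev inequality by a bounded factor $e^{-\psi}$ degrades the log-Sobolev constant by at most the factor $e^{-\mathrm{osc}(\psi)}$; applying it here yields
\begin{align*}
\rho_U \;\ge\; \rho_{\hat{U}}\, e^{-\mathrm{osc}(\psi)} \;\ge\; \frac{m}{2}\, e^{-16LR^2},
\end{align*}
which is the claimed bound.

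The routine part of the work is checking that the normalization conventions in the versions of Bakry--\'Emery and Holley--Stroock one invokes line up with the $g$-form of the log-Sobolev inequality stated above (different references phrase the inequality in terms of $\mathrm{Ent}(f^2)$ versus $\mathrm{Ent}(g)$ and carry different constants); once the conventions are fixed, the two statements compose immediately. The only conceptual subtlety -- and the reason Lemma~\ref{lemma:hat_U} is phrased the way it is -- is that both tools require a smooth, globally strongly convex base measure, so the entire difficulty of handling the nonconvex region of radius $R$ has already been absorbed into the construction of $\hat{U}$; the exponential factor $e^{-16LR^2}$ is precisely the price paid in the Holley--Stroock step for the oscillation of $U-\hat{U}$.
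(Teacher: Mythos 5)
Your proposal is correct and follows essentially the same route as the paper's proof: invoke Lemma~\ref{lemma:hat_U} to obtain the smooth, globally $m/2$-strongly convex surrogate $\hat{U}$, apply the Bakry--\'Emery criterion to get $\rho_{\hat{U}}\geq m/2$, and then transfer the bound to $U$ via Holley--Stroock using the oscillation bound $16LR^2$. Your added remarks on normalization conventions and on why the surrogate is needed are accurate but do not change the argument.
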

\begin{proof}
For $m/2$-strongly convex $\hat{U}\in C^1(\R^d)$ whose Hessian $\nabla^2 \hat{U}(\vx)$
exists everywhere on $\R^d$, the distribution $e^{- \hat{U}(\vx)}$ satisfies the
Bakry-Emery criterion \citep{bakryEmery} for a strongly log-concave density, which
yields:
\begin{align}
\rho_{\hat{U}} \geq \dfrac{m}{2}.
\end{align}
We use the Holley-Stroock theorem \citep{HolleyStroock} to obtain:
\begin{align}
\rho_U \geq
\dfrac{m}{2} e^{- \left|\sup\left(\hat{U}(\vx)-U(\vx)\right) - \inf\left(\hat{U}(\vx)-U(\vx)\right)\right|}
\geq \dfrac{m}{2} e^{-16LR^2}.
\end{align}

\end{proof}

We see from this proof outline that our approach enables one to adapt existing
literature on the convergence of diffusion processes~\citep{Ledoux_log_Sobolev_diffusion,
Villani_optimal_transport,Sampling_as_optimization} to work out suitable
log-Sobolev bounds and thereby obtain sharp convergence rates in terms
of distance measures such as the KL divergence and total variation.
This contributes to the existing literature on convergence of MCMC~\citep{Kannan,Rosenthal_JASA,rosenthal2002,roberts2001,roberts2016} by providing
non-asymptotic guarantees on computational complexity.
The detailed proof also reveals that the log-Sobolev constant $\rho_U$ is
largely determined by the global qualities of $U$ where most of the probability
mass is concentrated; local properties of $U$ have limited influence on $\rho_U$.
Since this is a property of the Sobolev space defined by the
$p^*$-weighted $L^2$ norm, the favorable convergence rates of the Langevin algorithms
can be expected to generalize to other sampling algorithms~\citep[see, e.g.,][]{Acceleration_MCMC}.

\section{Exponential Dependence on Dimension for Optimization}

It is well known that finding global minima of a general nonconvex optimization
problem is NP-hard \citep{PrateekBook}.  Here we demonstrate that it is also hard
to find an $\epsilon$ approximation to the optimum of a Lipschitz-smooth,
locally nonconvex objective function $U$, for any algorithm in a general class of optimization algorithms.

Specifically, we consider a general iterative algorithm family $\mathcal{A}$
which, at every step $k$, is allowed to query not only the function value of
$U$ but also its derivatives up to any fixed order at a chosen point $\vx^k$.
Thus the algorithm has access to the vector ($\{ U(\vx^k), \nabla U(\vx^k),
\cdots, \nabla^n U(\vx^k) \}, {\text{for any fixed }} n\in\mathcal{N}$).
Moreover, the algorithm can use the entire query history to determine the
next point $\vx^{k+1}$, and it can do so randomly or deterministically.
In the following theorem, we prove that the number of iterations for any
algorithm in $\mathcal{A}$ to approximate the minimum of $U$ is necessarily
exponential in the dimension $d$.

\begin{theorem}[Lower bound for optimization] \label{thm:lower_opt_main}
For any $R>0$, $L \geq 2m > 0$, and
$\epsilon \le \mathcal{O}(LR^2)$,
there exists an objective function, $U:\R^d \rightarrow \R$, which is
$m$-strongly convex outside of a region of radius $R$ and $L$-Lipschitz smooth,
such that any algorithm in $\mathcal{A}$ requires at least
$K = \Omega( ( LR^2/\epsilon )^{d/2})$ iterations to guarantee that
$\min_{k \le K}|U(\vx^K) - U(\vx^*)| < \epsilon$ with constant probability.
\label{theorem:lower_bound}
\end{theorem}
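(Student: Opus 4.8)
The plan is to prove this as an oracle-complexity lower bound of ``needle in a haystack'' type. I would construct a finite family of admissible objectives $\{U_i\}_{i=1}^{N}$, all of which agree with a common base potential $U_0$ — together with \emph{all} of their derivatives up to order $n$ — at every point outside a tiny ball $B(\vx_i,r)$, and where the global minimum of $U_i$ is a shallow ``divot'' hidden inside that ball. Since the $N$ divot locations $\vx_i$ can be packed $\Omega\big((R/r)^d\big)$ ways into the nonconvex region and no algorithm in $\mathcal A$ learns which function it faces until it queries the relevant ball, any algorithm must spend $\Omega(N)$ iterations in the worst case; the work is in choosing $r$ so that $N=\Omega\big((LR^2/\epsilon)^{d/2}\big)$, and in building $U_0$.

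Concretely, I would first build a base potential $U_0\in C^1(\R^d)$ (smoothable to $C^\infty$) with $\|\nabla^2 U_0\|\le L$ everywhere, $m$-strongly convex on $\{\|\vx\|>R\}$, and identically equal to its minimum value $0$ on an inner ball $B(0,cR)$ for a universal constant $c\in(0,\tfrac12]$. Reconciling ``flat inside'' with ``$m$-strongly convex just outside $\|\vx\|=R$'' is the delicate point, and I expect it (together with checking that the scale constraints below are simultaneously feasible under the stated hypotheses) to be the main obstacle: for a radial $U_0=\phi(\|\vx\|)$ the tangential Hessian eigenvalue is $\phi'(\|\vx\|)/\|\vx\|$, so $m$-strong convexity forces $\phi'(R)\ge mR$, and this gradient must be accumulated across the shell $cR\le\|\vx\|\le R$ at average radial curvature $\asymp m/(1-c)$; keeping $\|\nabla^2 U_0\|\le L$ then requires exactly a condition of the form $L\ge 2m$ (taking the shell thick enough also leaves room for a $C^\infty$ mollification). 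Given $U_0$, fix a smooth bump $\psi:\R^d\to[0,1]$ supported in the unit ball with $\psi(0)=1$ and $\|\nabla^2\psi\|_\infty\le c_0$, pick centers $\vx_1,\dots,\vx_N$ forming a maximal $2r$-separated subset of $B(0,cR-r)$ — so the balls $B(\vx_i,r)$ are disjoint and $N\ge\big((cR-r)/(2r)\big)^d$ — and set $U_i(\vx)=U_0(\vx)-\delta\,\psi\big((\vx-\vx_i)/r\big)$. On $B(\vx_i,r)\subset B(0,cR)$ the base is constant, so $\|\nabla^2 U_i\|\le c_0\delta/r^2$ there, while $U_i\equiv U_0$ elsewhere; hence $U_i$ is $L$-smooth once $\delta\le r^2 L/c_0$, and it is $m$-strongly convex outside $B(0,R)$. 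Its global minimum is $-\delta$, attained in $B(\vx_i,r)$, whereas off that ball $U_i(\vx)=U_0(\vx)\ge 0$, so every point off the divot is at least $\delta$-suboptimal. Choosing $\delta:=2\epsilon$ and $r:=\sqrt{2c_0\epsilon/L}$, the hypothesis $\epsilon\le\mathcal O(LR^2)$ forces $r\le cR/2$, so the divots fit and $N\ge\big(cR/(4r)\big)^d=\Omega\big((LR^2/\epsilon)^{d/2}\big)$ (the absolute constant being absorbed into the base of the exponent, as is standard for such bounds).

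Finally I would run the indistinguishability argument. Fix any deterministic algorithm and draw the index $i$ uniformly from $[N]$. Because $U_i$ agrees with $U_0$ to all orders at every point outside $B(\vx_i,r)$, the oracle (value and derivatives up to order $n$) returns identical answers for $U_i$ and for $U_0$ at such points; hence the algorithm's query sequence on $U_i$ coincides with the instance-independent sequence $\vx^0_\star,\vx^1_\star,\dots$ it generates on $U_0$, up until the first time it queries inside $B(\vx_i,r)$. The event that this happens within $K$ steps is contained in the event that one of $\vx^0_\star,\dots,\vx^{K-1}_\star$ lies in $B(\vx_i,r)$; since the $N$ balls are disjoint, these $K$ fixed points meet at most $K$ of them, so this event has probability at most $K/N$ under the uniform draw of $i$. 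Thus if $K<N/2$, then with probability at least $\tfrac12$ the algorithm never enters the divot, in which case every iterate satisfies $U_i(\vx^k)-U_i(\vx^*)\ge\delta=2\epsilon>\epsilon$ and the algorithm fails. Averaging over the algorithm's internal coins (Yao's principle) extends this to randomized algorithms and yields, for that algorithm, a fixed hard instance $U_{i^\star}$ in the family. Equivalently, no algorithm in $\mathcal A$ can succeed on every member of the family with constant probability in fewer than $N/2=\Omega\big((LR^2/\epsilon)^{d/2}\big)$ iterations, which is the claim.
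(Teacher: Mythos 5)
Your proposal is correct and follows essentially the same route as the paper's proof: pack $\Omega\bigl((R/r)^d\bigr)$ disjoint balls of radius $r=\Theta(\sqrt{\epsilon/L})$ into the nonconvex region, hide an $\Theta(\epsilon)$-deep divot in a uniformly random ball on top of a base potential that is flat there (the paper uses an explicit cosine bump and the base $0$ on $\ball(0,R/2)$ glued to $m(\lrn{\vx}-R/2)^2$ outside, which is where the $L\geq 2m$ condition enters exactly as you anticipated), and argue that queries outside the chosen ball return oracle answers identical to the base function, so at most $K$ of the $N$ balls can be ruled out in $K$ steps. Your formulation of the indistinguishability step (comparing to the fixed query sequence on $U_0$) is a slightly cleaner packaging of the paper's sequential conditional-probability count, but the argument is the same.
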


\begin{figure}
\centering
\includegraphics[scale=0.5]{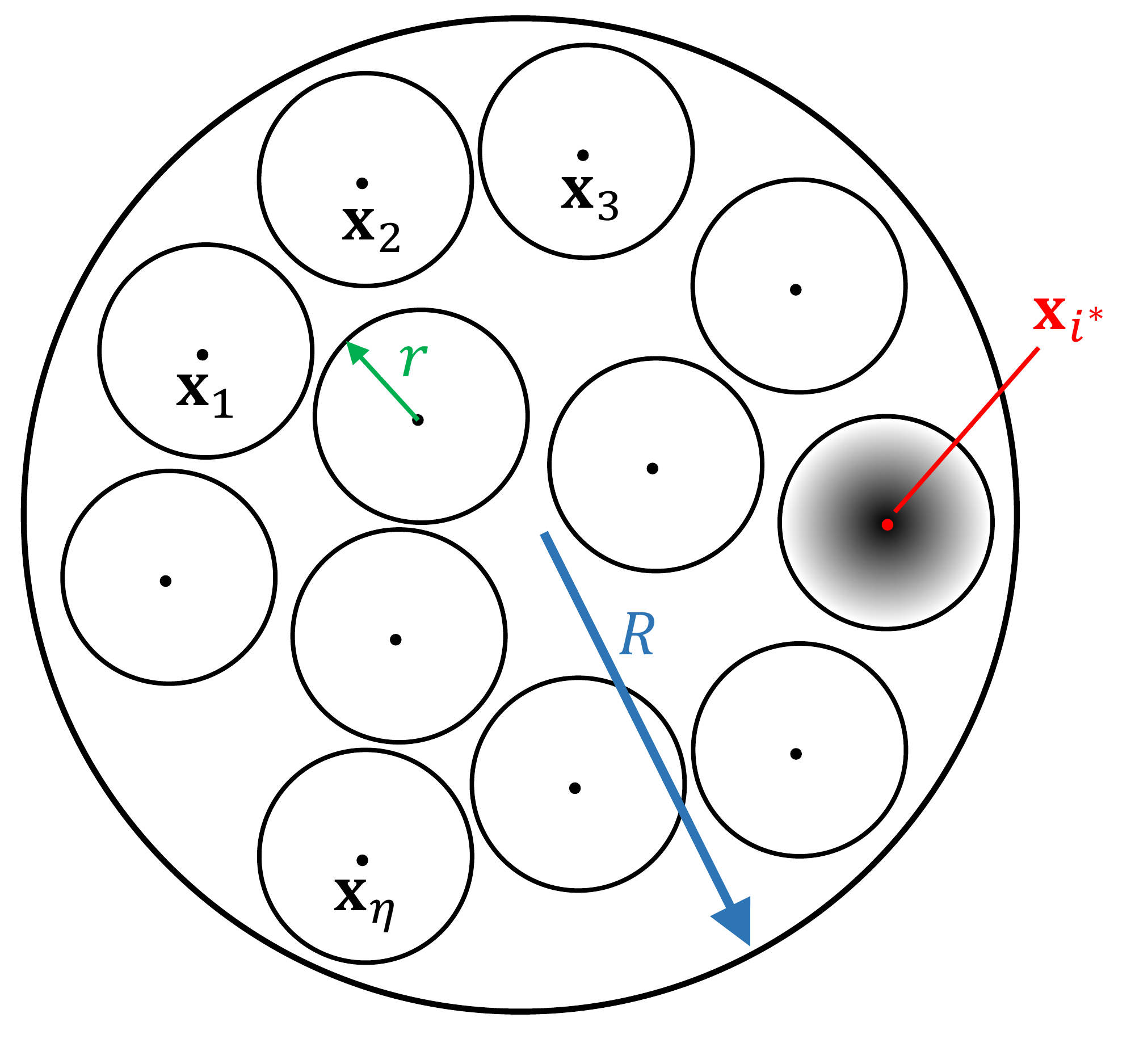}
\caption{Depiction of an instance of $U(\vx)$, inside the region of
radius $R$. that attains the lower bound.}
\label{fig:cartoon_1}
\end{figure}
We remark that Theorem~\ref{thm:lower_opt_main} is an information-theoretic
result based on the class of iterative algorithms $\A$ and the forms of
the queries to this class.  It is thus an unconditional statement that
does not depend on conjectures such as $P\neq NP$ in complexity theory.
We also note that if the goal is only to find stationary
points instead of the optimum, then the problem becomes easier, requiring
only $(1/\epsilon)^2$ gradient queries to converge~\citep{Duchi_stationary}.

A depiction of an example that achieves this computational lower bound
is provided in Fig.~\ref{fig:cartoon_1}.  The idea is that we can pack
exponentially many balls of radius less than $R/3$ inside a region of radius $R$.
We can arbitrarily assign the minimum $\vx^*$ to one of the balls, assigning
a larger constant value to the other balls.  We show that the number of
queries needed to find the specific ball containing the minimum is exponential
in $d$.  Moreover, the difference from $U(\vx^*)$ to any other point outside
of the ball is $\mathcal{O}(LR^2)$, which can be significant.

This example suggests that the lower-bound scenario will be realized in
cases in which regions of attraction are small around a global minimum and
behavior within each region of attraction is relatively autonomous.
This phenomenon is not uncommon in multi-stable physical systems.
Indeed, in non-equilibrium statistical physics, there are examples
where the global behavior of a system can be treated approximately as
a set of local behaviors within stable regimes plus Markov transitions
among stable regimes~\citep{Qian_chaos}.  In such cases, when the regions of
attraction are small, the computational complexity to find the global
minimum can be combinatorial.  In Sec.~\ref{sec:Example_GMM}, we explicitly
demonstrate that this combinatorial complexity holds for a Gaussian mixture model.

\paragraph{Why Can't One Optimize in Polynomial Time Using the Langevin Algorithm?}
Consider the rescaled density function $q^{*}_{\beta}\propto e^{- \beta U}$.
A line of research beginning with simulated annealing~\citep{kirkpatrick1983optimization}
uses a sampling algorithm to sample from $q^{*}_{\beta}$, doing so for increasing
values of $\beta$, and uses the resulting samples to approximate
$\vx^*=\arg\min_{\vx\in\mathbb{R}^d} U(\vx)$.  In particular, simply
returning one of the samples obtained for sufficiently large $\beta$
yields an output that is close to the optimum with high probability.
This suggests the following
question: Can we use the Langevin algorithm
to generate samples from $q^{*}_{\beta}$, and thereby obtain an approximation
to $\vx^*$ in a number of steps polynomial in $d$?

In the following Corollary~\ref{lemma:relation}, we demonstrate that this is
\emph{not} possible: We need $\beta=\widetilde{\Omega}\left(d/\epsilon\right)$
so that a sample $\vx$ from $q^{*}_{\beta}$ will satisfy $\lrn{\vx - \vx^*}
\leq \epsilon$ with constant probability.
(Here $\widetilde{\Omega}$ means we have omitted logarithmic factors.)
This requires the Lipschitz smoothness
of $U$ to scale with $d$, which in turn causes the sampling complexity
to scale exponentially with $d$, as established in \eqref{thm:ULA_convergence_main}
and \eqref{thm:MALA_convergence_main}.

\begin{corollary}
\label{lemma:relation}
There exists an objective function $U$ that is
$m$-strongly convex outside of a region of radius $2R$ and $L$-Lipschitz smooth,
such that, for $\hat{\vx} \sim q^{*}_{\beta}$, it is necessary that
$\beta=\widetilde{\Omega}\left(d/\epsilon\right)$ in order to have
$U(\hat{\vx})-U\left(\vx^*\right)<\epsilon$ with constant probability.
Moreover, the number of iterations required for the Langevin algorithms
to achieve $U(\vx^K)-U\left(\vx^*\right)<\epsilon$ with constant probability
is $K = e^{\widetilde{\mathcal{O}} \left( d \cdot LR^2/\epsilon \right)}$.
\end{corollary}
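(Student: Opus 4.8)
\emph{Proof idea.} The plan is to establish the two assertions in turn, the first being the one that calls for a new construction. For it I would build a single potential $U$ for which a draw $\hat\vx\sim q^{*}_{\beta}\propto e^{-\beta U}$ cannot land within $\epsilon$ of the optimal value unless $\beta$ is of order $d/\epsilon$ up to logarithmic factors; the second assertion then follows by substituting this forced value of $\beta$ into Theorem~\ref{thm:MCMC_main}. Concretely, I would set $\vx^{*}=0$, $U(\vx^{*})=0$, and define $U$ in three radial zones, mollified at the interfaces so that it is globally $L$-Lipschitz smooth: on $\{\|\vx\|\le\sqrt{4\epsilon/L}\}$ take $U(\vx)=\tfrac{L}{2}\|\vx\|^{2}$, the steepest well that $L$-smoothness permits, so that $\{U<\epsilon\}$ is exactly the ball of radius $\sqrt{2\epsilon/L}$ and no larger; on the annulus $\sqrt{4\epsilon/L}\le\|\vx\|\le 2R$ keep $U$ essentially flat at height $2\epsilon$, which is convex but not $m$-strongly convex; and for $\|\vx\|\ge 2R$ make $U$ $m$-strongly convex, rising to $+\infty$. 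The hypothesis $\epsilon\le\mathcal{O}(LR^{2})$ is exactly what keeps the well radius $\sqrt{4\epsilon/L}$ well below $2R$, so that the flat zone occupies a constant fraction of the ball of radius $2R$.

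For the lower bound on $\beta$ I would use a crude volume/Laplace comparison. Since $\nabla U(\vx^{*})=0$ and $U$ is $L$-smooth, $\{U<\epsilon\}$ is contained in the ball of radius $\sqrt{2\epsilon/L}$, so its $q^{*}_{\beta}$-mass is at most its Lebesgue volume $\mathcal{O}\bigl((\epsilon/L)^{d/2}\bigr)$, while restricting the normalizing integral to the flat zone gives $\int e^{-\beta U}\ge e^{-2\beta\epsilon}\,\Omega(R^{d})$; dividing,
\begin{align*}
\Prob_{\hat\vx\sim q^{*}_{\beta}}\!\bigl(U(\hat\vx)-U(\vx^{*})<\epsilon\bigr)\;\le\;\mathcal{O}\!\left(\Bigl(\tfrac{\epsilon}{LR^{2}}\Bigr)^{d/2}\right)e^{2\beta\epsilon}.
\end{align*}
For this to be bounded below by a fixed constant one needs $e^{2\beta\epsilon}\ge\Omega\bigl((LR^{2}/\epsilon)^{d/2}\bigr)$, i.e.\ $\beta\ge\Omega\bigl(\tfrac{d}{\epsilon}\ln\tfrac{LR^{2}}{\epsilon}\bigr)=\widetilde{\Omega}(d/\epsilon)$, the logarithm being $\Omega(1)$ by the hypothesis on $\epsilon$. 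Conversely, bounding the mass of $\{U\ge\epsilon\}$ by the flat-zone contribution $\mathcal{O}(R^{d})e^{-\beta\epsilon}$ plus the exponentially small tail of the strongly convex exterior shows that $\beta=\widetilde{\Theta}(d/\epsilon)$ already concentrates a constant fraction of $q^{*}_{\beta}$ on $\{U-U(\vx^{*})<\epsilon\}$, so this is the right order for $\beta$.

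It remains to convert this into an iteration count. Fix $\beta=\widetilde{\Theta}(d/\epsilon)$. The target $q^{*}_{\beta}\propto e^{-\beta U}$ has potential $\beta U$, which is $\beta m$-strongly convex outside the ball of radius $2R$ and $\beta L$-Lipschitz smooth, with the same condition number $\kappa$. Applying Theorem~\ref{thm:MCMC_main} with $(m,L,R)$ replaced by $(\beta m,\beta L,2R)$ and a fixed sampling tolerance, say $\tfrac14$ in total variation, bounds the number of steps of ULA or MALA by $e^{\mathcal{O}(\beta LR^{2})}\cdot\mathrm{poly}(d,\kappa)=e^{\widetilde{\mathcal{O}}(dLR^{2}/\epsilon)}$, the polynomial factors going into the exponent because $dLR^{2}/\epsilon=\Omega(d)$. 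The returned iterate is then within $\tfrac14$ of $q^{*}_{\beta}$ in total variation, and $q^{*}_{\beta}$ places a constant mass on $\{U-U(\vx^{*})<\epsilon\}$, so $U(\vx^{K})-U(\vx^{*})<\epsilon$ with constant probability, giving $K=e^{\widetilde{\mathcal{O}}(dLR^{2}/\epsilon)}$.

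The hard part is the construction of $U$, not the probabilistic estimate, which is an elementary volume comparison. The potential has to be simultaneously as steep as $L$-smoothness allows near $\vx^{*}$ (to keep $\{U<\epsilon\}$ of minimal volume), flat over a region of radius $2R$ (so that the normalizer is as large as $\sim(2R)^{d}$ at moderate $\beta$ --- which is what propagates the exponential once this $U$ is fed to the sampler), and yet $L$-Lipschitz smooth through the mollified interfaces and $m$-strongly convex outside radius $2R$. Reconciling ``flat interior'' with ``$m$-strongly convex exterior'' across the transition --- a naive radial patch whose gradient vanishes at $\|\vx\|=2R$ has tangential Hessian eigenvalues below $m$ just outside --- is the step that needs care.
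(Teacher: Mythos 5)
Your argument is essentially the paper's: construct a potential with an $\epsilon$-deep well of radius $\Theta(\sqrt{\epsilon/L})$ sitting in an otherwise flat region of radius $\Theta(R)$, bound $q^{*}_{\beta}\bigl(\{U-U(\vx^{*})<\epsilon\}\bigr)$ by a volume ratio times $e^{\Theta(\beta\epsilon)}$ to force $\beta=\widetilde{\Omega}(d/\epsilon)$, and then feed the rescaled potential $\beta U$ (Lipschitz constant $\beta L$) into the sampling upper bound to get $K=e^{\widetilde{\mathcal{O}}(dLR^{2}/\epsilon)}$; the paper's estimate $P(\|\hat\vx-\vx^{*}\|\le r)\le e^{\beta\epsilon}(2r/R)^{d}$ is exactly your volume comparison. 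The one substantive difference is that the paper does not build a new function: it reuses the cosine-well construction \eqref{cost_func} from the optimization lower bound, whose $L$-smoothness and exterior $m$-strong convexity are already verified in Lemma~\ref{smooth_convex} (using $L\ge 2m$ and the explicit exterior profile $m(\|\vx\|_{2}-R/2)^{2}$), which is precisely the interface-patching step you flag as needing care but leave unfinished. Two minor points: your parenthetical that $L$-smoothness plus $\nabla U(\vx^{*})=0$ forces $\{U<\epsilon\}$ to be \emph{contained} in a ball of radius $\sqrt{2\epsilon/L}$ is backwards as a general implication (smoothness upper-bounds the steepness, so it gives the reverse containment); the containment holds only because your construction realizes the steepest admissible well. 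And your converse direction (that $\beta=\widetilde{\Theta}(d/\epsilon)$ suffices to concentrate mass on the sublevel set) is not needed for the statement and is not in the paper.
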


It should be noted that this upper bound for the Langevin algorithms agrees with
the lower bound for optimization algorithms in Theorem~\ref{theorem:lower_bound}
up to a factor of $LR^2/\epsilon$ in the exponent.
Intuitively this is because in the lower bound for optimization complexity we are considering the
most optimistic scenario for optimization algorithms, where a hypothetical algorithm can
determine whether one region of radius $\sqrt{\epsilon/L}$ (as depicted in Fig.~\ref{fig:cartoon_1})
contains the global minimum or not with only one query (of function value and $n$-th order derivatives).
When using the Langevin algorithms,
more steps are required to explore each local region to a constant level of confidence.

\section{Parameter Estimation from Gaussian Mixture Model: Sampling versus Optimization}
\label{sec:Example_GMM}

We have seen that for problems with local nonconvexity, the computational
complexity for the Langevin algorithm is polynomial in dimension whereas it
is exponential in dimension for optimization algorithms.  These are, however,
worst-case guarantees.  It is important to consider whether they also hold
for natural statistical problem classes and for specific optimization algorithms.
In this section, we study the Gaussian mixture model, comparing Langevin
sampling and the popular expectation-maximization (EM) optimization algorithm.

Consider the problem of inferring the mean parameters of a Gaussian mixture model,
$\pmu=\{\mu_1,\cdots,\mu_M\}\in\mathbb{R}^{d\times M}$, when $N$ data points
are sampled from that model. Letting $\vy=\{y_1,\cdots,y_N\}$
denote the data, we have:
\begin{align}
p\left(y_n | \pmu\right) &= \sum_{i=1}^M \dfrac{\lambda_i}{Z_i}
\exp\left( - \dfrac{1}{2} (y_n - \mu_i)^\rT \Sigma_i^{-1} (y_n - \mu_i) \right)
\nonumber\\
&+ \left( 1 - \sum_{i=1}^M \lambda_i \right) p_0(y_n),
\label{eq:GMM}
\end{align}
where $Z_i$ are normalization constants and $\sum_{i=1}^M \lambda_i \leq 1$.
$p_0(y_n)$ represents general constraints on the data (e.g., data may be distributed
inside a region or may have sub-Gaussian tail behavior).
The objective function is given by the log posterior distribution: $U(\pmu) = -\log p(\pmu) - \sum_{n=1}^N \log p\left(y_n | \pmu\right)$. Assume data are distributed in a bounded region ($\lrn{y_n}\leq R$) and take $p_0(y_n)= \ind{\|y_n\|\leq R} / Z_0$.

We prove in
Supplement D
that for a suitable choice of the prior $p(\pmu)$ and weights $\{\lambda_i\}$,\footnote{
We specify in Supplement D that the weights $\lambda_i/Z_i$ scale as the variance $\|\Sigma_i\|_2^2$ and the prior satisfies $p(\pmu) \propto \exp\left( - m \left(\|\pmu\|_F - {\sqrt{M}} R\right)^2 \ind{\|\pmu\|_F \geq {\sqrt{M}} R} \right)$.}
the objective function is
Lipschitz smooth and strongly convex for $\lrn{\pmu} \geq 2 R\sqrt{M}$.
Therefore, taking $MR^2 = \mathcal{O}(\log d)$, the ULA and MALA algorithms converge to $\epsilon$
accuracy within $K \leq \widetilde{\mathcal{O}}\left(d^3/\epsilon\right)$ and
$K \leq \widetilde{\mathcal{O}}\left(d^3 \ln^2\left(1/\epsilon\right)\right)$
steps, respectively.

The EM algorithm updates the value of $\pmu$ in two steps.
In the expectation (E) step a weight is computed for each data point and each mixture component,
using the current parameter value $\pmu_k$. In the maximization (M) step the value of $\pmu_{k+1}$
is updated as a weighted sample mean (see
Supplement D.2 for a more detailed description). It is standard to initialize the EM algorithm by
randomly selecting $M$ data points (sometimes with small perturbations) to form $\pmu_0$. We demonstrate in
Supplement D.2
that under the condition that $MR^2 = \mathcal{O}(\log d)$, there exists a dataset $\{y_1,\cdots,y_N\}$ and covariances $\{\Sigma_1,\cdots,\Sigma_M\}$, such that the EM algorithm requires more than $K\geq\min\{\mathcal{O}(d^{1/\epsilon}), \mathcal{O}(d^d)\}$ queries to converge if one initializes the algorithm close to the given data points.
That is, for large $\epsilon$, the computational complexity of the EM algorithm depends on $d$ with
arbitrarily high order (depending on $\epsilon$);
for small $\epsilon$, the computational complexity of the EM algorithm scales exponentially with $d$.
The latter case corresponds to our lower bound in Theorem~\ref{theorem:lower_bound} when taking the
radius of the convex region of $\pmu$ to scale with $\sqrt{\log d}$.
Therefore, it is significantly harder for the EM algorithm to converge if we initialize the algorithm
close to the given data points. This accords with practical implementations of EM
algorithms, where heuristic, problem-dependent methods are often employed during initialization with the
aim of decreasing the overall computation burden~\citep{Vampala_GMM}.
The same behavior appears in the gradient-based optimization methods (e.g., gradient descent), where the algorithms are trapped in one of the many local optima.

We also investigated this dichotomy experimentally.
We considered synthetic data $\{y_1,\cdots,y_N\}$ with sparse entries and let the nonzero entries follow uniform distribution on $[-1,1]$.
We used EM and ULA algorithms to infer the mean parameters $\pmu=\{\mu_1,\cdots,\mu_M\}$ in the Gaussian mixture model to obtain maximum a posteriori and mean estimates, respectively.
Accuracy of the maximum a posteriori estimate was measured in terms of the objective function value $U$,
while that of the mean estimates was measured in terms of both the expected objective function value $\E{U(\pmu)}$ (or the cross entropy between the sampled distribution and the posterior) and the expected mean parameters $\E{\pmu}$.
See Supplement E for detailed experimental settings.
In Fig.~\ref{fig:experiment}, we show the scaling of the number of gradient queries required to converge with respect to the dimension $d$ of data.
We observe that EM with random initialization from the data requires exponentially many gradient queries to converge, while ULA converges in an approximately linear number of gradient queries, corroborating our theoretical analysis.
\begin{figure}
\centering
\includegraphics[scale=0.35]{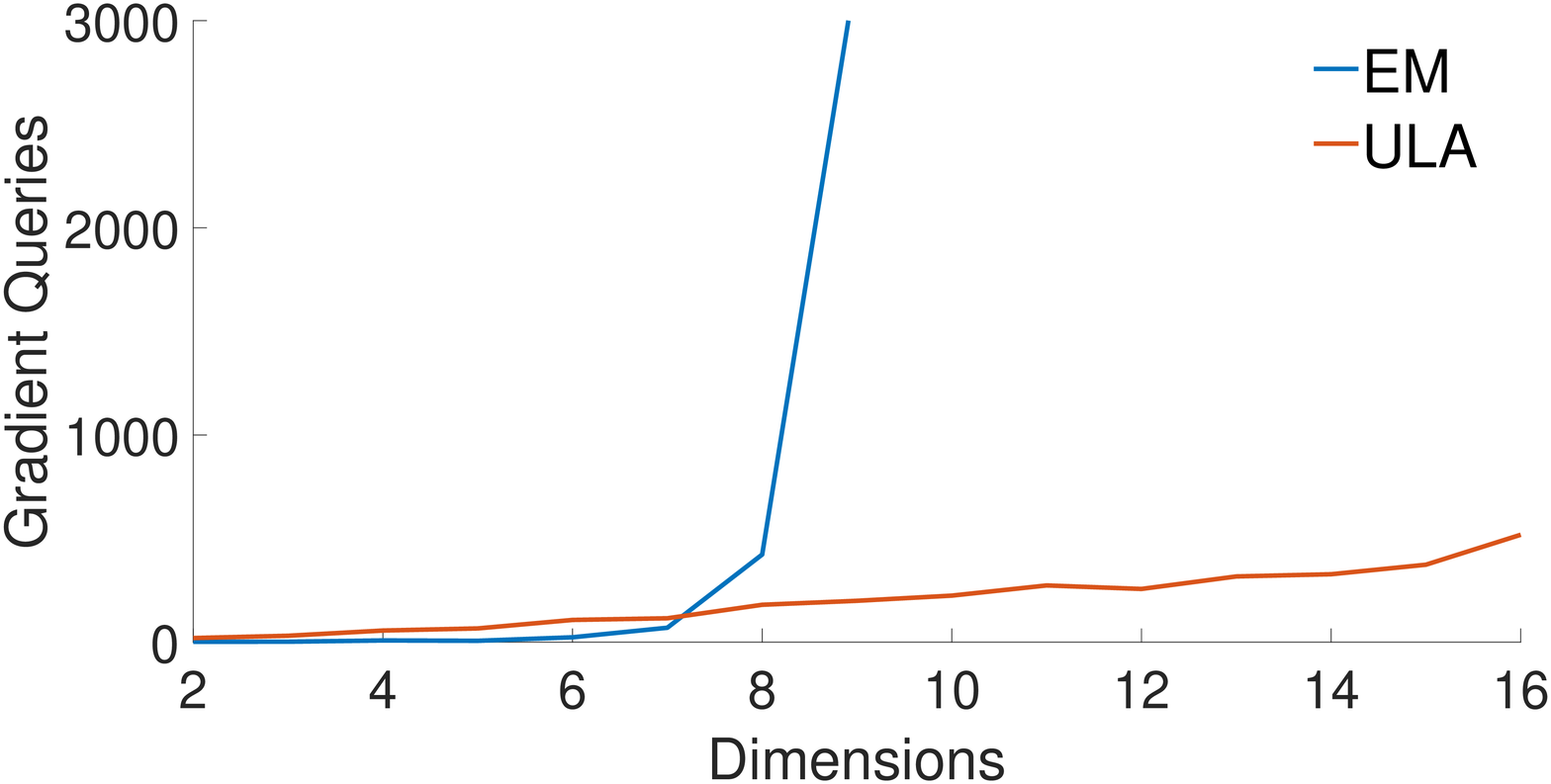}
\caption{Experimental results: scaling of number of gradient queries required for EM and ULA algorithms to converge with respect to the dimension $d$ of data.
When dimension $d\geq10$, too many gradient queries are required for EM to converge, so that an accurate estimate of convergence time is not available.
When dimension $d=32$, ULA converges within $1500$ gradient queries (not shown in the figure).
\vspace{-7pt}
}
\label{fig:experiment}
\end{figure}

Many mixture models with strongly log-concave priors fall into the assumed class of distributions with local nonconvexity. If data are distributed relatively close to each other, sampling these distributions can often be easier than searching for their global minima. This scenario is also common in the setting of the noisy multi-stable models arising in statistical physics (e.g., where the negative log likelihood is the potential energy of a classical particle system in an external field~\citep{Landau}) and related fields.

\section{Discussion}

We have shown that there is a natural family of nonconvex functions for which sampling algorithms
have polynomial complexity in dimension whereas optimization algorithms display exponential complexity.
The intuition behind these results is that computational complexity for optimization algorithms depends
heavily on the local properties of the objective function $U$.  This is consistent with a related
phenomenon that has been studied in the optimization---local strong convexity near the global optimum
can improve the convergence rate of convex optimization~\citep{Bach_Local_strong_convexity}).
On the other hand, sampling complexity depends more heavily on the global properties of $U$.
This is also consistent with existing literature; for example, it is known that the dimension
dependence of the ULA upper bounds deteriorates when $U$ changes from strongly
convex to weakly convex. This corresponds to the fact that the sub-Gaussian tails for strongly
log-concave distributions are easier to explore than the sub-exponential tails for log-concave
distributions.

A scrutiny of the relative scale between radius of the nonconvex region $R$ and the dimension
$d$ is interesting (for constant Lipschitz smoothness $L$):
when $R=0$, the problem is reduced to the Lipschitz-smooth and strongly convex
case, where GD converges in $\kappa\log(1/\epsilon)$ steps~\citep{bubeck_book}
and ULA converges in $\kappa^2 d/\epsilon^2$ steps;
when $R = \mathcal{O}(\log d)$, sampling is generally easier than optimization; when $0<R\leq\sqrt{d}$,
the convergence upper bound for sampling is still slightly smaller than the optimization complexity
lower bound; when $\sqrt{d}<R<d$, the comparison is indeterminate; and the converse is true if $R \geq d$.

The relatively rapid advance of the theory of gradient-based optimization in recent years has been
due in part to the development of lower bounds, of the kind exhibited in our Theorem~\ref{thm:lower_opt},
for broad classes of algorithms.  It is of interest to develop such lower bounds for MCMC algorithms,
particularly bounds that capture dimension dependence.  It is also of interest to develop both lower
bounds and upper bounds for other forms of nonconvexity.  For example, there has been recent work
studying strongly dissipative functions~\citep{Raginsky_dissipativity}. Here the worst-case convergence
bounds have exponential dependence on the dimension, but $p^*\propto e^{-U}$ has a sub-Gaussian tail;
thus, further exploration of this setting may yield milder conditions on $U$ that allow MCMC algorithms
to have polynomial convergence rates.

\newpage

\setcounter{theorem}{0}

\begin{appendices}
\section{Assumptions on the Objective Function $U$}
\label{assumptions}
Assumptions on $U:\R^d \rightarrow \R$ (local nonconvexity):
\begin{enumerate}
\item \label{A1}
$U(\vx)$ is $L$-Lipschitz smooth and its Hessian exists $\forall \vx\in\R^d$.

That is: $U\in C^1(\R^d)$, $\forall \vx, \vz\in\R^d$, $\lrn{\nabla U(\vx) - \nabla U(\vz)} \leq L \lrn{\vx-\vz}$;
$\forall \vx\in\R^d$, $\nabla^2 U(\vx)$ exists.

\item \label{A2}
$U(\vx)$ is $m$-strongly convex for $\lrn{\vx}>R$.

That is: $V(\vx) = U(\vx)-\dfrac{m}{2}\lrn{\vx}_2^2$ is convex on $\Omega=\R^d\setminus\ball(0,R)$\footnote{Here we let $\ball(0,R)$ denote the closed ball of radius $R$ centered at $0$.}.
We then follow the definition of convexity on nonconvex domains~\citep{Econ_convexity,MinYan} to require that $\forall \vx\in\Omega$, any convex combination of $\vx=\lambda_1 \vx_1 + \cdots + \lambda_k \vx_k$ with $\vx_1,\cdots, \vx_k\in\Omega$ satisfies:
\[
V(\vx) \leq \lambda_1 V(\vx_1) + \cdots + \lambda_k V(\vx_k).
\]
We further denote the \emph{condition number} of $U$ on $\Omega$ as $\kappa=L/m$.

\item \label{A3}
For convenience, let $\nabla U(0)=0$ (i.e., zero is a local extremum).
\end{enumerate}

\section{Proofs for Sampling}

\begin{theorem}\label{thm:MCMC}
 For $\target \propto e^{- U}$, we assume that $U$ satisfies the local nonconvexity Assumptions~\ref{A1}--\ref{A3}.
  Consider the unadjusted Langevin algorithm (ULA) and the Metropolis adjusted Langevin algorithm (MALA) with initialization $\initial=\NORMAL\parenth{0,\frac{1}{\smoothness}\Ind_\dims}$ and error tolerance $\mixingerror \in (0, 1)$.
  Then ULA satisfies
  \begin{align}
     \mixingtime_{\text{ULA}}(\mixingerror, \initial) \leq \mathcal{O}\parenth{ e^{32\smoothness R^2} \condition^2 \dfrac{\dims}{\mixingerror^2} \ln\left(\dfrac{\dims}{\mixingerror^2}\right) }. \label{thm:ULA_convergence}
  \end{align}
  For MALA,
    \begin{align}
         \mixingtime_{\text{MALA}}(\mixingerror, \initial) \leq \mathcal{O}\parenth{ \frac{e^{40\smoothness R^2}}{\scparam}\condition^{3/2}\dims^{1/2} \parenth{\dims \ln\condition + \ln\parenth{\frac{1}{\mixingerror}}}^{3/2} }. \label{thm:MALA_convergence}
     \end{align}
\end{theorem}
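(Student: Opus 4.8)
The plan is to carry out the two-step program announced above: first establish geometric decay of the KL divergence along the \emph{continuous} Langevin diffusion, and then pay for discretization --- through a bias analysis for ULA and through a conductance argument for MALA. The continuous part is already available: Proposition~\ref{thm:log_sobolev_constant} gives $\rho_U \ge \tfrac{m}{2}e^{-16LR^2}$ under Assumptions~\ref{A1}--\ref{A3}, and combining it with \eqref{eq:kl_derivative}--\eqref{eq:log-Sobolev_main} yields $\kldiv{\tilde{p}_t}{p^*}\le e^{-2\rho_U t}\kldiv{\tilde{p}_0}{p^*}$ for the exact SDE. A preliminary step needed for both algorithms is to control the initial divergence: since $U$ is $L$-smooth with $\nabla U(0)=0$ we have $U(\vx)\le U(0)+\tfrac{L}{2}\lrn{\vx}^2$, and since $U$ is $m$-strongly convex outside $\ball(0,R)$ we get the matching lower bound $U(\vx)\ge\tfrac{m}{2}\lrn{\vx}^2-\mathcal{O}(LR^2)$; feeding these into $\kldiv{p^0}{p^*}$ with $p^0=\NORMAL(0,\tfrac{1}{L}\Ind_{d})$ (the Gaussian being essentially extremal in the entropy term, and the two bounds controlling the ratio of normalizing constants) gives $\kldiv{p^0}{p^*}\le\mathcal{O}(d\ln\kappa+LR^2)$, which is the origin of the $d\ln\kappa$ factor in both estimates.

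\textbf{ULA.} I would interpolate the ULA iterates by the continuous process whose drift is frozen at the previous grid point, so that on $[kh,(k+1)h]$ its law $p_t^{k}$ solves a Fokker--Planck equation with drift $-\nabla U$ perturbed by $\nabla U(\vX_{kh})-\nabla U(\vX_t)$. Bounding this perturbation by $L\lrn{\vX_t-\vX_{kh}}$, using the one-step moment estimate $\Ep{}{\lrn{\vX_t-\vX_{kh}}^2}\lesssim h^2\Ep{}{\lrn{\nabla U(\vX_{kh})}^2}+hd\lesssim hd$ (after a standard control of $\Ep{}{\lrn{\nabla U}^2}$ along the chain via $L$-smoothness and the second moment of $p^*$), and invoking the log-Sobolev inequality, I would obtain a differential inequality of the schematic form $\tfrac{\rd}{\rd t}\kldiv{p_t^{k}}{p^*}\le -\rho_U\kldiv{p_t^{k}}{p^*}+\mathcal{O}(hL^2 d)$. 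Integrating over the $K$ grid cells gives $\kldiv{p^K}{p^*}\lesssim e^{-\rho_U K h}\kldiv{p^0}{p^*}+\mathcal{O}(hL^2 d/\rho_U)$. Choosing $h=\mathcal{O}(\rho_U\epsilon^2/(L^2 d))=\mathcal{O}(e^{-16LR^2}\kappa^{-1}L^{-1}\epsilon^2/d)$ makes the bias at most $\tfrac12\epsilon^2$, and then $K=\mathcal{O}(\rho_U^{-1}h^{-1}\ln(\kldiv{p^0}{p^*}/\epsilon^2))$ steps drive the exponential term below $\tfrac12\epsilon^2$; Pinsker's inequality converts $\kldiv{p^K}{p^*}\le\epsilon^2$ into $\vecnorm{p^K-p^*}{\text{TV}}\le\epsilon$. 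Substituting $\rho_U\ge\tfrac{m}{2}e^{-16LR^2}$ and $\kldiv{p^0}{p^*}=\mathcal{O}(d\ln\kappa+LR^2)$ yields \eqref{thm:ULA_convergence}.

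\textbf{MALA.} Since the Metropolis chain is exactly reversible with respect to $p^*$ there is no asymptotic bias, so I would instead lower bound its conductance $\Phi$. Two ingredients enter. First, the log-Sobolev inequality for $p^*$ implies a Gaussian-type isoperimetric inequality (see, e.g., \citep{Ledoux_log_Sobolev_diffusion}), hence a Cheeger constant $\gtrsim\sqrt{\rho_U}$. Second --- the delicate ingredient --- for $\lrn{\vx-\vy}$ of order $\sqrt{h}$ the one-step Metropolis transition distributions from $\vx$ and from $\vy$ must overlap in total variation by a constant; this requires that on the region $\mathcal{R}=\{\vx:\lrn{\vx}\lesssim\sqrt{(d\ln\kappa+\ln(1/\epsilon))/m}\,\}$, which carries $p^*$-mass $\ge 1-\epsilon$, both the Gaussian proposal densities are close and the Metropolis acceptance probability is bounded below, which one checks from $L$-smoothness of $U$, the concentration of the proposal noise ($\lrn{\xi}\sim\sqrt{hd}$), and the crude bound $\lrn{\nabla U(\vx)}\le L\lrn{\vx}$ on $\mathcal{R}$. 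Propagating these constraints forces $h=\mathcal{O}(e^{-8LR^2}\kappa^{-1/2}L^{-1}(d\ln\kappa+\ln(1/\epsilon))^{-1/2}d^{-1/2})$, after which a restricted-conductance argument in the style of Lov\'asz--Simonovits and \citep{dwivedi2018log} gives $\Phi\gtrsim\sqrt{h\,\rho_U}$. Finally $p^0$ is $\beta$-warm with $\ln\beta=\mathcal{O}(d\ln\kappa+LR^2)$ (the computation from the first paragraph), so the conductance-to-mixing bound $\mixingtime_{\text{MALA}}(\epsilon,p^0)=\mathcal{O}(\Phi^{-2}\ln(\beta/\epsilon))=\mathcal{O}((h\rho_U)^{-1}(d\ln\kappa+\ln(1/\epsilon)))$, together with the values of $h$, $\rho_U$ and $\ln\beta$, produces \eqref{thm:MALA_convergence}.

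\textbf{Main obstacle.} For ULA the crux is the discretization-bias estimate --- tracking the interpolated Fokker--Planck perturbation and the moment $\Ep{}{\lrn{\nabla U}^2}$ along the chain carefully enough that the bias scales like $hL^2 d$ --- but this is by now a fairly routine calculation once the log-Sobolev inequality is in place. The genuinely hard part is the MALA conductance bound: one must simultaneously establish that the Metropolis step accepts with constant probability and that nearby points have overlapping transition kernels \emph{over a region carrying all but an $\epsilon$-fraction of $p^*$}, and then feed the resulting step-size restriction (with its $d$ and $\ln(1/\epsilon)$ factors) cleanly into the final rate, with the exponential factor $e^{\Theta(LR^2)}$ entering only through $\rho_U$ and the warmness.
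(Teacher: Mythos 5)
Your overall architecture is exactly the paper's: the ULA half is handled by interpolating the iterates with a frozen-drift SDE, differentiating the KL divergence along the resulting Fokker--Planck flow, absorbing the drift mismatch with Young's inequality and the one-step moment bound, closing with the log-Sobolev inequality, Gronwall, and Pinsker; the MALA half is handled by a warm start plus an $s$-conductance argument in the style of Lov\'asz--Simonovits and \citep{dwivedi2018log}. Your step sizes, the bound $\ln\beta = \mathcal{O}(\dims\ln\condition + \smoothness R^2)$ on the warmness, and the resulting rates all line up with the paper. The one place in the ULA argument where you wave at something "routine" --- the uniform-in-$k$ control of $\Ep{\p_{kh}}{\lrn{\vx}^2}$ --- is in the paper a genuine induction that couples the KL decay with the second moment through the Talagrand inequality $W_2^2(\p_t,\p^*)\le \tfrac{2}{\rho_U}\kldiv{\p_t}{\p^*}$; you have named the right ingredients, so this is a matter of bookkeeping rather than a gap.

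There is, however, one concrete flaw in your MALA argument. You derive the isoperimetric input to the conductance bound by asserting that the log-Sobolev inequality implies a Gaussian-type isoperimetric inequality and hence a Cheeger constant $\gtrsim\sqrt{\rho_U}$. That implication is Buser/Bakry--Ledoux and requires a curvature lower bound (e.g.\ $CD(0,\infty)$), which is precisely what fails here: $U$ is nonconvex inside $\ball(0,R)$, so the Hessian can be negative. The paper flags exactly this point in a footnote and instead uses the weaker, curvature-free chain log-Sobolev $\Rightarrow$ Poincar\'e $\Rightarrow$ \emph{linear} isoperimetry with constant $\rho_U$ (not $\sqrt{\rho_U}$), which yields $\conductance_s \gtrsim \rho_U\sqrt{\step}$ rather than your $\sqrt{\rho_U \step}$; this is the source of the $e^{40\smoothness R^2}/\scparam$ prefactor in \eqref{thm:MALA_convergence}. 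Your version, if it were valid, would give a strictly stronger bound ($e^{24\smoothness R^2}$ in the exponent), so it does not merely "produce" the stated estimate --- it overshoots it via an unjustified step. The fix is either to adopt the paper's linear isoperimetric bound, or to transfer the Gaussian isoperimetric profile of the strongly log-concave approximant $\hat{U}$ of Lemma~\ref{lemma:hat_U} to $\p^*$ by a Holley--Stroock-type perturbation of the boundary measure, at the cost of additional factors of $e^{\mathcal{O}(\smoothness R^2)}$; either route lands within the claimed bound.
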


\begin{remark}
Assumptions~\ref{A1}--\ref{A3} can be shown to imply that the nonconvex region will have small probability mass in high dimensions. The theorem quantifies the consequences of this small mass on ULA and MALA, showing essentially that their
mixing time is not perturbed qualitatively by the nonconvexity.  It is the coupling of this result with the exponential
complexity of optimization, as shown in Theorem~\ref{theorem:lower_bound}, that is our main result.  The assumptions have been chosen to make this comparison as simple as possible.  But it is noteworthy that we can weaken the assumptions and still obtain rapid mixing for ULA and MALA.  In particular, note that we assumed that the Lipschitz parameter $L$ is uniformly bounded by a constant over the entire $\R^d$.  This assumption is in fact not necessary in our proofs. Indeed, we can allow the Lipschitz parameter $\widetilde{L}$ and strong convexity parameter $\widetilde{m}$ outside of the region $\ball(0,R)$ to scale with the dimension $d$ (while $U$ is still $L$-Lipschitz smooth inside $\ball(0,R)$ and $L$ does not scale with $d$).  In that setup, the probability mass inside the nonconvex region $\ball(0,R)$ no longer shrinks as a function of $d$.

Moreover, in that setup we can repeat the constructive proof in Lemma~\ref{lemma:hat_U} (via choosing a smaller smoothing radius $\delta = \mathcal{O}(\kappa R/d)$) and demonstrate that $\rho_U\geq L e^{-16LR^2}$.
It follows that the computational complexity for ULA becomes (in terms of dimension $d$ and accuracy $\epsilon$): $\mathcal{O}(d^3/\epsilon^2)$, where the extra $d^2$ factor is due to the fact that the step size $h$ scales inversely with $\widetilde{L}^2=\mathcal{O}(d^2)$. A similar result holds for MALA.

This more general setup highlights the value of our general approach to analyzing MCMC algorithms via the properties of weighted Sobolev spaces.  It naturally allows us to combine convergence rates for sampling strongly log-concave posteriors and those for sampling smooth posteriors in a bounded region.
Indeed, our upper bounds on convergence rates generalize existing results for strongly log-concave posteriors~\citep{Dalalyan_JRSSB,Moulines_ULA, Dalalyan_user_friendly,Xiang_underdamped,Xiang_overdamped,dwivedi2018log, Mangoubi1,Mangoubi2} and also strengthen recent work using the Wasserstein metric to the KL divergence~\citep{Eberle_ULA,Eberle_HMC,
Xiang_Nonconvex, Local_Nonconvex_W2}.
\end{remark}

We begin by proving the basic log-Sobolev inequality that underlies our results.
We then prove convergence of ULA and MALA respectively in Sec.~\ref{sec:ULA} and~\ref{sec:MALA}.

\subsection{Log-Sobolev Inequality}
\begin{proposition}
  \label{thm:log_sobolev_constant}
  For $\p^* \propto e^{- U}$ where $U$ satisfies Assumptions~\ref{A1}--\ref{A3} in Appendix~\ref{assumptions},
  \begin{align}
    \rho_U \geq \dfrac{m}{2} e^{-16LR^2}.
  \end{align}
\end{proposition}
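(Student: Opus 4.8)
The plan is to reduce the bound to the strongly log-concave case by way of the smooth surrogate $\hat U$ supplied by Lemma~\ref{lemma:hat_U}, and then chain together two classical facts: the Bakry--\'Emery curvature criterion (to get a log-Sobolev inequality for $e^{-\hat U}$) and the Holley--Stroock perturbation principle (to transfer that inequality to $p^* \propto e^{-U}$, paying an exponential factor in the oscillation of $\hat U - U$).

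First I would invoke Lemma~\ref{lemma:hat_U}: under Assumptions~\ref{A1}--\ref{A3} there is $\hat U \in C^1(\R^d)$ that is $\tfrac{m}{2}$-strongly convex on all of $\R^d$, whose Hessian exists everywhere, and which satisfies $\sup_{\vx}(\hat U(\vx)-U(\vx)) - \inf_{\vx}(\hat U(\vx)-U(\vx)) \le 16 L R^2$. Since $\nabla^2 \hat U(\vx) \succeq \tfrac{m}{2}\mathbb{I}_d$ pointwise, the probability measure with density proportional to $e^{-\hat U}$ meets the Bakry--\'Emery condition \citep{bakryEmery}, and hence satisfies a log-Sobolev inequality with constant $\rho_{\hat U} \ge \tfrac{m}{2}$. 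This is the step that exploits global strong convexity, and it is where the factor $m/2$ enters.

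Next I would write $p^* \propto e^{-U} = e^{-\hat U}\cdot e^{\hat U - U}$, i.e., as a bounded multiplicative perturbation of the log-concave density $e^{-\hat U}$, and apply the Holley--Stroock theorem \citep{HolleyStroock}: the log-Sobolev constant degrades by at most $e^{-\mathrm{osc}(\hat U - U)}$, where $\mathrm{osc}(f) = \sup f - \inf f$. Plugging in $\rho_{\hat U} \ge \tfrac{m}{2}$ and $\mathrm{osc}(\hat U - U) \le 16 L R^2$ from Lemma~\ref{lemma:hat_U} yields $\rho_U \ge \tfrac{m}{2} e^{-16LR^2}$, as claimed. The only technical point worth checking is that $\mathrm{osc}(\hat U - U)$ is genuinely finite (so Holley--Stroock applies with a nontrivial constant), which is exactly the content of the last inequality in Lemma~\ref{lemma:hat_U}.

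The main obstacle is therefore not in this proposition — the Bakry--\'Emery $\to$ Holley--Stroock chain is routine once $\hat U$ is available — but in Lemma~\ref{lemma:hat_U} itself, namely producing a globally $\tfrac{m}{2}$-strongly convex $C^1$ surrogate whose gap from $U$ has oscillation controlled by $LR^2$ and crucially \emph{not} by a quantity that grows with $d$. The natural construction modifies $U$ only inside $\ball(0,R)$ (e.g.\ interpolating to a quadratic-like function that matches $U$ and $\nabla U$ on the sphere $\lrn{\vx}=R$), and then bounds the pointwise difference using $L$-smoothness of $U$ across a region of diameter $\mathcal{O}(R)$, which is what produces the $\mathcal{O}(LR^2)$ scaling; verifying that the patched function is $C^1$ and retains $\tfrac{m}{2}$-strong convexity globally is the delicate part, and is handled in Supplement B.1.
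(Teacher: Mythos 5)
Your proof is correct and follows exactly the route the paper takes: invoke Lemma~\ref{lemma:hat_U} to obtain the globally $\tfrac{m}{2}$-strongly convex surrogate $\hat U$, apply the Bakry--\'Emery criterion to get $\rho_{\hat U}\ge \tfrac{m}{2}$, and then transfer to $\rho_U$ via Holley--Stroock using the oscillation bound $16LR^2$. Your closing remark is also apt --- the real work is in Lemma~\ref{lemma:hat_U}, which the paper proves in Supplement~B.1 by a convex-extension-plus-mollification construction.
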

\begin{proof}
First note that for $m/2$-strongly convex $\hat{U}\in C^1(\R^d)$ with $\nabla^2 \hat{U}(\vx)$ exists on the entire $\R^d$, distribution $e^{- \hat{U}(\vx)}$ satisfies the Bakry-Emery criterion \citep{bakryEmery} for strongly log concave density and have:
\begin{align}
\rho_{\hat{U}} \geq \dfrac{m}{2}.
\end{align}

Next we invoke Lemma~\ref{lemma:hat_U} that such $\hat{U}$ exists and satisfies $\sup\left(\hat{U}(\vx)-U(\vx)\right) - \inf\left(\hat{U}(\vx)-U(\vx)\right) \leq 16 LR^2$.

Then we use a result from Holley-Stroock \citep{HolleyStroock} and obtain:
\begin{align}
\rho_U \geq
\dfrac{m}{2} e^{- \left|\sup\left(\hat{U}(\vx)-U(\vx)\right) - \inf\left(\hat{U}(\vx)-U(\vx)\right)\right|}
\geq \dfrac{m}{2} e^{-16LR^2}.
\end{align}

\end{proof}

\begin{lemma}
For $U$ satisfying Assumptions~\ref{A1}--\ref{A3}, there exists $\hat{U}\in C^1(\R^d)$ with a Hessian that exists everywhere on $\R^d$, and $\hat{U}$ that is $m/2$-strongly convex on $\R^d$, such that $\sup\left(\hat{U}(\vx)-U(\vx)\right) - \inf\left(\hat{U}(\vx)-U(\vx)\right) \leq 16 LR^2$.
\label{lemma:hat_U}
\end{lemma}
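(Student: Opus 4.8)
The plan is to build $\hat U$ by a local, quadratic-mollification construction: outside the ball $\ball(0,R)$ the function $U$ is already $m$-strongly convex, so we only need to repair the convexity deficit inside $\ball(0,R)$ while changing $U$ by a controlled additive amount. The natural device is to add a smooth convex ``bump'' supported near $\ball(0,R)$. Concretely, I would choose a radius $\delta$ of order $R$ and a $C^\infty$ convex profile $\psi:\R^d\to\R$ with $\nabla^2\psi \succeq 0$ everywhere, $\nabla^2\psi \succeq \tfrac{3m}{2}\,\mI$ on $\ball(0,R)$ (so that adding it to $U$, whose Hessian is $\succeq -L\,\mI$ inside the ball by Lipschitz smoothness, yields a Hessian $\succeq \tfrac{m}{2}\,\mI$ there once $\psi$ dominates $L$), and $\nabla^2\psi = 0$ outside $\ball(0,R+\delta)$; in the transition annulus $R\le\lrn{\vx}\le R+\delta$ one asks only that $\psi$ stay convex and that the total Hessian $\nabla^2 U + \nabla^2\psi$ remain $\succeq \tfrac{m}{2}\,\mI$, which is automatic since $\nabla^2 U \succeq m\,\mI$ there already and $\nabla^2\psi\succeq 0$. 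Setting $\hat U = U + \psi$ then gives a $C^1$ function whose Hessian exists everywhere (it is the sum of the existing Hessians) and which is $\tfrac{m}{2}$-strongly convex on all of $\R^d$.

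The quantitative claim $\sup(\hat U - U) - \inf(\hat U - U)\le 16LR^2$ reduces to bounding the oscillation of $\psi$, since $\hat U - U = \psi$ exactly. So I would pick $\psi$ explicitly — for instance a smoothed version of $\vx\mapsto c\,(\lrn{\vx}-R-\delta)^2\,\ind{\lrn{\vx}\le R+\delta}$, or simply $\psi(\vx)=c\,\varphi(\lrn{\vx})$ for a one-dimensional $C^2$ convex function $\varphi$ that is constant for $r\ge R+\delta$ and has $\varphi''$ of order $1$ on $[0,R]$ — with the prefactor $c$ chosen of order $L$ so that the required Hessian lower bound $\nabla^2\psi\succeq\tfrac{3m}{2}\mI$ on $\ball(0,R)$ holds (recall $L\ge 2m$). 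For such a $\psi$ the oscillation is $\mathcal{O}(c\,(R+\delta)^2) = \mathcal{O}(L R^2)$, and tracking the constants — the factor from $\delta\asymp R$, the factor from needing $\nabla^2\psi$ to beat $L$ rather than $\tfrac{3m}{2}$ in the worst direction, and the one-dimensional profile constants — should land the bound at $16LR^2$. (If a naive choice overshoots, one reduces $\delta$; taking $\delta$ as small as $\mathcal{O}(\kappa R/d)$, as the Remark after Theorem~\ref{thm:MCMC} notes, gives even sharper constants when $L$ is allowed to scale with $d$.)

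The main obstacle is the transition region $R\le\lrn{\vx}\le R+\delta$: here I need the mollifying profile $\varphi$ to interpolate in $C^2$ between its steep behavior on $[0,R]$ and being flat on $[R+\delta,\infty)$, which forces $\varphi''$ to go negative somewhere in a pointwise ODE sense unless handled carefully in the radial-to-Cartesian passage — the Hessian of a radial function $\varphi(\lrn{\vx})$ is $\varphi''(r)\,\hat{\vx}\hat{\vx}^\top + \tfrac{\varphi'(r)}{r}(\mI - \hat{\vx}\hat{\vx}^\top)$, so one must also control the sign of $\varphi'(r)/r$, not just $\varphi''$. I would resolve this by taking $\varphi$ nondecreasing (so $\varphi'\ge 0$ and the tangential part of $\nabla^2\psi$ is automatically $\succeq 0$) and arranging $\varphi''\ge 0$ throughout, i.e. $\varphi$ genuinely convex and eventually constant — which is possible precisely because we only require $\hat U$ to be $\tfrac{m}{2}$-strongly convex, not to match $U$ outside the ball. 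Once $\psi$ is convex and supported in $\ball(0,R+\delta)$, the remaining steps — $\hat U\in C^1$, Hessian exists everywhere, strong convexity constant, and the oscillation bound — are routine verifications.
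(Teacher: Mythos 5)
There is a genuine gap, and it is fatal to the additive-bump strategy as you describe it. You ask for a function $\psi$ that is convex on all of $\R^d$ (so that $\hat U = U+\psi$ stays strongly convex outside the ball), has $\nabla^2\psi \succeq cL\,\mI$ on $\ball(0,R)$ for some constant $c$ (to repair the Hessian deficit $\nabla^2 U \succeq -L\,\mI$ inside), and has oscillation $\sup\psi - \inf\psi \leq 16LR^2$ (since $\hat U - U = \psi$ exactly). No such $\psi$ exists: a convex function on $\R^d$ with bounded oscillation is constant (restrict to any line; a one-dimensional convex function that is not constant grows at least linearly in one direction). Your proposed resolution in the last paragraph --- take $\varphi$ nondecreasing, $\varphi''\geq 0$ throughout, and eventually constant --- is self-contradictory for the same reason: $\varphi'$ is then nonnegative, nondecreasing, and eventually zero, hence identically zero, so $\varphi$ is constant and cannot satisfy $\varphi''\gtrsim L$ on $[0,R]$. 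If you instead exploit the slack and allow $\nabla^2\psi \succeq -\tfrac{m}{2}\mI$ outside $\ball(0,R)$, a bounded nonconstant $\psi$ does exist, but then $\varphi'$ must decay from roughly $LR$ at $r=R$ to $0$ at rate at most $m/2$, forcing the transition width to be $\delta \gtrsim \kappa R$ and the oscillation to be $\Theta(L^2R^2/m)=\Theta(\kappa LR^2)$. That is off by a factor of $\kappa$ from the claimed $16LR^2$ and would degrade the log-Sobolev constant in Proposition~\ref{thm:log_sobolev_constant} to $e^{-\mathcal{O}(\kappa LR^2)}$, breaking the rates in Theorem~\ref{thm:MCMC_main}.

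The paper's construction sidesteps this by \emph{not} making the perturbation $\hat U - U$ convex. After subtracting $\tfrac{m}{4}\lrn{\vx}^2$, it replaces $\tilde U = U - \tfrac{m}{4}\lrn{\vx}^2$ \emph{inside} the ball by the convex extension of its values from $\Omega=\R^d\setminus\ball(0,R)$ (the infimum over convex combinations of boundary values, Eq.~\eqref{eq:V}), mollifies at scale $\delta = \mathcal{O}(\kappa^{-1}R)$, and glues to $\tilde U$ across an annulus. The resulting perturbation $\hat U - U$ is compactly supported and merely bounded, with oscillation controlled by the oscillation of $\tilde U$ on a shell of radius about $\tfrac{3}{2}R$, which is $\mathcal{O}(LR^2)$ by $L$-smoothness and $\nabla U(0)=0$. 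The Holley--Stroock argument only needs the perturbation to be bounded, not convex, and that is precisely the degree of freedom your approach gives up. To repair your proof you would have to abandon the requirement that $\psi$ be globally convex and instead argue, as the paper does, that a convex surrogate can be built by extension-and-smoothing with only a bounded, non-convex discrepancy from $U$.
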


\begin{proof}[Proof of Lemma~\ref{lemma:hat_U}]
Similar to Assumptions~\ref{A1}--\ref{A3}, denote $\Omega=\R^d\setminus\ball(0,R)$.
Also denote $\tilde U(\vx) = U(\vx)-\dfrac{m}{4}\lrn{\vx}^2$.

We follow \citep{MinYan} to construct $\hat{U}(\vx)-\dfrac{m}{4}\lrn{\vx^2} \in C^1(\R^d)$ with Hessian defined on $\R^d$ so that it is convex on $\R^d$ and differs from $\tilde U(\vx)$ less than $16 LR^2$.

First we define the function $V$ as the convex extension~\citep{roofs} of $\tilde{U}$ from domain $\Omega$ to its convex hull $\Omega^{co}$:
\begin{align}
V(\vx) = \inf_{
\substack{ \{\vx_i\} \subset \Omega,
\\ \left\{\lambda_i \big| \sum_i \lambda_i=1\right\},
\\ \text{s.t.}, \sum_i \lambda_i \vx_i = \vx }
}
\left\{\sum_{i=1}^l \lambda_i \tilde{U}(\vx_i)\right\},
\quad\forall \vx \in\Omega^{co}=\R^d.
\label{eq:V}
\end{align}
$V(\vx)$ is convex on the entire domain $\R^d$.
Also, since $\tilde{U}(\vx)$ is convex in $\Omega$, $V(\vx)=\tilde{U}(\vx)$ for $\vx\in\Omega$.
By Lemma~\ref{lemma:V}, we also know that $\forall \vx\in\ball(0,R)$, $\inf_{\bar{\vx}=R} \tilde{U}(\bar{\vx}) \leq V(\vx) \leq \sup_{\bar{\vx}=R} \tilde{U}(\bar{\vx})$.

Next we construct $\tilde{V}(\vx)$ to be a smoothing of $V$ on $\ball\left(0,\dfrac{4}{3}R\right)$.
Let $\phi\geq0$ be a smooth function supported on the ball $\ball(0,\delta)$ where $\delta=\dfrac{m}{L}\dfrac{R}{1600}<\dfrac{R}{6}$ such that $\int \phi(\vx)\rd \vx=1.$
Define
\begin{align}
\tilde{V}(\vx) = \int V(\vy)\phi(\vx-\vy)\rd \vy = \int V(\vx-\vy) \phi(\vy) \rd \vy. \label{eq:T_V}
\end{align}
Then $\tilde{V}$ is a smooth and convex function on $\R^d$.
The second expression in \eqref{eq:T_V} implies that $\tilde{V}(\vx)$ is $\dfrac{m}{2}$-strongly convex in $\R^d\setminus\ball\left(0,R+\delta\right)\supset\ball\left(0,\dfrac{3}{2}R\right)\setminus\ball\left(0,\dfrac{4}{3}R\right)$.
Also note that the definition of $\tilde{V}$ implies that $\forall \lrn{\vx}<\dfrac{4}{3}R$,
\[
\inf_{\lrn{\bar{\vx}}<\dfrac{4}{3}R+\delta} V(\bar{\vx}) \leq \tilde{V}(\vx) \leq \sup_{\lrn{\bar{\vx}}<\dfrac{4}{3}R+\delta} V(\bar{\vx}).
\]
And by Lemma~\ref{lemma:V},
\begin{align}
\inf_{\bar{\vx}\in\ball\left(0,\dfrac{4}{3}R+\delta\right)\setminus\ball(0,R)} \tilde{U}(\bar{\vx}) \leq \tilde{V}(\vx) \leq \sup_{\bar{\vx}\in\ball\left(0,\dfrac{4}{3}R+\delta\right)\setminus\ball(0,R)} \tilde{U}(\bar{\vx}),
\quad \forall \lrn{\vx}<\dfrac{4}{3}R.
\label{eq:BoundT_V}
\end{align}

Finally, we construct the auxiliary function $\hat{U}(\vx)$:
\begin{align}
\hat{U}(\vx)-\dfrac{m}{4}\lrn{\vx^2}
= \left\{
\begin{array}{l}
\tilde{U}(\vx), \quad \lrn{\vx}>\dfrac{3}{2}R\\
\alpha(\vx) \tilde{U}(\vx) + (1-\alpha(\vx))\tilde{V}(\vx), \quad \dfrac{4}{3}R<\lrn{\vx}<\dfrac{3}{2}R\\
\tilde{V}(\vx), \quad \lrn{\vx}<\dfrac{4}{3}R
\end{array}
\right.,
\end{align}
where $\alpha(\vx)=\dfrac{1}{2}\cos\left(\dfrac{36\pi}{17}\dfrac{\lrn{\vx}^2}{R^2}-\dfrac{64\pi}{17}\right)+\dfrac12$.
Here we know that $\tilde{U}(\vx)$ is $\dfrac{m}{2}$-strongly convex and smooth in $\R^d\setminus\ball\left(0,R\right)$;
$\tilde{V}(\vx)$ is $\dfrac{m}{2}$-strongly convex and smooth in $\R^d\setminus\ball\left(0,\dfrac{4}{3}R\right)$.
Hence for $\dfrac{4}{3}R<\lrn{\vx}<\dfrac{3}{2}R$,
\begin{align*}
\lefteqn{\nabla^2 \left(\hat{U}(\vx)-\dfrac{m}{4}\lrn{\vx^2}\right)} \\
&= \nabla^2 \tilde{U}(\vx) + \nabla^2\left( (1-\alpha(\vx))(\tilde{V}(\vx)-\tilde{U}(\vx)) \right)
\\&=
\alpha(\vx) \nabla^2\tilde{U}(\vx) + (1-\alpha(\vx))\nabla^2\tilde{V}(\vx)
\\&
- \nabla^2\alpha(\vx)\left(\tilde{V}(\vx)-\tilde{U}(\vx)\right)
- 2\nabla\alpha(\vx)\left(\nabla\tilde{V}(\vx)-\nabla\tilde{U}(\vx)\right)^T
\\ & \succeq
\dfrac{m}{2}\mI
- \nabla^2\alpha(\vx)\left(\tilde{V}(\vx)-\tilde{U}(\vx)\right)
- 2\nabla\alpha(\vx)\left(\nabla\tilde{V}(\vx)-\nabla\tilde{U}(\vx)\right)^T.
\end{align*}
Note that for $\dfrac{4}{3}R<\lrn{\vx}<\dfrac{3}{2}R$,
\[
\lrn{\nabla\tilde{V}(\vx)-\nabla\tilde{U}(\vx)} = \int \lrn{\nabla\tilde{U}(\vx-\vy)-\nabla\tilde{U}(\vx)} \phi(\vy) \rd \vy
\leq L\delta.
\]
\[
\tilde{V}(\vx)-\tilde{U}(\vx) = \int \left(\tilde{U}(\vx-\vy)-\tilde{U}(\vx)\right) \phi(\vy) \rd \vy
\leq \dfrac{3}{2} LR \delta.
\]
Therefore, when $\dfrac{4}{3}R<\lrn{\vx}<\dfrac{3}{2}R$,
\begin{align*}
\nabla^2 \left(\hat{U}(\vx)-\dfrac{1}{4}\lrn{\vx^2}\right)
\succeq
\dfrac{m}{2}\mI
- 3\pi\dfrac{L\delta}{R}\mI
- 54\pi^2\dfrac{L\delta}{R}\mI
\succeq
\left(\dfrac{m}{2}-800\dfrac{L\delta}{R}\right)\mI.
\end{align*}
Since $\delta=\dfrac{m}{L}\dfrac{R}{1600}$, $\nabla^2 \left(\hat{U}(\vx)-\dfrac{m}{4}\lrn{\vx^2}\right)$ is positive semi-definite for $\dfrac{4}{3}R<\lrn{\vx}<\dfrac{3}{2}R$.
Hence $\nabla^2 \left(\hat{U}(\vx)-\dfrac{m}{4}\lrn{\vx^2}\right)$ is positive semi-definite on the entire $\R^d$, and $\hat{U}(\vx)-\dfrac{m}{4}\lrn{\vx^2}$ is convex on $\R^d$.

From \eqref{eq:BoundT_V}, we know that for $\lrn{\vx}\leq \dfrac{3}{2}R$,
\[
\inf_{\bar{\vx}\in\ball\left(0,\dfrac{3}{2}R+\delta\right)\setminus\ball(0,R)} \tilde{U}(\bar{\vx})
\leq \hat{U}(\vx) - \dfrac{m}{4}\lrn{\vx^2} \leq
\sup_{\bar{\vx}\in\ball\left(0,\dfrac{3}{2}R+\delta\right)\setminus\ball(0,R)} \tilde{U}(\bar{\vx}).
\]
Therefore,
\begin{align*}
\lefteqn{\sup\left(\hat{U}(\vx)-U(\vx)\right) - \inf\left(\hat{U}(\vx)-U(\vx)\right)} \\
&= \sup\left(\hat{U}(\vx) - \dfrac{m}{4}\lrn{\vx^2} - \tilde{U}(\vx)\right) - \inf\left(\hat{U}(\vx) - \dfrac{m}{4}\lrn{\vx^2} - \tilde{U}(\vx)\right)\\
&\leq
2\left( \sup_{\bar{\vx}\in\ball\left(0,\dfrac{3}{2}R+\delta\right)\setminus\ball(0,R)} \tilde{U}(\bar{\vx})
- \inf_{\bar{\vx}\in\ball\left(0,\dfrac{3}{2}R+\delta\right)\setminus\ball(0,R)} \tilde{U}(\bar{\vx}) \right) \\
&\leq
2\left( \sup_{\bar{\vx}\in\ball\left(0,\dfrac{3}{2}R+\delta\right)} \tilde{U}(\bar{\vx})
- \inf_{\bar{\vx}\in\ball\left(0,\dfrac{3}{2}R+\delta\right)} \tilde{U}(\bar{\vx}) \right).
\end{align*}
Since $U$ is $L$-smooth, $\tilde{U}$ is $\left(L+\dfrac{m}{2}\right)$-smooth and $\nabla\tilde{U}(0)=0$.
Hence
\[
\left| \tilde{U}(\vx) - \tilde{U}(0) - \angles{ \vx, \nabla \bar{U}(0) } \right| \leq \left(\dfrac{L}{2}+\dfrac{m}{4}\right) \lrn{\vx}_2^2.
\]
So for $\forall \lrn{\vx} \leq \left(\dfrac{3}{2}R+\delta\right)$,
\[
\sup_{\bar{\vx}\in\ball\left(\dfrac{3}{2}R+\delta\right)} \tilde{U}(\bar{\vx})
- \inf_{\bar{\vx}\in\ball\left(\dfrac{3}{2}R+\delta\right)} \tilde{U}(\bar{\vx})
\leq 8LR^2.
\]
Hence
\[
\sup\left(\hat{U}(\vx)-U(\vx)\right) - \inf\left(\hat{U}(\vx)-U(\vx)\right) \leq 16 LR^2.
\]
\end{proof}

\begin{lemma}
For function $V$ defined in \eqref{eq:V}, $\forall \vx\in\ball(0,R)$, $\inf_{\lrn{\bar{\vx}}=R} \tilde{U}(\bar{\vx}) \leq V(\vx) \leq \sup_{\lrn{\bar{\vx}}=R} \tilde{U}(\bar{\vx})$.
\label{lemma:V}
\end{lemma}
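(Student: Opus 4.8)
To prove Lemma~\ref{lemma:V} my plan is to treat the two inequalities separately, using only facts already recorded: that the convex extension $V$ of \eqref{eq:V} is convex and finite, hence continuous, on $\R^d$; that $V=\tilde{U}$ on $\Omega=\R^d\setminus\ball(0,R)$ and therefore, by continuity of both sides, $V=\tilde{U}$ on the closure $\overline{\Omega}=\{\vy:\lrn{\vy}\geq R\}$, which contains the sphere $\{\lrn{\bar{\vx}}=R\}$; and that $\tilde{U}=U-\frac{m}{4}\lrn{\vx}^2$ is $\frac{m}{2}$-strongly convex on $\Omega$ (since $U-\frac{m}{2}\lrn{\vx}^2$ is convex on $\Omega$ by Assumption~\ref{A2}), hence coercive on $\overline{\Omega}$.

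For the upper bound I would fix $\vx\in\ball(0,R)$, dispose of the case $\lrn{\vx}=R$ immediately (there $V(\vx)=\tilde{U}(\vx)$), and for $\lrn{\vx}<R$ pick any line through $\vx$ and let $\bar{\vx}_1,\bar{\vx}_2$ be its two intersections with the sphere $\{\lrn{\bar{\vx}}=R\}$; since $\vx$ lies strictly inside, these occur on opposite sides of $\vx$, so $\vx=\lambda\bar{\vx}_1+(1-\lambda)\bar{\vx}_2$ for some $\lambda\in(0,1)$, and convexity of $V$ together with $V=\tilde{U}$ on the sphere gives $V(\vx)\leq\lambda\tilde{U}(\bar{\vx}_1)+(1-\lambda)\tilde{U}(\bar{\vx}_2)\leq\sup_{\lrn{\bar{\vx}}=R}\tilde{U}(\bar{\vx})$.

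For the lower bound the plan has three steps. First, reading off \eqref{eq:V} directly: every admissible configuration obeys $\sum_i\lambda_i\tilde{U}(\vx_i)\geq\sum_i\lambda_i\inf_{\Omega}\tilde{U}=\inf_{\Omega}\tilde{U}$ because each $\vx_i\in\Omega$, so $V\geq\inf_{\Omega}\tilde{U}$ on all of $\R^d$. Second, since $\tilde{U}$ is continuous and coercive on the closed set $\overline{\Omega}$, the infimum $\inf_{\Omega}\tilde{U}=\inf_{\overline{\Omega}}\tilde{U}$ is attained at some $\vz^{*}$ with $\lrn{\vz^{*}}\geq R$; using $V=\tilde{U}$ on $\overline{\Omega}$ and the bound just proved, $V(\vz^{*})=\tilde{U}(\vz^{*})=\inf_{\Omega}\tilde{U}=\inf_{\R^d}V$, so $\vz^{*}$ is a global minimiser of $V$ lying outside the open ball. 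Third, for $\vx\in\ball(0,R)$ with $\lrn{\vx}<R$, along the segment $\{(1-t)\vz^{*}+t\vx:t\in[0,1]\}$ the norm decreases continuously from $\lrn{\vz^{*}}\geq R$ to $\lrn{\vx}<R$, so it meets the sphere at $\vy=(1-t_0)\vz^{*}+t_0\vx$ for some $t_0\in[0,1)$; because the restriction of the convex function $V$ to this segment is a one-dimensional convex function attaining its minimum over the segment at the endpoint $\vz^{*}$, it is non-decreasing in $t$, whence $V(\vx)\geq V(\vy)=\tilde{U}(\vy)\geq\inf_{\lrn{\bar{\vx}}=R}\tilde{U}(\bar{\vx})$ (and $\lrn{\vx}=R$ is again immediate).

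The step I expect to be the real obstacle — and the only place where it matters that $V$ is the object in \eqref{eq:V} rather than an arbitrary convex function — is realising that although $\tilde{U}$, and hence $V$, may well dip below $\inf_{\lrn{\bar{\vx}}=R}\tilde{U}(\bar{\vx})$ somewhere in $\Omega$, this can only happen \emph{outside} the ball: the elementary bound $V\geq\inf_{\Omega}\tilde{U}$ pins the global minimiser of $V$ into $\overline{\Omega}$, after which monotonicity of a convex function along a ray emanating from its minimiser does the rest. A minor point to keep straight throughout is the open-versus-closed distinction between $\Omega$ and $\overline{\Omega}$, which is why I invoke continuity of $V$ to identify it with $\tilde{U}$ on the sphere.
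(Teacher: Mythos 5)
Your proof is correct, and for the lower bound it takes a genuinely different route from the paper's. The paper argues directly on the defining convex combinations in \eqref{eq:V}: it fixes a representation $V(\vx)=\sum_i\lambda_i\tilde{U}(\vx_i)$ and performs an exchange step, sliding each $\vx_j$ with $\lrn{\vx_j}>R$ along the segment toward $\vx$ until it hits the sphere, showing via convexity of $\tilde{U}$ on $\Omega$ that the resulting combination only decreases the value; an induction then leaves a convex combination supported on $\partial\Omega$, which is bounded below by $\inf_{\lrn{\bar{\vx}}=R}\tilde{U}(\bar{\vx})$. You instead pin the global minimizer of $V$ into $\overline{\Omega}$ via the crude bound $V\geq\inf_\Omega\tilde{U}$, and then exploit that a convex function is non-decreasing along any ray emanating from a global minimizer; since the segment from $\vz^*$ to an interior point must cross the sphere, the sphere value sits between $V(\vz^*)$ and $V(\vx)$. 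Your route buys two things: it sidesteps the paper's implicit assumption that the infimum in \eqref{eq:V} is attained by a finite combination (the paper says ``assume $V(\vx)$ is equal to a linear combination,'' which strictly needs an $\varepsilon$-argument), and it replaces the somewhat informal induction with a one-line monotonicity fact. The price is that you need $V$ finite and continuous on $\R^d$ and $\tilde{U}$ coercive on $\overline{\Omega}$ so that $\vz^*$ exists; both do hold here (coercivity follows from the $\tfrac{m}{2}$-strong convexity of $\tilde{U}$ on $\Omega$, which dominates the subtracted $\tfrac{m}{4}\lrn{\vx}^2$), and your explicit handling of the open-versus-closed issue for $\Omega$ is actually cleaner than the paper's assertion that $\partial\Omega\subset\Omega$. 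The upper bounds are essentially the same argument in both versions.
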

\begin{proof}[Proof of Lemma~\ref{lemma:V}]
First, from the definition of $V$ inside $\ball(0,R)$:
\begin{align*}
V(\vx) &=
\inf_{ \substack{
\{\vx_i\} \subset \Omega,
\\ \left\{\lambda_i \big| \sum_i \lambda_i=1\right\}
\\ \text{s.t.}, \sum_i \lambda_i \vx_i = \vx
}
} \left\{\sum_{i=1}^l \lambda_i \tilde{U}(\vx_i)\right\}
\\ & \leq
\inf_{ \substack{
\{\vx_i\} \subset \partial\Omega,
\\ \left\{\lambda_i \big| \sum_i \lambda_i=1\right\}
\\ \text{s.t.}, \sum_i \lambda_i \vx_i = \vx
}
} \left\{\sum_{i=1}^l \lambda_i \tilde{U}(\vx_i)\right\}
\\ & \leq
\sup_{\lrn{\bar{\vx}}=R} \tilde{U}(\bar{\vx}),
\quad \forall \vx\in\ball(0,R),
\end{align*}
where the first inequality follows from the fact that $\partial\Omega\subset\Omega$ and that any $\vx\in\ball(0,R)$ can be represented as a convex combination of elements of $\partial\Omega$.

Next we prove that $\forall \vx\in\ball(0,R)$, $V(\vx)\geq \inf_{\lrn{\bar{\vx}}=R} \tilde{U}(\bar{\vx})$.
Assume that at $\vx\in\ball(0,R)$, $V(\vx)$ is equal to a linear combination of $\{\vx_i\}\subset\Omega=\R^d\setminus\ball(0,R)$: $V(\vx)= \sum_{i} \lambda_i \tilde{U}(\vx_i)$.
We hereby prove that for any $\vx_j\in\{\vx_i\}$, such that $\lrn{\vx_j}> R$, there exists a new convex combination $\{\vx_i\}\bigcup\{\bar{\vx}_j\}\setminus\{\vx_j\}$ with $\lrn{\bar{\vx}_j}=R$, such that
$V(\vx) \geq \tilde\lambda_j \tilde{U}(\bar{\vx}_j) + \sum_{i\neq j} \tilde\lambda_i \tilde{U}(\vx_i)$.

$\exists \lambda_j<\bar\lambda_j<1$, such that $\bar{\vx}_j$ defined below is a linear combination of $\vx$ and $\vx_j$ satisfying $\lrn{\bar{\vx}_j}=R$:
\[
\bar{\vx}_j = \dfrac{1-\bar\lambda_j}{1-\lambda_j} \vx + \dfrac{\bar\lambda_j - \lambda_j}{1-\lambda_j} \vx_j.
\]
Then $\bar{\vx}_j$ is a convex combination of $\{\vx_i\}$:
\[
\bar{\vx}_j = \bar\lambda_j \vx_j + \left( \dfrac{1-\bar\lambda_j}{1-\lambda_j} \right) \left(\sum_{i\neq j} \lambda_i \vx_i \right),
\]
and since $U$ is convex on $\Omega$,
\[
\tilde{U}(\bar{\vx}_j) \leq \bar\lambda_j \tilde{U}(\vx_j) + \left( \dfrac{1-\bar\lambda_j}{1-\lambda_j} \right) \left(\sum_{i\neq j} \lambda_i \tilde{U}(\vx_i) \right).
\]
On the other hand, we can reexpress $\vx$ as a convex combination of $\{\vx_i\}\bigcup\{\bar{\vx}_j\}\setminus\{\vx_j\}$:
\[
\vx = \dfrac{\lambda_j}{\bar\lambda_j} \bar{\vx}_j + \left( 1 - \dfrac{\lambda_j}{\bar\lambda_j}\dfrac{1-\bar\lambda_j}{1-\lambda_j} \right) \left(\sum_{i\neq j} \lambda_i \vx_i \right)
= \tilde\lambda_j \bar{\vx}_j +  \sum_{i\neq j} \tilde\lambda_i \vx_i,
\]
and that
\begin{align*}
V(\vx) = \sum_{i} \lambda_i \tilde{U}(\vx_i)
& \geq \dfrac{\lambda_j}{\bar\lambda_j} \tilde{U}(\bar{\vx}_j)
+ \left( 1 - \dfrac{\lambda_j}{\bar\lambda_j}\dfrac{1-\bar\lambda_j}{1-\lambda_j} \right) \left(\sum_{i\neq j} \lambda_i \tilde{U}(\vx_i) \right)
\\ & =
\tilde\lambda_j \tilde{U}(\bar{\vx}_j) +  \sum_{i\neq j} \tilde\lambda_i \tilde{U}(\vx_i).
\end{align*}
Using an inductive argument, we obtain that $\forall \vx\in\ball(0,R)$, $V(\vx)$ is bigger than or equal to a certain convex combination of $\tilde{U}(\bar{\vx}_i)$, where $\{\bar{\vx}_i\}\subset\partial\Omega$.
Therefore, $\forall \vx\in\ball(0,R)$, $V(\vx)\geq \inf_{\lrn{\bar{\vx}}=R} \tilde{U}(\bar{\vx})$.
\end{proof}

For reader's convenience, we state the Holley-Stroock lemma in the following.
\begin{lemma}[Holley-Stroock]
For probability densities $\p\propto e^{-U}$ and $\hat{\p}\propto e^{-\hat{U}}$, assume $\hat{\p}$ has log-Sobolev constant $\rho_{\hat{U}}$.
Then if $U$ is a bounded perturbation of $\hat{U}$, log-Sobolev constant $\rho_{U}$ for $\p$ satisfy:
\begin{align}
\rho_U \geq
\rho_{\hat{U}} e^{- \left|\sup\left(\hat{U}(\vx)-U(\vx)\right) - \inf\left(\hat{U}(\vx)-U(\vx)\right)\right|}.
\end{align}
\end{lemma}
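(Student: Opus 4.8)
The plan is to transfer the log-Sobolev inequality (LSI) from $\hat\p\propto e^{-\hat U}$ to $\p\propto e^{-U}$ using only two elementary facts: first, that the Radon--Nikodym derivative $\rd\p/\rd\hat\p$ is a fixed positive constant times $e^{\hat U-U}$, hence sandwiched between exponentials of the perturbation; and second, that the entropy functional admits a variational representation whose integrand is pointwise nonnegative, so that these sandwich bounds pass through it. Write $V=\hat U-U$, $\mathrm{osc}(V)=\sup V-\inf V$, and $c=\int e^{V}\,\rd\hat\p$, so that $\rd\p/\rd\hat\p=c^{-1}e^{V}$ and $e^{\inf V}\le c\le e^{\sup V}$; the claimed bound is exactly $\rho_{U}\ge\rho_{\hat U}\,e^{-\mathrm{osc}(V)}$, the absolute value in the statement being redundant.

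I would first recast the LSI as stated in the paper into ``Dirichlet-form'' shape. Setting $g=f^{2}$ gives $\lrn{\nabla g}^{2}/g=4\lrn{\nabla f}^{2}$, and since both sides of the stated inequality are homogeneous of degree one in $g$ (so the normalization $\int g\,\p^{*}=1$ may be dropped), the LSI for a probability density $\pi$ with constant $\rho$ is equivalent to $\mathrm{Ent}_{\pi}(f^{2})\le\frac{2}{\rho}\int\lrn{\nabla f}^{2}\,\rd\pi$ for all smooth $f$, where $\mathrm{Ent}_{\pi}(h)=\int h\ln h\,\rd\pi-\lrp{\int h\,\rd\pi}\ln\lrp{\int h\,\rd\pi}$. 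It thus suffices to prove this Dirichlet-form inequality for $\p$ with $\rho=\rho_{\hat U}e^{-\mathrm{osc}(V)}$, given that it holds for $\hat\p$ with $\rho=\rho_{\hat U}$.

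The key tool is the variational identity $\mathrm{Ent}_{\pi}(h)=\inf_{t>0}\int\lrp{h\ln(h/t)-h+t}\,\rd\pi$: the integrand $\phi(h,t)=h\ln(h/t)-h+t$ is nonnegative for all positive arguments (it is convex in $h$ and vanishes at $h=t$), and the infimum over $t$ is attained at $t=\int h\,\rd\pi$. Fix a smooth $f$ and let $t_{0}=\int f^{2}\,\rd\hat\p$, the minimizer adapted to $\hat\p$. Then, bounding in turn by the variational representation, by $\rd\p\le c^{-1}e^{\sup V}\rd\hat\p$ together with $\phi\ge0$, by optimality of $t_{0}$ under $\hat\p$, by the LSI for $\hat\p$, and finally by $\rd\hat\p=c\,e^{-V}\rd\p\le c\,e^{-\inf V}\rd\p$,
\begin{align*}
\mathrm{Ent}_{\p}(f^{2})
&\le\int\phi(f^{2},t_{0})\,\rd\p
\le\frac{e^{\sup V}}{c}\int\phi(f^{2},t_{0})\,\rd\hat\p
=\frac{e^{\sup V}}{c}\,\mathrm{Ent}_{\hat\p}(f^{2})\\
&\le\frac{e^{\sup V}}{c}\cdot\frac{2}{\rho_{\hat U}}\int\lrn{\nabla f}^{2}\,\rd\hat\p
\le\frac{e^{\sup V}}{c}\cdot\frac{2}{\rho_{\hat U}}\cdot c\,e^{-\inf V}\int\lrn{\nabla f}^{2}\,\rd\p
=\frac{2}{\rho_{\hat U}}e^{\mathrm{osc}(V)}\int\lrn{\nabla f}^{2}\,\rd\p .
\end{align*}
Undoing the reduction of the previous paragraph yields $\rho_{U}\ge\rho_{\hat U}e^{-\mathrm{osc}(V)}$, which is the claim.

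The one point that genuinely requires care is the bookkeeping of the normalizing constant $c=\int e^{V}\,\rd\hat\p$: it enters once as $c^{-1}$ in $\rd\p/\rd\hat\p$ and once as $c$ in $\rd\hat\p/\rd\p$, and an argument that bounds these two occurrences independently would produce $e^{2\,\mathrm{osc}(V)}$ rather than the sharp $e^{\mathrm{osc}(V)}$; keeping $c$ symbolic until the final cancellation is what delivers the single-oscillation factor stated in the lemma (and hence the bound $\rho_{U}\ge\frac{m}{2}e^{-16LR^{2}}$ once combined with Lemma~\ref{lemma:hat_U} and the Bakry--Emery estimate $\rho_{\hat U}\ge m/2$). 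The remaining ingredients---degree-one homogeneity of the two sides of the LSI, nonnegativity of $\phi$ and the location of its minimizer, and a routine density argument restricting to a convenient class of test functions $f$ (e.g.\ smooth, bounded, with bounded gradient)---are standard, and I would only sketch them.
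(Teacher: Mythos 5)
Your proof is correct. The paper does not actually prove this lemma---it states it ``for reader's convenience'' and delegates to the citation of Holley--Stroock---so there is no in-paper argument to compare against; what you have written is the classical perturbation argument: rewrite the LSI in the Dirichlet-form shape $\mathrm{Ent}_\pi(f^2)\le \frac{2}{\rho}\int\lrn{\nabla f}^2\,\rd\pi$, use the variational representation $\mathrm{Ent}_\pi(h)=\inf_{t>0}\int\bigl(h\ln(h/t)-h+t\bigr)\rd\pi$ with nonnegative integrand to transfer the entropy bound across the density ratio $\rd\p/\rd\hat\p=c^{-1}e^{V}$, and let the normalizing constant $c$ cancel against its reciprocal so that only a single factor $e^{\sup V-\inf V}$ survives. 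All the individual steps check out (homogeneity of both sides of the LSI, nonnegativity and minimizer of $\phi$, the two sandwich bounds on the density ratio), and your remark that naive independent bounding of the two occurrences of $c$ would lose a factor and yield $e^{2\,\mathrm{osc}(V)}$ correctly identifies the only delicate point.
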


\subsection{Proof of ULA Convergence Rate (Equation~\eqref{thm:ULA_convergence_main} of Theorem~\ref{thm:MCMC})}
\label{sec:ULA}

\begin{proof}[Proof of Equation~\eqref{thm:ULA_convergence_main} of Theorem~\ref{thm:MCMC}]
\label{proof_of_theorem_ula}
We first quantify the convergence of a stochastic process to a stationary distribution $\target$ via the Kullback-Leibler divergence (KL-divergence), $F(\density)$:
\begin{align*}
  F\parenth{\density} = \int \density(\vx) \ln \parenth{\frac{\p(\vx)}{\p^*(\vx)}} \rd \vx,
\end{align*}
where ${\p(\vx)}$ is absolutely continuous with respect to ${\p^*(\vx)}$; and $F\parenth{\density} = \infty$ otherwise.
Then we use the Pinsker inequality to bound the total variation norm:
\begin{align*}
  \vecnorm{\density - \target}{\text{TV}} \leq \sqrt{2 \kldiv{\density}{\target}} = \sqrt{2F\parenth{\density}},
\end{align*}
for two densities $\density$ and $\target$.

Here we take the process whose convergence is to be determined as a discretized Langevin dynamics:
\begin{align}
  \label{eq:disc_LD}
  \vX_{(k+1)h} = \vX_{kh} - \nabla U(\vX_{kh}) h + \sqrt{2} (B_{(k+1)h} - B_{hk}),
\end{align}
which is equivalent to defining for $kh < t \leq (k+1)h$:
\begin{align}
  \label{eq:disc_SDE}
  \rd \vX_t = - \nabla U(\vX_{kh}) \rd t + \sqrt{2} \rd B_t.
\end{align}
For dynamics within $kh < t \leq (k+1)h$, we have from the Girsanov theorem~\citep{SDE_book} that $\vX_t$ admits a density function $p_t$ with respect to the Lebesgue measure.
This density function can also be represented as $p_t(\vx)=\int p_{kh}(\vy) p(\vx,t|\vy,kh) \rd \vy$, where $p(\vx,t|\vy,kh)$ is
the solution to the following Kolmogorov forward equation in the weak sense~\citep{Pav_book}:
\[
\dfrac{\partial p(\vx,t|\vy,kh)}{\partial t} = \nabla^T \big(\nabla p(\vx,t|\vy,kh) + \nabla U(\vy) p(\vx,t|\vy,kh) \big),
\]
where $p(\vx,t|\vy,kh)$ and its derivatives are defined via $P_t(f) = \int f(\vx) p(\vx,t|\vy,kh) \rd \vx$ as a functional over the space of smooth bounded functions on $\mathbb{R}^d$.
It can be further established~\citep{Xiang_overdamped} that the time derivative of the KL-Divergence along $p_t$ is
\begin{align*}
\ddt F(\p_t)
= - \E{\lin{ \nabla \ln\lrp{\frac{\p_t(\vX_t)}{\p^*(\vX_t)}}, \nabla \ln p_t(\vX_t) + \nabla U(\vX_{kh}) }}, 
\end{align*}
where the expectation is taken with respect to the joint distribution of $\vX_t$ and $\vX_{kh}$.
Hence
\begin{align*}
\ddt F(\p_t)
&= - \E{\lin{ \nabla \ln\lrp{\frac{\p_t(\vX_t)}{\p^*(\vX_t)}}, \nabla \ln\lrp{\frac{\p_t(\vX_t)}{\p^*(\vX_t)}}
+ \left(\nabla U(\vX_{kh}) - \nabla U(\vX_t) \right) }} \\
&= - \E{ \lrn{ \nabla \ln\lrp{\frac{\p_t(\vX_t)}{\p^*(\vX_t)}} }^2}
+ \E{ \lin{ \nabla \ln\lrp{\frac{\p_t(\vX_t)}{\p^*(\vX_t)}}, \nabla U(\vX_t) - \nabla U(\vX_{kh}) } }.
\end{align*}
For the second term, we use Young's inequality:
\begin{align*}
\lefteqn{\E{ \lin{ \nabla \ln\lrp{\frac{\p_t(\vX_t)}{\p^*(\vX_t)}}, \nabla U(\vX_t) - \nabla U(\vX_{kh}) } }}
\\ &\leq
\dfrac12\E{ \lrn{\nabla \ln\lrp{\frac{\p_t(\vX_t)}{\p^*(\vX_t)}} }^2 }
+ \dfrac12\E{ \lrn{\nabla U(\vX_t) - \nabla U(\vX_{kh})}^2 }
\\ &\leq
\dfrac12\E{ \lrn{\nabla \ln\lrp{\frac{\p_t(\vX_t)}{\p^*(\vX_t)}} }^2 }
+ \dfrac{L^2}{2}\E{ \lrn{ \vX_t - \vX_{kh} }^2 }.
\end{align*}
Now we bound $\E{ \lrn{ \vX_t - \vX_{kh} }^2 }$ using Lipschitz smoothness of $U$
(define $\tau=t-kh\in(0,h]$):
\begin{align*}
\lefteqn{\E{ \lrn{ \vX_t - \vX_{kh} }^2 }}
\\ &\leq
\E{ \lrn{ - \nabla U(\vX_{kh}) \tau + \sqrt{2} (B_{(k+1)h} - B_{hk}) }^2 }
\\ &\leq
\Ep{\vx\sim\p_{kh}}{ \lrn{ \nabla U(\vx) }^2 } \tau^2 + 2 d \tau
\\ &\leq
\Ep{\vx\sim\p_{kh}}{ \lrn{ \vx }^2 } L^2 \tau^2 + 2 d \tau.
\end{align*}
Therefore, plugging in the bounds and using the log-Sobolev inequality proved in Proposition~\ref{thm:log_sobolev_constant}, we get for $kh < t \leq (k+1)h$:
\begin{align}
\lefteqn{\ddt F(\p_t)}
\nonumber\\ &\leq
- \dfrac12 \E{ \lrn{ \nabla \ln\lrp{\frac{\p_t(\vX_t)}{\p^*(\vX_t)}} }^2}
+ \dfrac{L^4\tau^2}{2} \Ep{\vx\sim\p_{kh}}{ \lrn{ \vx }^2 } + dL^2\tau
\nonumber\\ &=
- \dfrac12 \Ep{\vx\sim\p_t}{ \lrn{ \nabla \ln\lrp{\frac{\p_t(\vx)}{\p^*(\vx)}} }^2}
+ \dfrac{L^4\tau^2}{2} \Ep{\vx\sim\p_{kh}}{ \lrn{ \vx }^2 } + dL^2\tau
\nonumber\\ &\leq
- \rho_U F(\p_t)
+ \dfrac{L^4\tau^2}{2} \Ep{\vx\sim\p_{kh}}{ \lrn{ \vx }^2 } + dL^2\tau. \label{eq:dt_bound}
\end{align}
From Lemma~\ref{lemma:initial_dist}, we know that $\Ep{\vx\sim\p_0}{\lrn{\vx}_2^2} = \dfrac{d}{L} \leq \dfrac{16d}{\rho_U}\ln\dfrac{2L}{m} + \dfrac{512}{\rho_U} \dfrac{L^2}{m^2} LR^2$.
Combined with Lemma~\ref{lemma:variance}, we obtain that when $h\leq \dfrac{1}{4}\dfrac{\rho_U}{L^2}$, $\Ep{\vx\sim\p_{kh}}{\lrn{\vx}_2^2} \leq \dfrac{16d}{\rho_U}\ln\dfrac{2L}{m} + \dfrac{512}{\rho_U} \dfrac{L^2}{m^2} LR^2$ for any $k\in\mathbb{N}^+$.
Therefore, for $h\leq \dfrac{1}{4}\dfrac{\rho_U}{L^2}$,
\begin{align*}
\ddt F(\p_t) \leq
- \rho_U \left( F(\p_t)
- 8h^2 \dfrac{L^4}{\rho_U^2} d \ln\dfrac{2L}{m} - 256h^2 \rho_U \dfrac{L^4}{\rho_U^2} \dfrac{L^2}{m^2} LR^2 - h \dfrac{L^2}{\rho_U} d \right).
\end{align*}
Using Gronwall's inequality,
\begin{align*}
\lefteqn{F(\p_{(k+1)h})
- 8h^2 \dfrac{L^4}{\rho_U^2} d \ln\dfrac{2L}{m} - 256h^2 \rho_U \dfrac{L^4}{\rho_U^2} \dfrac{L^2}{m^2} LR^2 - h \dfrac{L^2}{\rho_U} d} \\
&\leq
e^{-\rho_U h} \left( F(\p_{kh})
- 8h^2 \dfrac{L^4}{\rho_U^2} d \ln\dfrac{2L}{m} - 256h^2 \dfrac{L^4}{\rho_U^2} \dfrac{L^2}{m^2} LR^2 - h \dfrac{L^2}{\rho_U} d \right).
\end{align*}
Therefore,
\begin{align*}
\lefteqn{F(\p_{kh})
- 8h^2 \dfrac{L^4}{\rho_U^2} d \ln\dfrac{2L}{m} - 256h^2 \dfrac{L^4}{\rho_U^2} \dfrac{L^2}{m^2} LR^2 - h \dfrac{L^2}{\rho_U} d} \\
&\leq
e^{-\rho_U hk} \left( F(\p_0)
- 8h^2 \dfrac{L^4}{\rho_U^2} d \ln\dfrac{2L}{m} - 256h^2 \dfrac{L^4}{\rho_U^2} \dfrac{L^2}{m^2} LR^2 - h \dfrac{L^2}{\rho_U} d \right) \\
&+ 8h^2 \dfrac{L^4}{\rho_U^2} d \ln\dfrac{2L}{m} + 256h^2 \dfrac{L^4}{\rho_U^2} \dfrac{L^2}{m^2} LR^2 + h \dfrac{L^2}{\rho_U} d \\
&\leq e^{-\rho_U hk} F(\p_0)
+ 8h^2 \dfrac{L^4}{\rho_U^2} d \ln\dfrac{2L}{m} + 256h^2 \dfrac{L^4}{\rho_U^2} \dfrac{L^2}{m^2} LR^2 + h \dfrac{L^2}{\rho_U} d.
\end{align*}
To make $F(\p_{kh})<\mixingerror^2$, we take:
\begin{align}
h = \dfrac{\rho_U}{4L^2} \min\left\{
\dfrac{\mixingerror^2}{d},
\sqrt{\dfrac{\mixingerror^2}{2d\ln\dfrac{2L}{m} + 64\dfrac{L^2}{m^2}LR^2}}
\right\}
= \mathcal{O}\left( e^{-16LR^2} \dfrac{m}{L^2} \cdot \min\left\{\dfrac{\mixingerror^2}{d}, \dfrac{m}{L}\dfrac{\epsilon}{\sqrt{LR^2}}\right\} \right). \label{eq:step_size}
\end{align}
Therefore, combining \eqref{eq:step_size} with Lemma~\ref{lemma:initial_dist}, we know that whenever
\begin{align}
k \geq \mathcal{O}\left( e^{32LR^2} \dfrac{L^2}{m^2} \ln\left(\dfrac{F(\p_0)}{\mixingerror^2}\right) \cdot \max\left\{\dfrac{d}{\mixingerror^2}, \dfrac{L}{m}\dfrac{\sqrt{LR^2}}{\epsilon}\right\} \right)
= \mathcal{O}\left( e^{32LR^2} \dfrac{L^2}{m^2} \ln\left(\dfrac{d}{\mixingerror^2}\right) \cdot \max\left\{\dfrac{d}{\mixingerror^2}, \dfrac{L}{m}\dfrac{\sqrt{LR^2}}{\epsilon}\right\} \right),
\end{align}
$F(\density_{kh})<\frac{1}{2}\mixingerror^2$.
Using Pinsker inequality, we obtain
\begin{align*}
  \vecnorm{\density_{kh} - \target}{\text{TV}}\leq \sqrt{2 F(\density_{kh})} \leq \mixingerror.
\end{align*}
Focusing on the dimension dependency, we obtain that the computation complexity scales as
\[
k = \mathcal{O}\left( e^{32LR^2} \dfrac{L^2}{m^2} \dfrac{d}{\mixingerror^2} \ln\left(\dfrac{F(\p_0)}{\mixingerror^2}\right) \right).
\]
\end{proof}

\begin{lemma}
For $\p_t$ following~\eqref{eq:disc_SDE}, if
$\Ep{\vx\sim\p_0}{\lrn{\vx}_2^2} \leq
\dfrac{16d}{\rho_U}\ln\dfrac{2L}{m} + \dfrac{512}{\rho_U} \dfrac{L^2}{m^2} LR^2$,
and
$h\leq \dfrac{1}{4}\dfrac{\rho_U}{L^2}$, then for all $k\in\mathbb{N}^+$,
\[
\Ep{\vx\sim\p_{kh}}{\lrn{\vx}_2^2}\leq
\dfrac{16d}{\rho_U}\ln\dfrac{2L}{m} + \dfrac{512}{\rho_U} \dfrac{L^2}{m^2} LR^2.
\]
\label{lemma:variance}
\end{lemma}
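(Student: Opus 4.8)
The plan is to prove the bound by induction on $k$, reducing everything to a one-step recursion for the second moment $M_k := \Ep{\vx\sim\p_{kh}}{\lrn{\vx}^2}$ at the grid points (the only points at which the lemma asserts a bound, and the only ones used in the proof of Theorem~\ref{thm:MCMC}). Integrating \eqref{eq:disc_SDE} over $[kh,(k+1)h]$ gives $\vX_{(k+1)h} = \vX_{kh} - h\nabla U(\vX_{kh}) + \xi$ with $\xi := \sqrt2\,(B_{(k+1)h}-B_{kh})\sim\NORMAL(0,2h\mI_\dims)$ independent of $\vX_{kh}$. Expanding $\lrn{\vX_{(k+1)h}}^2$ and taking expectations, the cross term $\Exs\langle \vX_{kh}-h\nabla U(\vX_{kh}),\xi\rangle$ vanishes, so
\begin{align*}
M_{k+1} = \Exs\lrn{\vX_{kh}-h\nabla U(\vX_{kh})}^2 + 2\dims h.
\end{align*}

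The key ingredient is a dissipativity (drift) estimate: there is a constant $b=\mathcal{O}(L^2R^2/m)$ such that $\langle\nabla U(\vx),\vx\rangle \ge \tfrac{m}{2}\lrn{\vx}^2 - b$ for all $\vx\in\R^d$. For $\lrn{\vx}>R$ I would obtain this from Assumption~\ref{A2}: letting $\bar\vx$ be the radial projection of $\vx$ onto the sphere of radius $R$, the segment $[\bar\vx,\vx]$ lies in $\R^d\setminus\ball(0,R)$, so the generalized convexity of $V=U-\tfrac m2\lrn{\cdot}^2$ there (the restriction of $V$ to that segment is a convex function of the line parameter) gives monotonicity of $\nabla V$ along the segment, whence $\langle\nabla U(\vx),\vx\rangle \ge \langle\nabla U(\bar\vx),\vx\rangle + m\big(\lrn{\vx}^2 - R\lrn{\vx}\big)$. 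Bounding $\lrn{\nabla U(\bar\vx)} = \lrn{\nabla U(\bar\vx)-\nabla U(0)} \le LR$ by Assumptions~\ref{A1} and~\ref{A3}, and applying Young's inequality to $2LR\lrn{\vx}$, yields the claim with $b=2L^2R^2/m$. For $\lrn{\vx}\le R$ we have $\langle\nabla U(\vx),\vx\rangle \ge -\lrn{\nabla U(\vx)}\lrn{\vx} \ge -L\lrn{\vx}^2 \ge -LR^2 \ge \tfrac m2\lrn{\vx}^2 - b$ since $b=\Omega(LR^2)$.

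Combining this with $\lrn{\nabla U(\vx)}^2\le L^2\lrn{\vx}^2$,
\begin{align*}
\lrn{\vx-h\nabla U(\vx)}^2 \le \big(1-mh+h^2L^2\big)\lrn{\vx}^2 + 2bh \le \Big(1-\tfrac{mh}{2}\Big)\lrn{\vx}^2 + 2bh,
\end{align*}
where the final step uses $h^2L^2\le\tfrac{mh}{2}$, which holds under the hypothesis $h\le\tfrac14\rho_U L^{-2}$ (the log-Sobolev constant for this class of measures obeys $\rho_U=\mathcal{O}(m)$, e.g.\ by testing the log-Sobolev inequality against linear functions). Hence $M_{k+1}\le(1-\tfrac{mh}{2})M_k + 2(b+\dims)h$. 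This recursion has fixed point $\tfrac{4(b+\dims)}{m} = \tfrac{4\dims}{m}+\tfrac{8L^2R^2}{m^2}$, and a short computation — using $\rho_U=\mathcal{O}(m)$, $L\ge 2m$, and the initial-moment identity $\Ep{\vx\sim\p_0}{\lrn{\vx}^2}=\dims/L$ — shows the asserted bound $B := \tfrac{16\dims}{\rho_U}\ln\tfrac{2L}{m}+\tfrac{512}{\rho_U}\tfrac{L^3R^2}{m^2}$ dominates it, i.e.\ $(1-\tfrac{mh}{2})B + 2(b+\dims)h\le B$. Thus $B$ is a super-solution, and $M_0\le B$ forces $M_k\le B$ for every $k$ by induction.

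I expect the dissipativity estimate to be the main obstacle: because $U$ is strongly convex only on the \emph{nonconvex} set $\R^d\setminus\ball(0,R)$, the textbook implication ``$m$-strongly convex $\Rightarrow$ $m$-dissipative'' is unavailable, and one must exploit the radial structure together with Assumption~\ref{A2} (alternatively, one can read off essentially the same drift bound from the globally $\tfrac m2$-strongly convex surrogate $\hat U$ of Lemma~\ref{lemma:hat_U}, which coincides with $U$ outside $\ball(0,\tfrac32 R)$). Verifying that the additive constant $b$ is genuinely absorbed by the stated bound — so that the slack between $\rho_U$ and $m$, the $\ln(2L/m)$ factor, and the numerical constants cover it — is routine but should be done with care.
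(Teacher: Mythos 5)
Your overall strategy -- a direct one-step recursion for the second moment driven by a dissipativity estimate $\angles{\nabla U(\vx),\vx}\geq \tfrac m2\lrn{\vx}^2-b$ -- is genuinely different from the paper's proof, which never touches $\nabla U$ directly: the paper bounds $\Ep{\vx\sim\p_t}{\lrn{\vx}^2}$ by $2\Ep{\vx\sim\p^*}{\lrn{\vx}^2}+2W_2^2(\p_t,\p^*)$ via an optimal coupling, controls $W_2^2$ by $\tfrac{2}{\rho_U}\kldiv{\p_t}{\p^*}$ using Talagrand's inequality, and then propagates a uniform bound on the KL divergence by Gronwall using the differential inequality \eqref{eq:dt_bound}. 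Your dissipativity derivation from the radial structure of Assumption~\ref{A2} is itself sound (and is essentially the same computation the paper uses elsewhere, in the proof of \eqref{eq:KL_init}).

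However, there is a genuine gap at the step where you absorb $h^2L^2$ into $\tfrac{mh}{2}$: this requires $h\leq \tfrac{m}{2L^2}$, and you deduce it from the hypothesis $h\leq\tfrac14\rho_U L^{-2}$ via the claim $\rho_U=\mathcal{O}(m)$. That claim is false for this class of targets. The assumptions only force $U$ to be \emph{at least} $m$-strongly convex outside $\ball(0,R)$; e.g.\ $U(\vx)=\tfrac L2\lrn{\vx}^2$ satisfies Assumptions~\ref{A1}--\ref{A3} for every $m\leq L/2$ and every $R$, yet its log-Sobolev constant is exactly $L$, so testing against linear functions only yields $\rho_U\leq L$, not $\rho_U\leq 2m$. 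Consequently the hypothesis permits $h$ as large as $\tfrac{1}{4L}$, for which $1-mh+h^2L^2>1$ when $L\gg m$ and your recursion is not a contraction; the same issue resurfaces in the super-solution check, since $\tfrac{16d}{\rho_U}\ln\tfrac{2L}{m}$ with $\rho_U$ of order $L$ no longer dominates the fixed point $\tfrac{4d}{m}$ of your recursion. The defect is in the estimate, not the lemma: in the example above the true drift is $-2Lh\lrn{\vx}^2$, which your worst-case bound $-mh\lrn{\vx}^2$ discards. The paper's route is immune to this because the rate of exponential decay of the KL divergence is itself $\rho_U$, so the $\rho_U$ in the step-size constraint cancels against the $\rho_U$ in the decay and the argument works whether $\rho_U$ is of order $m$ or of order $L$. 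Your proof can be repaired by strengthening the hypothesis to $h\leq\tfrac{m}{8L^2}e^{-16LR^2}$ (i.e.\ replacing $\rho_U$ by its lower bound from Proposition~\ref{thm:log_sobolev_constant}), but as stated it does not establish the lemma under the given step-size condition.
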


\begin{lemma}
For
\[\displaystyle
\p_0(\vx) = \left(\dfrac{L}{2\pi}\right)^{d/2} \exp\left(-\dfrac{L}{2}||\vx||^2\right)
\]
and $\p^*$ following Assumptions~\ref{A1}--\ref{A3},
\begin{align}
F(\p_0) = {\kldiv{\p_0}{\p^*}} = \int \p_0(\vx)\ln\left(\dfrac{\p_0(\vx)}{\p^*(\vx)}\right)\rd \vx \leq
\dfrac{d}{2}\ln\dfrac{2L}{m} + 32 \dfrac{L^2}{m^2} L R^2; \label{eq:KL_init}
\end{align}
\begin{align}
\Ep{\vx\sim\p_0}{\lrn{\vx}_2^2} = \dfrac{d}{L}; \label{eq:var_init}
\end{align}
and
\begin{align}
\Ep{\vx\sim\p^*}{\lrn{\vx}_2^2} \leq \dfrac{4d}{\rho_U}\ln\dfrac{2L}{m}
+ \dfrac{128}{\rho_U}\dfrac{L^2}{m^2} L R^2. \label{eq:var_end}
\end{align}
\label{lemma:initial_dist}
\end{lemma}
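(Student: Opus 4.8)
The plan is to prove the three inequalities in turn: \eqref{eq:var_init} is immediate, \eqref{eq:KL_init} is the crux, and \eqref{eq:var_end} follows from \eqref{eq:KL_init} together with the log-Sobolev inequality already established in Proposition~\ref{thm:log_sobolev_constant}. Since $\p_0=\NORMAL(0,\frac1L\Ind_\dims)$, equation~\eqref{eq:var_init} is just the trace of its covariance, $\Ep{\p_0}{\lrn{\vx}^2}=\tr(\frac1L\Ind_\dims)=\frac\dims L$.

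For the KL bound, I would write, with $Z_U=\int e^{-U}$,
\[
F(\p_0)=\Ep{\p_0}{U(\vx)-\tfrac L2\lrn{\vx}^2}+\ln Z_U-\tfrac\dims2\ln\tfrac{2\pi}{L}.
\]
The first term is at most $U(0)$, since $L$-smoothness and $\nabla U(0)=0$ (Assumption~\ref{A3}) give the descent-lemma bound $U(\vx)\le U(0)+\frac L2\lrn{\vx}^2$. To bound $\ln Z_U$ I need a lower bound on $U$, and here I would invoke the smooth approximation $\hat U$ of Lemma~\ref{lemma:hat_U}: it is $\frac m2$-strongly convex on $\R^\dims$ with $\sup(\hat U-U)-\inf(\hat U-U)\le16LR^2$, and from its construction it coincides with $U$ outside $\ball(0,\frac32 R)$, so $\hat U-U$ vanishes there and hence $\sup(\hat U-U)\le16LR^2$. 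Strong convexity gives $\hat U(\vx)\ge\min\hat U+\frac m4\lrn{\vx-\hat\vx^*}^2$ at the minimizer $\hat\vx^*$, hence $U(\vx)\ge\min\hat U-16LR^2+\frac m4\lrn{\vx-\hat\vx^*}^2$ and $Z_U\le e^{16LR^2-\min\hat U}(\frac{4\pi}{m})^{\dims/2}$. Substituting yields
\[
F(\p_0)\le\bigl(U(0)-\min\hat U\bigr)+16LR^2+\tfrac\dims2\ln\tfrac{2L}{m}.
\]
It then remains to show $U(0)-\min\hat U=O(L^2R^2/m)$, which I would do with a one-dimensional argument along rays: for a unit vector $e$ the map $s\mapsto U(se)$ is $m$-strongly convex on $[R,\infty)$, and its derivative at $s=R$ has magnitude at most $LR$ (again by $L$-smoothness and $\nabla U(0)=0$); combined with $U(Re)\ge U(0)-\frac L2R^2$ this forces $U(\vx)\ge U(0)-O(L^2R^2/m)$ for $\lrn{\vx}\ge R$, and feeding this through the explicit form of $\hat U$ on $\ball(0,\frac32 R)$ bounds $\min\hat U$ from below. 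Tracking constants, using $m\le L$, delivers $\frac\dims2\ln\frac{2L}{m}+32\frac{L^2}{m^2}LR^2$.

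For the tail-moment bound \eqref{eq:var_end}, I would use that $\p^*$ satisfies a log-Sobolev inequality with constant $\rho_U\ge\frac m2 e^{-16LR^2}$ (Proposition~\ref{thm:log_sobolev_constant}), which implies both the Poincar\'e inequality and Talagrand's $T_2$ transportation inequality with the same constant. Poincar\'e applied to the coordinate functions gives $\Ep{\p^*}{\lrn{\vx-\Ep{\p^*}{\vx}}^2}=\sum_i\mathrm{Var}_{\p^*}(x_i)\le\dims/\rho_U$, while $T_2$ gives $W_2^2(\p_0,\p^*)\le\frac2{\rho_U}\kldiv{\p_0}{\p^*}=\frac2{\rho_U}F(\p_0)$; since $\Ep{\p_0}{\vx}=0$, coupling optimally and applying Jensen bounds $\lrn{\Ep{\p^*}{\vx}}\le W_2(\p_0,\p^*)$. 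Adding,
\[
\Ep{\p^*}{\lrn{\vx}^2}=\lrn{\Ep{\p^*}{\vx}}^2+\Ep{\p^*}{\lrn{\vx-\Ep{\p^*}{\vx}}^2}\le\frac2{\rho_U}F(\p_0)+\frac\dims{\rho_U},
\]
and substituting \eqref{eq:KL_init} and simplifying (using $\ln\frac{2L}{m}\ge\ln2$ to absorb the residual $\dims/\rho_U$) gives $\frac{4\dims}{\rho_U}\ln\frac{2L}{m}+\frac{128}{\rho_U}\frac{L^2}{m^2}LR^2$.

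I expect the only genuinely non-routine step to be the global lower bound on $U$, equivalently on $\min\hat U$, relative to $U(0)$: Lipschitz smoothness alone does not force $U$ to grow, so one must exploit the strong convexity, which holds only on the exterior $\R^\dims\setminus\ball(0,R)$, and the care lies in pushing that growth through the sphere of radius $R$ when the reference point $0$ lies inside the nonconvex core. The ray argument above is the mechanism, and it is precisely here that the $L^2R^2/m$ overhead --- which after Holley--Stroock becomes the $e^{O(LR^2)}$ factor appearing in $\rho_U$ and hence in all the mixing-time bounds --- is paid. Everything else (the entropy computation, the two descent-lemma inequalities, and the standard implications LSI $\Rightarrow$ Poincar\'e and LSI $\Rightarrow T_2$) is routine.
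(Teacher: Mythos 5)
Your proposal is correct, but it reaches \eqref{eq:KL_init} and \eqref{eq:var_end} by routes that differ from the paper's. For the KL bound, the paper never touches the surrogate $\hat U$: it proves directly that $\bar U(\vx)=U(\vx)-U(0)$ satisfies $\bar U(\vx)\geq \frac m4\lrn{\vx}^2-32\frac{L^2}{m^2}LR^2$ globally, by a ray argument from the point $\vy=\frac{R}{\lrn{\vx}}\vx$ on the sphere of radius $R$ (where $\langle\nabla U(\vy),\vy\rangle\geq -LR^2$) out to $\lrn{\vx}\geq\frac{8L}{m}R$, using smoothness inside that larger ball; the Gaussian integral is then taken against this explicit quadratic minorant centered at the origin. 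You instead bound $Z_U$ by a Gaussian integral around the minimizer of $\hat U$ from Lemma~\ref{lemma:hat_U} and pay separately for $U(0)-\min\hat U$ via your own ray argument. The mechanism is the same in both cases --- strong convexity pushed through the sphere $\lrn{\vx}=R$ at a cost of $O(L^2R^2/m)$, which is where the $e^{O(LR^2)}$ overhead originates, exactly as you say --- but note that your step $\sup(\hat U-U)\leq 16LR^2$ uses the fact that $\hat U$ coincides with $U$ outside $\ball(0,\frac32R)$, which is true of the construction but is not part of the statement of Lemma~\ref{lemma:hat_U} (a harmless constant shift of $\hat U$ also fixes this). For \eqref{eq:var_end}, the paper applies Young's inequality to an optimal coupling of $\p^*$ with $\p_0$, giving $\Ep{\vx\sim\p^*}{\lrn{\vx}^2}\leq \frac{2d}{L}+2W_2^2(\p_0,\p^*)$, and then Talagrand; you decompose into squared mean plus variance, control the variance by the Poincar\'e inequality implied by the LSI and the mean by $W_2$ plus Talagrand. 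The two are of equal strength (your $d/\rho_U$ replaces the paper's $2d/L$ and is absorbed the same way), and your constants check out within the stated bounds. The only caveat worth recording is that the final constant $32\frac{L^2}{m^2}LR^2$ in your accounting relies on $m\leq L$ (and comfortably so once $L\geq 2m$, which the paper assumes where it matters); this is the same level of constant slop present in the paper's own derivation.
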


\subsubsection{Supporting Proofs for Equation~\eqref{thm:ULA_convergence_main} of Theorem~\ref{thm:MCMC}: Bounded Variance and $F(p_0)$}
\begin{proof}[Proof of Lemma~\ref{lemma:variance}]
Consider proof by induction.
First assume that for some $k\geq0$, for all $t=0, h, \cdots, kh$,
$\Ep{\vx\sim\p_t}{\lrn{\vx}_2^2}\leq \dfrac{16d}{\rho_U}\ln\dfrac{2L}{m} + \dfrac{512}{\rho_U} \dfrac{L^2}{m^2} LR^2$.
Then consider bounding $\Ep{\vx\sim\p_t}{\lrn{\vx}_2^2}$ for $kh < t \leq (k+1)h$, where $\p_t$ follows~\eqref{eq:disc_SDE}:
\begin{align}
\rd \vX_t = - \nabla U(\vX_{kh}) \rd t + \sqrt{2} \rd B_t.
\end{align}

To bound $\Ep{\vx_t\sim\p_t}{\lrn{\vx_t}_2^2}$, we choose an auxiliary random variable $\vx^*$ following the law of $\p^*$ and couples optimally with $x_t\sim\p_t$: $\left(\vx_t,\vx^*\right)\sim\gamma\in\Gamma_{opt}\left(\p_t,\p^*\right)$.
Then using Young's inequality, and the bound for $\Ep{\vx^*\sim\p^*}{\lrn{\vx^*}_2^2}$
\begin{align}
\Ep{\vx_t\sim\p_t}{\lrn{\vx_t}_2^2}
&=
\Ep{\left(\vx_t,\vx^*\right)\sim\gamma}{\lrn{\vx^* + (\vx_t-\vx^*)}_2^2} \nonumber\\
&\leq
2\Ep{\vx^*\sim\p^*}{\lrn{\vx^*}_2^2} + 2\Ep{\left(\vx_t,\vx^*\right)\sim\gamma}{\lrn{\vx_t-\vx^*}_2^2} \nonumber\\
&=
\dfrac{8d}{\rho_U}\ln\dfrac{2L}{m}
+ \dfrac{256}{\rho_U}\dfrac{L^2}{m^2} L R^2
+ 2W_2^2\left(\p_t,\p^*\right). \label{eq:var_to_W2}
\end{align}
Using the generalized Talagrand inequality~\citep{Villani_Talagrand} for Lipschitz smooth $\p^*$ with log-Sobolev constant $\rho_U$,
\begin{align}
W_2^2\left(\p_t,\p^*\right) \leq \dfrac{2}{\rho_U} \kldiv{\p_t}{\p^*}. \label{eq:Talagrand}
\end{align}
On the other hand, we know from \eqref{eq:dt_bound} that for $F(\p_t) = \kldiv{\p_t}{\p^*}$ (denote $\tau = t-kh$),
\begin{align*}
\ddt F(\p_t) \leq
- \rho_U F(\p_t)
+ \dfrac{L^4\tau^2}{2} \Ep{\vx\sim\p_{kh}}{ \lrn{ \vx }^2 } + dL^2\tau.
\end{align*}
Plugging in the step size $\tau \leq h \leq \dfrac{1}{4}\dfrac{\rho_U}{L^2}$ and the inductive assumption that $\Ep{\vx\sim\p_{kh}}{ \lrn{ \vx }^2 } \leq \dfrac{16d}{\rho_U}\ln\dfrac{2L}{m} + \dfrac{512}{\rho_U} \dfrac{L^2}{m^2} LR^2$, we obtain:
\begin{align*}
\ddt F(\p_t) \leq
- \rho_U F(\p_t)
+ \dfrac{\rho_U}{4} d \ln\dfrac{2L}{m} + 8 \rho_U \dfrac{L^2}{m^2} LR^2 + \dfrac{\rho_U}{4} d.
\end{align*}
Without loss of generality, assume that $L \geq 2m$.
Then
\begin{align*}
\ddt F(\p_t) \leq
- \rho_U \left( F(\p_t)
- \dfrac{d}{2} \ln\dfrac{2L}{m} - 8 \dfrac{L^2}{m^2} LR^2 \right).
\end{align*}
Using Gronwall's inequality, we obtain:
\begin{align*}
F(\p_{(k+1)h}) - \dfrac{d}{2} \ln\dfrac{2L}{m} - 8 \dfrac{L^2}{m^2} LR^2
&\leq
e^{-\rho_U h} \left( F(\p_{kh})
- \dfrac{d}{2} \ln\dfrac{2L}{m} - 8 \dfrac{L^2}{m^2} LR^2 \right) \\
&\leq
e^{-\rho_U h(k+1)} \left( F(\p_0)
- \dfrac{d}{2} \ln\dfrac{2L}{m} - 8 \dfrac{L^2}{m^2} LR^2 \right) \\
&\leq
e^{-\rho_U h(k+1)} F(\p_0) \\
&\leq
F(\p_0).
\end{align*}
Therefore, combining with~\eqref{eq:KL_init} in Lemma~\ref{lemma:initial_dist},
\begin{align}
F(\p_{(k+1)h}) &\leq
F(\p_0)
+ \dfrac{d}{2}\ln\dfrac{2L}{m} + 8 \dfrac{L^2}{m^2} LR^2 \nonumber\\
&\leq
d\ln\dfrac{2L}{m} + 40 \dfrac{L^2}{m^2} L R^2. \label{eq:KL_uniform_bound}
\end{align}
Plugging~\eqref{eq:KL_uniform_bound} into~\eqref{eq:var_to_W2} and~\eqref{eq:Talagrand},
we finish the inductive proof:
\[
\Ep{\vx\sim\p_{(k+1)h}}{\lrn{\vx}_2^2}\leq \dfrac{16d}{\rho_U}\ln\dfrac{2L}{m} + \dfrac{512}{\rho_U} \dfrac{L^2}{m^2} LR^2.
\]

\end{proof}

\begin{proof}[Proof of \eqref{eq:KL_init} of Lemma~\ref{lemma:initial_dist}]

We want to bound $F(\p_0) = \displaystyle\int \p_0(\vx)\ln\left(\dfrac{\p_0(\vx)}{\p^*(\vx)}\right)\rd \vx$, where $\p^*(\vx)\propto e^{-U(\vx)}$ and $\p_0 = \left(\displaystyle\dfrac{L}{2\pi}\right)^{d/2} \exp\left(-\dfrac{L}{2}||\vx||^2\right)$.
First define $\bar{U}(\vx) = U(\vx) - U(0)$.
Then
\[
\p^*(\vx) = \exp\left(-\bar{U}(\vx)\right) \bigg/ {\int \exp\left(-\bar{U}(\vx)\right) \rd \vx}.
\]
By Assumptions~\ref{A1} and~\ref{A3}, $\bar{U}(\vx)\leq\dfrac{L}{2}\|\vx\|^2$, $\forall \vx\in\mathbb{R}^d$.
We also prove in the following that $\bar{U}(\vx)\geq\dfrac{m}{4}\|\vx\|^2$, $\forall \vx\in\mathbb{R}^d\setminus\ball\left(0,\dfrac{8L}{m}R\right)$; and $\bar{U}(\vx)\geq-\dfrac{L}{2}\|\vx\|^2$, $\forall \vx\in\ball\left(0,\dfrac{8L}{m}R\right)$.

The latter case follows directly from Assumptions~\ref{A1} and~\ref{A3}.
For the former case, $\|\vx\|\geq \dfrac{8L}{m}R$.
Then define $\vy=\dfrac{R}{\|\vx\|} \vx$.
Since $\|\vy\| = R$,
\[
\angles{\nabla U(\vy), \vy} \geq -LR^2.
\]
Because any convex combination of $\vx$ and $\vy$ belongs to the set $\mathbb{R}^d\setminus\ball(0,R)$, where $U$ is $m$-strongly convex,
\begin{align*}
U(\vx) - U(\vy)
&\geq
\angles{\nabla U(\vy), \vx-\vy} + \dfrac{m}{2}\|\vx-\vy\|^2 \\
&=
\left(\dfrac{\|\vx\|}{R}-1\right)\angles{\nabla U(\vy), \vy} + \dfrac{m}{2} \left(\dfrac{\|\vx\|}{R}-1\right)^2 \\
&\geq - \left(\dfrac{\|\vx\|}{R}-1\right) LR^2 + \dfrac{m}{2} \left(\dfrac{\|\vx\|}{R}-1\right)^2 \\
&\geq \dfrac{m}{4} \|\vx\|^2 + LR^2,
\end{align*}
since $\|\vx\|\geq \dfrac{8L}{m}R$.
Again, using Assumptions~\ref{A1} and~\ref{A3}, $U(\vy)\geq-\dfrac{L}{2}R^2$, which leads to the result that
$U(\vx)\geq \dfrac{m}{4} \|\vx\|^2$.

Therefore, $U(\vx)\geq \dfrac{m}{4} \|\vx\|^2 - 32 \dfrac{L^2}{m^2} L R^2$ and
\begin{align*}
-\ln p^*(\vx) &= \bar{U}(\vx) + \ln{\int \exp\left(-\bar{U}(\vx)\right) \rd \vx} \\
&\leq \dfrac{L}{2}\|\vx\|^2 + \ln \int \exp\left(-\dfrac{m}{4}\|\vx\|^2 + 32 \dfrac{L^2}{m^2} L R^2\right) \rd \vx \\
&= \dfrac{L}{2}\|\vx\|^2 + \dfrac{d}{2}\ln\dfrac{4\pi}{m} + 32 \dfrac{L^2}{m^2} L R^2.
\end{align*}
Hence
\begin{align*}
- \int \p_0(\vx) \ln p^*(\vx) \rd \vx
\leq 32 \dfrac{L^2}{m^2} L R^2 + \dfrac{d}{2}\ln\dfrac{4\pi}{m} + \dfrac{d}{2}.
\end{align*}
We can also calculate that
\begin{align*}
\int \p_0(\vx) \ln p_0(\vx) \rd \vx
= -\dfrac{d}{2}\ln\dfrac{2\pi}{L} - \dfrac{d}{2}.
\end{align*}
Therefore,
\begin{align*}
F(\p_0) &= \int \p_0(\vx) \ln p_0(\vx) \rd \vx - \int \p_0(\vx) \ln p^*(\vx) \rd \vx \\
&\leq 32 \dfrac{L^2}{m^2} L R^2 + \dfrac{d}{2}\ln\dfrac{2L}{m}.
\end{align*}

\end{proof}

\begin{proof}[Proof of \eqref{eq:var_init} of Lemma~\ref{lemma:initial_dist}]
It is straightforward to calculate that $\Ep{\p_0}{\|\vx\|_2^2} = {\rm trace}\left(\dfrac{1}{L}\mI\right) = \dfrac{d}{L}$.

It is worth noting that the choice of the initial condition $\p_0$ can be flexible.
For example, if we choose $\vx_0\sim\mathcal{N}\left(0,\dfrac{1}{m}\mI\right)$, then $F(\p_0)\leq 32 \dfrac{L^2}{m^2} L R^2 + \dfrac{d}{2}\cdot\dfrac{L}{m}$ and $\Ep{\p_0}{\|\vx\|_2^2} = \dfrac{d}{m} \leq 48 R^2 + \dfrac{4d}{m}$ (resulting in merely an extra $\log\dfrac{L}{m}$ term in the computation complexity).
\end{proof}

\begin{proof}[Proof of \eqref{eq:var_end} of Lemma~\ref{lemma:initial_dist}]
To bound $\Ep{\vx^*\sim\p^*}{\lrn{\vx^*}_2^2}$, we choose an auxiliary random variable $\vx_0$ following the law of $\p_0$ and couples optimally with $x^*\sim\p^*$: $\left(\vx^*,\vx_0\right)\sim\gamma\in\Gamma_{opt}\left(\p^*,\p_0\right)$.
Then using Young's inequality,
\begin{align*}
\Ep{\vx^*\sim\p^*}{\lrn{\vx^*}_2^2}
&=
\Ep{\left(\vx^*,\vx_0\right)\sim\gamma}{\lrn{\vx_0 + (\vx^*-\vx_0)}_2^2} \\
&\leq
2\Ep{\vx_0\sim\p_0}{\lrn{\vx_0}_2^2} + 2\Ep{\left(\vx^*,\vx_0\right)\sim\gamma}{\lrn{\vx^*-\vx_0}_2^2} \\
&=
\dfrac{2d}{L} + 2W_2^2\left(\p^*,\p_0\right).
\end{align*}
Using the generalized Talagrand inequality~\citep{Villani_Talagrand} for Lipschitz smooth $\p^*$ with log-Sobolev constant $\rho_U$,
\[
W_2^2\left(\p^*,\p_0\right) \leq \dfrac{2}{\rho_U} \kldiv{p_0}{p^*}.
\]
On the other hand, we know from \eqref{eq:KL_init} that
\[
\kldiv{p_0}{p^*} \leq \dfrac{d}{2}\ln\dfrac{2L}{m} + 32 \dfrac{L^2}{m^2} L R^2.
\]
Therefore,
\begin{align*}
\Ep{\vx^*\sim\p^*}{\lrn{\vx^*}_2^2}
&\leq
\dfrac{2d}{L}
+ \dfrac{2d}{\rho_U}\ln\dfrac{2L}{m} + \dfrac{128}{\rho_U} \dfrac{L^2}{m^2} L R^2 \\
&\leq \dfrac{4d}{\rho_U}\ln\dfrac{2L}{m}
+ \dfrac{128}{\rho_U}\dfrac{L^2}{m^2} L R^2.
\end{align*}
\end{proof}


\subsection{Proof of MALA Convergence Rate (Equation~\eqref{thm:MALA_convergence_main} of Theorem~\ref{thm:MCMC})}
\label{sec:MALA}

\begin{proof}[Proof of Equation~\eqref{thm:MALA_convergence_main}] 
\label{proof_of_theorem_mala}
Our proof of Equation~\ref{thm:MALA_convergence_main} is based on the following two lemmas. The first one characterizes the convergence of MALA under a warm starting distribution. The second one shows that the initial distribution $\NORMAL\parenth{0, \frac{1}{2\smoothness} \Ind_\dims}$ is $\mathcal{O}(e^\dims)$-warm. Let us first define the warm start.
\begin{definition}[Warm start]
    \label{def:warm_sart}
    Given a scalar $\theta > 0$, an initial distribution with density $\initial$ is said to be $\warmparam$-warm with respect to the stationary distribution with density $\target$ if
    \begin{align*}
       \forall \vx \in \real^\dims, \frac{\initial(\vx)}{\target(\vx)} \leq \warmparam.
    \end{align*}
\end{definition}
\begin{lemma}
    \label{lem:MALA_convergence_warm_init}
    Assume $\p^*(\vx) \propto e^{- U(\vx)}$ where $U$ satisfies the local nonconvexity Assumptions~\ref{A1}--\ref{A3}. Then the MALA with a $\warmparam$-warm distribution with density $\initial$ and error tolerance $\mixingerror \in (0, 1)$, satisfies
    \begin{align}
         \mixingtime(\mixingerror, \initial) \leq \mathcal{O}\parenth{ \frac{e^{32\smoothness R^2}}{\scparam}\cdot \ln\parenth{\frac{2\warmparam}{\mixingerror}} \cdot \max\braces{r\parenth{\frac{2\warmparam}{\mixingerror}} \condition^{3/2}\dims^{1/2}, \condition \dims}}.
     \end{align}
\end{lemma}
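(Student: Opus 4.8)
The plan is to run the conductance argument of Lov\'asz--Simonovits for the (lazy) MALA chain, following the strongly-log-concave template of \citep{dwivedi2018log} and feeding in the local-nonconvexity degradation through the log-Sobolev bound of Proposition~\ref{thm:log_sobolev_constant}. The lazy MALA chain is reversible with stationary law $\target$, so the standard warm-start conductance bound yields $\mixingtime(\mixingerror,\initial)\lesssim\conductance^{-2}\ln\parenth{2\warmparam/\mixingerror}$, where $\conductance$ is its conductance. Everything then reduces to lower-bounding $\conductance$ for a well-chosen step size $\step$.

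I would bound $\conductance$ by combining two ingredients. First, \emph{isoperimetry of the target}: Proposition~\ref{thm:log_sobolev_constant} gives a log-Sobolev inequality for $\target$ with constant $\logsobconstU\ge\tfrac{\scparam}{2}e^{-16LR^2}$, which implies (via the Gaussian-isoperimetric consequence of LSI \citep{Ledoux_log_Sobolev_diffusion}) an isoperimetric inequality with constant of order $\sqrt{\logsobconstU}$, and this is not degraded when we restrict to a convex truncation ball. Second, \emph{one-step overlap}: for $\vx,\vy$ with $\vecnorm{\vx-\vy}{2}\lesssim\sqrt{\step}$ one wants $\vecnorm{\transition_{\vx}-\transition_{\vy}}{\text{TV}}\le 1-c$ for an absolute $c>0$, which needs (a) the Gaussian proposals centred at $\vx-\step\nabla U(\vx)$ and $\vy-\step\nabla U(\vy)$ to overlap and (b) the Metropolis acceptance probability to be $\Omega(1)$ with $\target$-probability close to one. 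Both follow after restricting to a ball $\truncball=\ball\parenth{0,\,\mathcal{O}\parenth{\radius(2\warmparam/\mixingerror)\sqrt{\dims/\logsobconstU}}}$ whose complement carries $\target$-mass at most $\mixingerror/(2\warmparam)$, hence $\initial$-mass at most $\mixingerror/2$, using the variance bound $\Ep{\target}{\vecnorm{\vx}{2}^2}=\mathcal{O}\parenth{\tfrac{\dims}{\logsobconstU}\ln\tfrac{2L}{\scparam}+\tfrac{\condition^2 LR^2}{\logsobconstU}}$ of Lemma~\ref{lemma:initial_dist}. Inside $\truncball$, smoothness and $\nabla U(0)=0$ give $\vecnorm{\nabla U(\vx)}{2}\le L\vecnorm{\vx}{2}=\mathcal{O}\parenth{L\,\radius(\cdot)\sqrt{\dims/\logsobconstU}}$, and the log-ratio of forward/backward proposal densities controlling (b) is bounded by this smoothness plus the $\mathcal{O}(\sqrt{\step\dims})$ fluctuation of the one-step displacement; this forces $\step$ to be of order $e^{-16LR^2}\min\braces{\parenth{L\,\radius(\cdot)\,\condition^{1/2}\dims^{1/2}}^{-1},\,\parenth{L\dims}^{-1}}$ up to constants, the first branch coming from the gradient bound on $\truncball$ and the second from the displacement term.

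Combining, $\conductance\gtrsim\sqrt{\logsobconstU\,\step}$ up to absolute constants, so $\conductance^{-2}\lesssim\parenth{\logsobconstU\,\step}^{-1}$ and
\begin{align*}
\mixingtime(\mixingerror,\initial)\;\lesssim\;\frac{1}{\logsobconstU\,\step}\,\ln\parenth{\frac{2\warmparam}{\mixingerror}}\;\lesssim\;\frac{e^{32LR^2}}{\scparam}\,\ln\parenth{\frac{2\warmparam}{\mixingerror}}\,\max\braces{\radius\parenth{\tfrac{2\warmparam}{\mixingerror}}\condition^{3/2}\dims^{1/2},\;\condition\dims}.
\end{align*}
The factor $e^{32LR^2}$ is the product of one $\logsobconstU^{-1}=e^{\mathcal{O}(LR^2)}$ from the convergence rate and one from $\step^{-1}$ (the truncation radius being inflated by $\logsobconstU^{-1/2}=e^{\mathcal{O}(LR^2)}$ relative to the strongly-log-concave case), and the two arguments of the $\max$ are inherited directly from the two branches of the step-size constraint.

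The main obstacle is part (b) of the overlap estimate inside the nonconvex core $\ball(0,R)$: outside $\ball(0,R)$ the potential is $\scparam$-strongly convex and the acceptance-probability arguments of \citep{dwivedi2018log} transfer essentially verbatim, whereas inside one has only $L$-smoothness and no sign-definite Hessian, so the proposal log-ratio and one-step drift $\step\nabla U$ must be controlled using $\nabla U(0)=0$ and smoothness alone, and one must check that the loss this incurs is no worse than the multiplicative $e^{\mathcal{O}(LR^2)}$ already budgeted via Holley--Stroock in Proposition~\ref{thm:log_sobolev_constant} and via the enlarged radius of $\truncball$. A secondary point is verifying that the log-Sobolev isoperimetry keeps its $\sqrt{\logsobconstU}$ scaling after restriction to $\truncball$, which is where the choice $\radius(\cdot)=\Theta\parenth{\sqrt{\ln(2\warmparam/\mixingerror)}}$ is used so that the $\target$-mass lost outside $\truncball$ costs only a constant factor in $\conductance$.
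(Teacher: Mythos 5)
Your overall architecture is the same as the paper's: Lov\'asz--Simonovits $s$-conductance with a warm start, a truncation ball $\truncball_s$ of radius $\mathcal{O}\bigl(e^{8LR^2}\sqrt{d/m}\,\ln^{1/2}(2\warmparam/\mixingerror)\bigr)$ carrying all but $s=\mixingerror/(2\warmparam)$ of the mass, a one-step overlap estimate that splits into proposal overlap plus an acceptance-rate bound proved with smoothness only (no convexity), and a step-size constraint whose two branches produce the two arguments of the $\max$. That part is right, and you correctly identify the acceptance-ratio control inside $\ball(0,R)$ as the place where convexity must be replaced by $L$-smoothness and $\nabla U(0)=0$.

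The genuine gap is your first ingredient: you assert that the log-Sobolev inequality yields an isoperimetric inequality with constant of order $\sqrt{\logsobconstU}$ ``via the Gaussian-isoperimetric consequence of LSI.'' That implication (Bakry--Ledoux/Buser type) requires a curvature lower bound and is exactly what fails here: inside $\ball(0,R)$ the Hessian of $U$ can be as negative as $-L\Ind$, and the Holley--Stroock perturbation controls only the LSI constant, not the curvature. The paper flags this explicitly and instead derives the weaker, curvature-free partition isoperimetric inequality $\stationary(C)\geq\logsobconstU\, d(A,B)\,\stationary(A)\stationary(B)$ from the Poincar\'e inequality, i.e.\ with constant \emph{linear} in $\logsobconstU$. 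Consequently the conductance is $\conductance_s\gtrsim\logsobconstU\sqrt{\step}$, not $\sqrt{\logsobconstU\step}$, and the entire $e^{32LR^2}$ in the final bound comes from $\logsobconstU^{-2}$ through the squared conductance. Your accounting only lands on $e^{32LR^2}$ because you insert an unjustified extra factor $e^{-16LR^2}$ into the step size; the actual step-size constraints (from the gradient bound on $\truncball_s$ and from the $\chi^2$ fluctuation of the displacement) give $\step=\mathcal{O}\bigl(\min\{(L\,\radius\,\condition^{1/2}\dims^{1/2})^{-1},(L\dims)^{-1}\}\bigr)$ with the only exponential factor hidden inside $\radius(\cdot)$. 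So you should replace the $\sqrt{\logsobconstU}$ isoperimetry with the linear Poincar\'e-based one and redo the bookkeeping; the headline bound then follows without the ad hoc exponential in $\step$.
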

\begin{lemma}
    \label{lem:normal_is_warm_start}
    The initial distribution $\NORMAL\parenth{0, \frac{1}{\smoothness}\Ind_\dims}$ is $e^{16\smoothness R^2} \parenth{2\condition }^{\dims/2}$-warm with respect to the target distribution $\target$.
\end{lemma}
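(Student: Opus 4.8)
The goal is the pointwise density bound asserted by Definition~\ref{def:warm_sart}: $\initial(\vx)/\target(\vx) \le e^{16\smoothness R^2}(2\condition)^{\dims/2}$ for every $\vx$, where $\initial = \NORMAL(0,\tfrac{1}{\smoothness}\Ind_\dims)$ and $\target \propto e^{-U}$. Writing the ratio out,
\[
\frac{\initial(\vx)}{\target(\vx)} = \left(\frac{\smoothness}{2\pi}\right)^{\dims/2} e^{\,U(\vx)-\frac{\smoothness}{2}\lrn{\vx}^2}\int_{\R^\dims} e^{-U(\vy)}\,\rd\vy .
\]
The first step is to strip off the $\vx$-dependence using smoothness: by Assumptions~\ref{A1} and~\ref{A3} ($U$ is $\smoothness$-smooth and $\nabla U(0)=0$) we have $U(\vx)-U(0)\le\frac{\smoothness}{2}\lrn{\vx}^2$, so $e^{\,U(\vx)-\frac{\smoothness}{2}\lrn{\vx}^2}\le e^{U(0)}$ uniformly in $\vx$. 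Hence it suffices to prove $\int e^{-(U(\vy)-U(0))}\,\rd\vy \le e^{16\smoothness R^2}(4\pi/\scparam)^{\dims/2}$, since $(\smoothness/2\pi)^{\dims/2}(4\pi/\scparam)^{\dims/2}=(2\condition)^{\dims/2}$.

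Second, I would estimate the remaining integral through the strongly convex smooth surrogate $\hat U$ of Lemma~\ref{lemma:hat_U}. Its construction keeps $\hat U\equiv U$ outside $\ball(0,\tfrac{3}{2} R)$ and changes $U$ by at most $8\smoothness R^2$ pointwise on that ball, so $U(\vy)\ge \hat U(\vy)-8\smoothness R^2$ for all $\vy$ and therefore $\int e^{-U(\vy)}\,\rd\vy \le e^{8\smoothness R^2}\int e^{-\hat U(\vy)}\,\rd\vy$. Since $\hat U$ is $\tfrac{\scparam}{2}$-strongly convex with minimizer $\hat{\vx}^*$, we have $\hat U(\vy)-\min\hat U\ge\tfrac{\scparam}{4}\lrn{\vy-\hat{\vx}^*}^2$ — the same Gaussian comparison used in the Bakry--Emery step of Proposition~\ref{thm:log_sobolev_constant} — and integrating gives $\int e^{-\hat U(\vy)}\,\rd\vy \le e^{-\min\hat U}(4\pi/\scparam)^{\dims/2}$. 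Moreover, $|\hat U-U|\le 8\smoothness R^2$ also forces $\min\hat U \ge \min U - 8\smoothness R^2$. Combining, $\int e^{-(U(\vy)-U(0))}\,\rd\vy = e^{U(0)}\int e^{-U(\vy)}\,\rd\vy \le e^{\,U(0)+8\smoothness R^2}e^{-\min\hat U}(4\pi/\scparam)^{\dims/2}\le e^{\,(U(0)-\min U) + 16\smoothness R^2}(4\pi/\scparam)^{\dims/2}$, which is exactly the bound we want provided $U(0)=\min U$.

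The main obstacle is precisely this last identification: the clean constant $16$ hinges on the prescribed critical point $0$ being (essentially) the global minimizer of $U$, which is part of the structural hypotheses on $U$; what one actually needs is $U(0)-\min U = \mathcal{O}(\smoothness R^2)$. Absent that, the growth estimate in the proof of Lemma~\ref{lemma:initial_dist} only places the minimizer of $U$ within radius $\mathcal{O}(\condition R)$ of the origin, which would replace $e^{16\smoothness R^2}$ by $e^{\mathcal{O}(\condition^2\smoothness R^2)}$; this is harmless downstream, since the warmth $\warmparam$ enters the mixing-time estimate of Lemma~\ref{lem:MALA_convergence_warm_init} only through $\ln\tfrac{2\warmparam}{\mixingerror}$, but it is the point that requires care. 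Everything else — the smoothness cancellation of the $\vx$-dependence, the transfer of the partition function through $\hat U$, and the Gaussian volume identity — is routine bookkeeping.
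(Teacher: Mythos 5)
Your proposal follows essentially the same route as the paper's own (very terse) proof: write the density ratio, cancel the $\vx$-dependence via $U(\vx)-U(0)\le\frac{\smoothness}{2}\lrn{\vx}^2$ from Assumptions~\ref{A1} and~\ref{A3}, and bound the partition function $\int e^{-U}$ by comparing $U$ to the $\tfrac{\scparam}{2}$-strongly convex surrogate $\hat U$ of Lemma~\ref{lemma:hat_U} and evaluating the resulting Gaussian integral. The caveat you flag about identifying $U(0)$ with $\min U$ is genuine but is equally implicit in the paper's one-line argument (indeed the paper's own proof of Lemma~\ref{lemma:initial_dist} only establishes $U(\vx)\ge\tfrac{\scparam}{4}\lrn{\vx}^2-32\condition^2\smoothness R^2$), and as you note it only perturbs the warmth parameter inside a logarithm, so it is harmless.
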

Equation~\ref{thm:MALA_convergence_main} directly follows by combining Lemma~\ref{lem:MALA_convergence_warm_init} and Lemma~\ref{lem:normal_is_warm_start}.
\end{proof}

\begin{proof}[Proof of Lemma~\ref{lem:MALA_convergence_warm_init}]
At a high level, the proof closely follows the proof of Theorem 1 in~\citep{dwivedi2018log}. We replace their Lemma 1 with Lemma~\ref{lem:distribution_distance} to establish that for an appropriate choice of stepsize, the MALA updates have large overlap inside the high probability ball. Lemma~\ref{lem:flow_lower_bound} allows us to obtain a lower bound on the conductance. Finally applying the Lovasz lemma, we obtain convergence guarantees.

 In order to start the proof, we first introduce conductance related notions for a general Markov chain. Consider an ergodic Markov chain defined by a transition operator $\transition$, and let $\stationary$ denote its stationary distribution. We define the ergodic flow from $A$ to its complement $A^c$
\begin{align*}
    \flow (A) = \int_A \transition_{\mathbf{u}} (A^c) \target(\mathbf{u}) \rd\mathbf{u}.
\end{align*}
For each scalar $s \in (0, 1/2)$, we define the $s$-conductance
\begin{align*}
    \conductance_s = \inf_{\stationary(A) \in (s, 1-s)} \frac{\flow (A)}{\min\braces{\stationary(A) -s, \stationary(A^c) - s}}.
\end{align*}
The notation $\transition_\mathbf{u}$ is the shorthand for the distribution $\transition(\delta_{\mathbf{u}})$ obtained by applying the transition operator to a dirac distribution concentrated on $\mathbf{u}$.

For a Markov chain with $\theta$-warm start initial distribution $\Pi_0$, having $s$-conductance $\conductance_s$, Lov\a'asz and Simonovits~\citep{lovasz1993random} proved its convergence
\begin{align}
    \label{eq:lovasz_convegence_s_conductance}
    \vecnorm{\transition^k(\Pi_0) - \stationary}{\text{TV}} \leq \theta s + \theta \parenth{1 - \frac{\conductance_s^2}{2}}^k \leq \theta s + \theta e^{-k\conductance_s^2/2} \text{ for any } s \in (0, \frac{1}{2}).
\end{align}
We will apply this result for $s$ small by cutting off the probability mass outside a Euclidean ball. We define radius
\begin{align}
    \label{eq:radius_constant_r}
    \radius(s) = 2 + \sqrt{2} e^{8\smoothness R^2} \ln^{0.5}\parenth{\dims/s} + 7R/\sqrt{\dims/\scparam},
\end{align}
and the Euclidean ball
\begin{align}
    \label{eq:truncball}
    \truncball_s = \mathbb{B}\parenth{0, \radius(s) \sqrt{\frac{\dims}{\scparam}}}.
\end{align}
We define the appropriate stepsize.
\begin{align}
    \stepbound(s, \tvscalar) &= \min\bigg\{
\frac{\sqrt\tvscalar}{8\sqrt{2} \radius(s)}
\frac{\sqrt{\scparam}}{\smoothness\sqrt{\dims\smoothness}},\quad
\frac{\tvscalar}{96\scalar_\tvscalar}\frac{1}{\smoothness\dims}, \quad
\frac{\tvscalar^{2/3}}{26(\scalar_\tvscalar \radius^2(s) )^{1/3}}
\frac{1}{\smoothness}\parenth{\frac{\scparam}{\smoothness\dims^2}}^{1/3}
\bigg\},\\
\text{where}\quad \scalar_\tvscalar &= 1+2\sqrt{\log(16/\tvscalar)} +
2\log(16/\tvscalar).
\end{align}
Applying Lemma~\ref{lem:distribution_distance} with $\step = \stepbound(s, \gamma)$, for $\vx, \vy \in \truncball_s$ and $\vecnorm{\vx-\vy}{2} \leq \Delta = \gamma \sqrt{h} / 4$, we obtain
\begin{align}
    \label{eq:concrete_overlap}
    \vecnorm{\transition_\vx-\transition_\vy}{\text{TV}}
    & \leq \vecnorm{\transition_\vx-\proposal_\vx}{\text{TV}} + \vecnorm{\proposal_\vx-\proposal_\vy}{\text{TV}} + \vecnorm{\proposal_\vy-\transition_\vy}{\text{TV}} \notag \\
    & \leq \frac{\sqrt{2}\gamma }{4} + \frac{\gamma}{8} +  \frac{\sqrt{2}\gamma }{4} \notag \\
    & \leq \gamma.
\end{align}
Applying Lemma~\ref{lem:flow_lower_bound} with $\convexset = \truncball_s$ in combination with Lemma~\ref{lem:mass_truncball}, Lemma~\ref{lem:log_sobolev_implies_isoperimetric} and Lemma~\ref{lem:distribution_distance}, we obtain that for stepsize $h \in (0, \stepbound(s, \gamma)]$, the $s$-conductance is lower bounded.
\begin{align*}
    \conductance_s \geq \frac{\parenth{1-\gamma} \cdot \parenth{1-s}^2 \cdot \gamma\sqrt{\step} \cdot \logsobconstU}{256}.
\end{align*}
Now we can conclude by making appropriate choice of $s$ and $\gamma$. Letting $s =\dfrac{\mixingerror}{2\warmparam}$ and $\gamma = \dfrac{1}{2}$, we obtain
\begin{align*}
    \conductance_s \geq \mathcal{O}(\logsobconstU\cdot\sqrt{\step}).
\end{align*}

Plugging this conductance expression into the result of Lov\a'asz and Simonovits \eqref{eq:lovasz_convegence_s_conductance}, with $\Pi_0$ the distribution with density $\initial$ and $\stationary$ the stationary distribution with density $\target$, we obtain that
\begin{align*}
    \vecnorm{\transition^k(\Pi_0) - \stationary}{\text{TV}} \leq \warmparam \frac{\mixingerror}{2 \warmparam} + \warmparam e^{-k\conductance_s^2/2} \leq \mixingerror, \text{ for } k \geq \mathcal{O}\parenth{\frac{1}{\logsobconstU^2\step} \cdot \ln\parenth{\frac{2\warmparam}{\mixingerror}}},
\end{align*}
where
\begin{align*}
    \logsobconstU \geq \frac{\scparam}{2} e^{-16 \smoothness R^2}, \text{ and } \quad h = \mathcal{O}\parenth{\min\braces{\frac{1}{\smoothness \cdot \radius(\frac{2\warmparam}{\mixingerror})\kappa^{1/2} \dims^{1/2}},  \frac{1}{\smoothness \dims}}}.
\end{align*}
This concludes the proof of this lemma.
\end{proof}

\begin{lemma}
    \label{lem:mass_truncball}
    For any $s \in (0, \frac{1}{2})$, we have $\stationary(\truncball_s) \geq 1 - s$.
\end{lemma}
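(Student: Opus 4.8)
The plan is to read Lemma~\ref{lem:mass_truncball} as a concentration statement for $\target$: since $\stationary$ has density $\target \propto e^{-U}$, the claim $\stationary(\truncball_s) \ge 1-s$ is equivalent to the tail bound $\Prob_{\vx \sim \target}\parenth{\vecnorm{\vx}{2} > \radius(s)\sqrt{\dims/\scparam}} \le s$. I would extract this from the log-Sobolev inequality already in hand: Proposition~\ref{thm:log_sobolev_constant} gives $\logsobconstU \ge \tfrac{\scparam}{2} e^{-16\smoothness R^2}$, and by the Herbst (entropy) argument a log-Sobolev inequality with constant $\logsobconstU$ yields sub-Gaussian concentration of every $1$-Lipschitz function: for all $1$-Lipschitz $g:\real^\dims \to \real$ and all $t \ge 0$,
\[
\Prob_{\vx \sim \target}\parenth{g(\vx) \ge \Exs_{\target}[g] + t} \le \exp\parenth{-\tfrac12 \logsobconstU t^2}.
\]

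First I would apply this with $g(\vx) = \vecnorm{\vx}{2}$, which is $1$-Lipschitz, and pick the deviation $t = \sqrt{2\logsobconstU^{-1}\ln(1/s)}$ so that the right-hand side equals $s$. This reduces the lemma to the deterministic inequality $\Exs_{\target}\vecnorm{\vx}{2} + \sqrt{2\logsobconstU^{-1}\ln(1/s)} \le \radius(s)\sqrt{\dims/\scparam}$. I would then bound $\Exs_{\target}\vecnorm{\vx}{2} \le \parenth{\Exs_{\target}\vecnorm{\vx}{2}^2}^{1/2}$ and substitute the second-moment estimate \eqref{eq:var_end} of Lemma~\ref{lemma:initial_dist} together with $\logsobconstU^{-1} \le 2\scparam^{-1} e^{16\smoothness R^2}$; the deviation term becomes $2 e^{8\smoothness R^2}\sqrt{\scparam^{-1}\ln(1/s)}$ and the leading part of the moment becomes a constant times $e^{8\smoothness R^2}\sqrt{\dims\scparam^{-1}\ln(2\condition)}$, both of which are covered by the middle summand $\sqrt2\, e^{8\smoothness R^2}\ln^{0.5}(\dims/s)\sqrt{\dims/\scparam}$ of $\radius(s)\sqrt{\dims/\scparam}$ (using $\ln(\dims/s) \ge \max\{\ln(1/s),\ln(2\condition)\}$, which is the relevant regime since in the MALA application $s$ is exponentially small), while the residual $\condition$- and $R$-dependent pieces are matched to the lower-order summands $2\sqrt{\dims/\scparam}$ and $7R$.

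The main obstacle is precisely this last bookkeeping step: one must verify that the numerical radius constant $\radius(s)$ fixed in \eqref{eq:radius_constant_r} was chosen large enough, in particular that its half-exponent $8\smoothness R^2$ matches the $e^{-16\smoothness R^2}$ of the log-Sobolev lower bound and that the $\condition$-factors entering $\Exs_{\target}\vecnorm{\vx}{2}$ through the off-center bias of $\target$ — which is strongly log-concave only outside $\ball(0,R)$ — genuinely fit inside the additive slack $2\sqrt{\dims/\scparam} + 7R$. An alternative route that avoids the second-moment lemma is to compare $\target$ directly with the Gaussian $\NORMAL(0, 2\scparam^{-1}\mI)$ via the sandwich $\tfrac{\scparam}{4}\vecnorm{\vx}{2}^2 - C \le U(\vx) - U(0) \le \tfrac{\smoothness}{2}\vecnorm{\vx}{2}^2$ established inside the proof of \eqref{eq:KL_init}: this gives $\Prob_{\target}(\vecnorm{\vx}{2} \ge \rho) \le (2\condition)^{\dims/2} e^{C}\, \Prob_{Z \sim \NORMAL(0, 2\scparam^{-1}\mI)}(\vecnorm{Z}{2} \ge \rho)$, and a Laurent--Massart $\chi^2_\dims$ tail bound with $\rho^2 \gtrsim \scparam^{-1}\parenth{\dims\ln(2\condition) + \ln(1/s)}$ closes it; I would still prefer the log-Sobolev route since it reuses machinery already deployed for ULA and MALA.
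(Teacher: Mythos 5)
Your proposal is correct and follows essentially the same route as the paper: both derive sub-Gaussian concentration of $\target$ from the log-Sobolev constant of Proposition~\ref{thm:log_sobolev_constant} via the Herbst argument, and then center the ball using a second-moment bound on $\vecnorm{\vx}{2}$ under $\target$. The only difference is cosmetic --- the paper applies the concentration inequality to each coordinate projection and union-bounds over the $\dims$ coordinates (which is where the $\dims$ inside $\ln^{0.5}(\dims/s)$ in \eqref{eq:radius_constant_r} comes from), whereas you apply it once to the $1$-Lipschitz function $\vx \mapsto \vecnorm{\vx}{2}$ and push the dimension dependence into $\Exs_\target\brackets{\vecnorm{\vx}{2}}$; the constant bookkeeping you flag as the remaining obstacle is no looser than the paper's own.
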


\begin{lemma}
    \label{lem:log_sobolev_implies_isoperimetric}
    If the density $\target$ satisfies the log-Sobolev inequality with constant $\logsobconstU$, then it also satisfies the following isoperimetric inequality with constant $\logsobconstU$: For any $A$ and $B$ open disjoint subsets of $\real^\dims$, $C = \real^\dims \setminus (A \cup B)$, $\stationary$ being the probability measure for $\target$, we have
    \begin{align}
        \label{eq:isoperimetric_ineq}
        \stationary(A) \geq \logsobconstU \cdot d(A, B) \stationary(A) \stationary(B),
    \end{align}
    where $d(A, B) = \min_{\vx \in A, y \in B} \vecnorm{\vx - \vy}{2}$, is the set distance with Euclidean metric on $\real^\dims$.
\end{lemma}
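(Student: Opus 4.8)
The plan is to obtain \eqref{eq:isoperimetric_ineq} from the log-Sobolev inequality in two stages: first self-improve the log-Sobolev inequality into a \emph{linear} (Cheeger-type) isoperimetric inequality for single sets, and then bootstrap that statement to the three-set form involving $d(A,B)$ by a set-enlargement argument. (Here and below the left-hand side of \eqref{eq:isoperimetric_ineq} is read as $\stationary(C)$.) It is essential to use the log-Sobolev inequality itself and not merely the Poincar\'e inequality it implies: applying the Poincar\'e inequality to the $\tfrac1{d(A,B)}$-Lipschitz test function $f(\vx)=\min\{1,\,d(\vx,A)/d(A,B)\}$ together with the elementary bound $\mathrm{Var}_\target(f)\ge\stationary(A)\stationary(B)$ only yields $\stationary(C)\gtrsim\logsobconstU\,d(A,B)^2\,\stationary(A)\stationary(B)$, which is too weak precisely in the small-separation regime ($d(A,B)\asymp\sqrt{\step}\to 0$) that the MALA analysis requires, and in our non-log-concave setting the Poincar\'e inequality does not upgrade to a Cheeger inequality on its own.

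For the first stage I would invoke the classical fact that a log-Sobolev inequality self-improves to a surface-measure isoperimetric inequality~\citep{Ledoux_log_Sobolev_diffusion}: if $\target$ satisfies the log-Sobolev inequality with constant $\logsobconstU$, then for every Borel set $S$ with $\stationary(S)\le\tfrac12$ one has $\stationary^+(S)\ge c\,\sqrt{\logsobconstU}\,\stationary(S)\sqrt{\log\!\bigl(1/\stationary(S)\bigr)}\ge c\,\sqrt{\logsobconstU}\,\stationary(S)$, where $\stationary^+$ denotes the Minkowski boundary measure and $c$ is a universal constant; equivalently $\stationary^+(S)\ge c\sqrt{\logsobconstU}\,\min\{\stationary(S),\stationary(S^c)\}$ for all $S$. (A fully self-contained alternative is the $L^1$ form of the same implication: $\Exs_\target|f-\mathrm{med}(f)|\le(c\sqrt{\logsobconstU})^{-1}\Exs_\target\|\nabla f\|$ for every locally Lipschitz $f$, which applied to the $f$ above gives the conclusion directly since $\Exs_\target|f-\mathrm{med}(f)|\ge\min\{\stationary(A),\stationary(B)\}$ and $\|\nabla f\|\le\tfrac1{d(A,B)}\ind{C}$.)

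For the second stage, fix disjoint open $A,B$ with $r=d(A,B)>0$, and for $t\in[0,r)$ set $A_t=\{\vx:\ d(\vx,A)\le t\}$. Then $A\subseteq\bar A\subseteq A_t$ and $A_t\cap B=\emptyset$, so $A_t\subseteq A\cup C$ and hence $\stationary(A_t)\le\stationary(A)+\stationary(C)$; moreover $\stationary(A_t)\ge\stationary(A)$ and $B\subseteq A_t^c$, so $\min\{\stationary(A_t),\stationary(A_t^c)\}\ge\min\{\stationary(A),\stationary(B)\}$. Combining the co-area formula with the Cheeger bound from the first stage,
\begin{align*}
\stationary(A)+\stationary(C)\ \ge\ \stationary(A_t)\ \ge\ \stationary(A)+\int_0^{t}\stationary^+(A_s)\,\rd s\ \ge\ \stationary(A)+c\,\sqrt{\logsobconstU}\;t\;\min\{\stationary(A),\stationary(B)\},
\end{align*}
and letting $t\uparrow r$ gives $\stationary(C)\ge c\sqrt{\logsobconstU}\,d(A,B)\min\{\stationary(A),\stationary(B)\}$. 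Finally, since $\stationary(A),\stationary(B)\le1$ we have $\min\{\stationary(A),\stationary(B)\}\ge\stationary(A)\stationary(B)$, and in our setting $\logsobconstU=\tfrac{m}{2}e^{-16\smoothness R^2}$ is small enough that $c\sqrt{\logsobconstU}\ge\logsobconstU$, yielding $\stationary(C)\ge\logsobconstU\,d(A,B)\,\stationary(A)\stationary(B)$.

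The main obstacle is the first stage — correctly invoking (or reproving) the log-Sobolev$\,\Rightarrow\,$linear-isoperimetry implication rather than settling for the weaker Poincar\'e consequence. A secondary point requiring care is the bookkeeping of the universal constant so that the final isoperimetric constant is at least $\logsobconstU$ (as stated) rather than the sharper $\sqrt{\logsobconstU}$; this is where the smallness of $\logsobconstU$ in our problem is used, and one could alternatively state the lemma with the $\sqrt{\logsobconstU}$ constant without affecting the rest of the analysis.
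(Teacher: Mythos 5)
Your overall architecture --- (i) upgrade the log-Sobolev inequality to a surface-measure, Cheeger-type isoperimetric inequality, then (ii) pass to the partition form by set enlargement and the coarea formula --- is genuinely different from, and in one respect more careful than, the paper's argument: the paper derives the Poincar\'e inequality from the log-Sobolev inequality and then asserts that taking indicator-function limits in Poincar\'e yields a \emph{linear} isoperimetric constant $\Psi \geq \logsobconstU$, a step that does not go through (the Dirichlet energy of indicator approximations diverges, and, as you correctly note, the honest Poincar\'e route only yields the quadratic-in-$d(A,B)$ bound, which is too weak at separation $d(A,B)\asymp\sqrt{\step}$). Your stage (ii) is correct and matches the standard equivalence the paper cites. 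Your observation that the stated constant $\logsobconstU$ has the wrong dimensions and should really be $c\sqrt{\logsobconstU}$ is also well taken --- the inequality with constant $\logsobconstU$ already fails for a narrow Gaussian $\NORMAL(0,\sigma^2\Ind_\dims)$ with small $\sigma$ --- so the lemma is only salvageable in the $\sqrt{\logsobconstU}$ form (or under an extra hypothesis forcing $\logsobconstU \lesssim 1$), exactly as you say.

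The genuine gap is in stage (i). The implication ``log-Sobolev with constant $\rho$ implies $\stationary^+(S) \geq c\sqrt{\rho}\,\stationary(S)\sqrt{\log(1/\stationary(S))}$'' is \emph{not} a curvature-free fact: it is the Bakry--Ledoux theorem and requires a nonnegative (or bounded-below) Bakry--\'Emery curvature condition, i.e., essentially log-concavity. Without curvature it is false: already in one dimension one can build densities whose log-Sobolev constant is bounded below by a universal constant but whose Cheeger constant is arbitrarily small, by inserting near the median a thin well whose width is much smaller than the reciprocal of its depth. Since the entire point of this paper is that $\target$ is \emph{not} log-concave, you cannot invoke this implication as a black box (the same objection underlies the paper's own footnote about Buser's inequality). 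The repair is to route through Lemma~\ref{lemma:hat_U}: write $U = \hat U + (U - \hat U)$ with $\hat U$ being $\scparam/2$-strongly convex and the oscillation of $U - \hat U$ at most $16\smoothness R^2$; apply Bakry--Ledoux to $e^{-\hat U}$ to get a linear isoperimetric inequality with constant $c\sqrt{\scparam}$, and then note that a bounded perturbation of the log-density changes both $\stationary^+(\partial S)$ and $\stationary(S)$ by factors at most $e^{16\smoothness R^2}$, so the isoperimetric constant degrades by at most $e^{\mathcal{O}(\smoothness R^2)}$. With that substitution your stage (ii) goes through verbatim and yields the partition inequality with constant $c\, e^{-\mathcal{O}(\smoothness R^2)}\sqrt{\scparam}$, which is what the conductance argument actually needs.
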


\begin{lemma}
    \label{lem:flow_lower_bound}
    Let $\convexset$ be a convex set such that $\vecnorm{\transition_\vx-\transition_\vy}{\text{TV}} \leq \gamma$ whenever $\vx, \vy \in \convexset$ and $\vecnorm{\vx-\vy}{2} \leq \Delta$. $\stationary$ satisfies the partition type isoperimetric inequality \eqref{eq:isoperimetric_ineq} with constant $\rho$. Then for any measurable partition $A_1$ and $A_2$ of $\real^\dims$, we have
    \begin{align}
         \label{eq:flow_lower_bound}
         \int_{A_1} \transition_{\mathbf{u}}(A_2) \target(\mathbf{u}) \rd\mathbf{u} \geq \frac{\rho}{8} \min\braces{1, \frac{\Delta\cdot (1-\gamma) \cdot \stationary^2(\convexset)}{8}} \min\braces{\stationary(A_1 \cap \convexset), \stationary(A_2 \cap \convexset)}.
     \end{align}
\end{lemma}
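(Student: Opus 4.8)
The plan is to run the standard conductance lower bound of Lov\'asz and Simonovits: convert the pointwise overlap property of $\transition$ on $\convexset$ into a lower bound on the ergodic flow by pitting it against the isoperimetric inequality~\eqref{eq:isoperimetric_ineq}.

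First I would use stationarity of $\target$: since $\stationary(A_2) = \int \transition_{\mathbf{u}}(A_2)\target(\mathbf{u})\rd\mathbf{u}$ and also $\stationary(A_2) = \int_{A_2}\target(\mathbf{u})\rd\mathbf{u}$, splitting the first integral over the partition $A_1\cup A_2$ gives $\int_{A_1}\transition_{\mathbf{u}}(A_2)\target\rd\mathbf{u} = \int_{A_2}\transition_{\mathbf{u}}(A_1)\target\rd\mathbf{u} =: \Psi$, so it suffices to lower-bound this common value, and we may assume $\stationary(A_1\cap\convexset)\le\stationary(A_2\cap\convexset)$. Next I would introduce the ``deep interior'' sets $A_1' = \braces{\mathbf{u}\in A_1\cap\convexset : \transition_{\mathbf{u}}(A_2) < \tfrac{1-\gamma}{2}}$ and $A_2' = \braces{\mathbf{u}\in A_2\cap\convexset : \transition_{\mathbf{u}}(A_1) < \tfrac{1-\gamma}{2}}$. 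Each point of $(A_1\cap\convexset)\setminus A_1'$ crosses into $A_2$ with probability at least $\tfrac{1-\gamma}{2}$, and symmetrically for $A_2'$; averaging these observations, and using that $A_1,A_2$ partition $\real^\dims$ so that $((A_1\cap\convexset)\setminus A_1')\cup((A_2\cap\convexset)\setminus A_2') = \convexset\setminus(A_1'\cup A_2')$, yields $\Psi \geq \tfrac{1-\gamma}{4}\,\stationary(\convexset\setminus(A_1'\cup A_2'))$.

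I would then split into cases. If $\stationary(A_1')\le\tfrac12\stationary(A_1\cap\convexset)$ or $\stationary(A_2')\le\tfrac12\stationary(A_2\cap\convexset)$, the bound above already gives $\Psi\geq\tfrac{1-\gamma}{4}\min\braces{\stationary(A_1\cap\convexset),\stationary(A_2\cap\convexset)}$, which dominates the claimed bound. In the complementary case, both $A_1'$ and $A_2'$ carry at least half the mass of their respective sides; here I would first establish separation: for $\mathbf{x}\in A_1'$, $\mathbf{y}\in A_2'$ one has $\vecnorm{\transition_{\mathbf{x}}-\transition_{\mathbf{y}}}{\text{TV}} \geq \transition_{\mathbf{x}}(A_1) - \transition_{\mathbf{y}}(A_1) > \tfrac{1+\gamma}{2}-\tfrac{1-\gamma}{2}=\gamma$, so since both points lie in $\convexset$ the overlap hypothesis forces $\vecnorm{\mathbf{x}-\mathbf{y}}{2}>\Delta$, i.e.\ $d(A_1',A_2')\geq\Delta$. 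I would then apply the isoperimetric inequality to the measure conditioned on $\convexset$, with the partition $A_1'$, $A_2'$, $\convexset\setminus(A_1'\cup A_2')$: since $\convexset$ is convex and $\target\propto e^{-U}$ with $U$ a perturbation of oscillation at most $16LR^2$ of an $(m/2)$-strongly convex potential (Lemma~\ref{lemma:hat_U}), the conditional density inherits a log-Sobolev inequality with constant at least $\rho$ (Bakry--Émery and Holley--Stroock), hence the isoperimetric inequality of Lemma~\ref{lem:log_sobolev_implies_isoperimetric}. Combined with $\stationary(A_1')\stationary(A_2') \geq \tfrac14\stationary(A_1\cap\convexset)\stationary(A_2\cap\convexset)\geq\tfrac18\stationary(\convexset)\min\braces{\stationary(A_1\cap\convexset),\stationary(A_2\cap\convexset)}$, this produces $\Psi\geq c\,(1-\gamma)\rho\Delta\min\braces{\stationary(A_1\cap\convexset),\stationary(A_2\cap\convexset)}$ for an absolute constant $c$. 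Taking the minimum over the two cases and simplifying then gives the stated bound; the factor $\min\braces{1,\cdot}$ and the power $\stationary^2(\convexset)$ are there only to make the single inequality valid uniformly across degenerate regimes (small $\convexset$, large $\Delta$) in which the two cases are governed by different terms.

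The step I expect to be the main obstacle is the isoperimetry application: the hypothesis only furnishes the partition inequality for $\stationary$ on all of $\real^\dims$, whereas the flow is controlled purely through mass of the ``surface'' set \emph{inside} $\convexset$, so one must pass to the conditional measure on the convex set $\convexset$ and check that the isoperimetric (equivalently log-Sobolev) constant is not lost --- which is exactly where convexity of $\convexset$ together with the bounded-perturbation structure of $U$ from Lemma~\ref{lemma:hat_U} enters. The remainder is constant bookkeeping: calibrating the threshold $\tfrac{1-\gamma}{2}$ in the definitions of $A_1',A_2'$ so the separation step opens a total-variation gap strictly above $\gamma$, and reconciling the sub-cases with the $\rho$- and $\stationary(\convexset)$-dependent form of the stated bound.
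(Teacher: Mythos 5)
Your proof is correct and is, in substance, the argument the paper itself relies on: the paper's proof of this lemma is a two-sentence pointer to the proof of Lemma~2 of \citep{dwivedi2018log}, and what you have written out is exactly that Lov\'asz--Simonovits argument --- reversibility of the flow across the partition, the ``deep interior'' sets $A_1',A_2'$ at threshold $(1-\gamma)/2$, the total-variation gap forcing $d(A_1',A_2')\ge\Delta$, the two-case split on whether $A_i'$ carries half the mass of $A_i\cap\convexset$, and the isoperimetric inequality applied to the partition $A_1',A_2'$.

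The one point where you depart from the paper's literal prescription is also the one point where a departure is needed, and you identified it correctly. The paper says it suffices to ``replace the isoperimetric inequality with ours,'' i.e.\ the global partition inequality of Lemma~\ref{lem:log_sobolev_implies_isoperimetric}; but applied to $A_1',A_2'$ that inequality lower-bounds $\stationary\bigl(\real^\dims\setminus(A_1'\cup A_2')\bigr)$, which contains all of $\stationary(\convexset^c)$, whereas the flow bound $\Psi\ge\tfrac{1-\gamma}{4}\,\stationary\bigl(\convexset\setminus(A_1'\cup A_2')\bigr)$ only credits the gap mass \emph{inside} $\convexset$ (points of $A_1\setminus\convexset$ carry no lower bound on $\transition_{\mathbf{u}}(A_2)$). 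Your fix --- condition on $\convexset$ and re-derive its isoperimetric constant from Bakry--\'Emery on the convex domain plus Holley--Stroock --- is the right one, and the $\stationary(\convexset)$ factors it produces are consistent with (indeed slightly better than) the stated $\stationary^2(\convexset)$. Two caveats. First, this step imports Lemma~\ref{lemma:hat_U}, which is not among the lemma's stated hypotheses; as written, the lemma should really hypothesize isoperimetry for $\stationary$ restricted to $\convexset$, and your proof establishes it only in the paper's concrete setting. Second, in your first case you assert that $\tfrac{1-\gamma}{4}\min\braces{\cdot}$ dominates the claimed bound, but the claimed bound's first branch is $\tfrac{\rho}{8}\min\braces{\cdot}$, so this needs $\rho\le 2(1-\gamma)$; that holds where the lemma is invoked ($\rho=\logsobconstU$ small, $\gamma=1/2$) but is not automatic, and is really an artifact of how the constant $\rho$ is packaged outside the $\min$ in the statement. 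Neither caveat affects the validity of the conductance bound as it is used downstream.
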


\begin{lemma}
    \label{lem:distribution_distance}
\begin{subequations}
For any step size $\step \in \big(0, \frac{1}{\smoothness} \big]$, the MALA proposal distribution satisfies the bound
\begin{align}
    \label{eq:proposal_difference}
    \sup_{ \substack{\vx,\vy \in \real^\dims \\ \vx \neq \vy}}
    \frac{\vecnorm{\proposal_\vx^{\tagmala(\step)} -
        \proposal_\vy^{\tagmala(\step)}}{\text{TV}}}{\vecnorm{\vx -\vy}{2}} &\leq
    \sqrt{\frac{2}{\step}}.
\end{align}
Moreover, given scalars $s \in (0, 1/2)$ and $\gamma \in (0,
1)$, then the MALA proposal distribution for any stepsize $\step \in
\big(0, \stepbound(s, \tvscalar) \big]$ satisfies the bound
\begin{align}
    \label{eq:transition_difference}
    \sup_{\vx \in \truncball_s} \vecnorm{\proposal_\vx^{\tagmala(\step)} -
      \transition_\vx^{\tagmala(\step)}}{\text{TV}} \leq \frac{\gamma}{8},
\end{align}
where the truncated ball $\truncball_s$ was defined in \eqref{eq:truncball}.
\end{subequations}
\end{lemma}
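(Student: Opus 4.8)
The plan is to dispatch the two displayed inequalities separately: \eqref{eq:proposal_difference} is an elementary Gaussian estimate, while \eqref{eq:transition_difference} carries essentially all the work and reduces to showing that on the truncated ball $\truncball_s$ the Metropolis filter rejects with small probability once the stepsize is below $\stepbound(s,\tvscalar)$. For \eqref{eq:proposal_difference} I would use that the MALA proposal kernel from $\vx$ is the Gaussian $\NORMAL\parenth{\vx-\step\nabla U(\vx),\,2\step\Ind_\dims}$, together with the fact that two Gaussians sharing a covariance $\sigma^2\Ind_\dims$ satisfy $\vecnorm{\NORMAL(a,\sigma^2\Ind_\dims)-\NORMAL(b,\sigma^2\Ind_\dims)}{\text{TV}}\le\vecnorm{a-b}{2}/(2\sigma)$, immediate from Pinsker's inequality and the closed-form KL divergence $\vecnorm{a-b}{2}^2/(2\sigma^2)$. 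I would then bound the mean gap $\vecnorm{\parenth{\vx-\step\nabla U(\vx)}-\parenth{\vy-\step\nabla U(\vy)}}{2}\le\parenth{1+\step\smoothness}\vecnorm{\vx-\vy}{2}\le 2\vecnorm{\vx-\vy}{2}$ using $L$-Lipschitzness of $\nabla U$ and $\step\le 1/\smoothness$; with $\sigma^2=2\step$ this yields Lipschitz constant $1/\sqrt{2\step}\le\sqrt{2/\step}$, as claimed.

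For \eqref{eq:transition_difference}, the first step is to observe that the Metropolis filter retains the proposal $\vy\sim\proposal_\vx$ with probability $\min\braces{1,r(\vx,\vy)}$, where $r(\vx,\vy)=\target(\vy)\density(\vx\mid\vy)/\parenth{\target(\vx)\density(\vy\mid\vx)}$, and otherwise returns $\vx$, so that $\vecnorm{\proposal_\vx-\transition_\vx}{\text{TV}}$ is exactly the mean rejection probability $\Ep{\vy\sim\proposal_\vx}{\parenth{1-r(\vx,\vy)}_+}$. Since $\parenth{1-t}_+\le\parenth{-\log t}_+$ for $t>0$, it suffices to show that $\parenth{-\log r(\vx,\vy)}_+$ is small on a high-probability set of draws $\vy$, the contribution off that set being trivially at most its probability. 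Plugging the Gaussian proposal densities into $-\log r$ and cancelling the $\vecnorm{\vx-\vy}{2}^2$ terms decomposes $-\log r(\vx,\vy)$ into the trapezoid-rule quadrature error of $\nabla U$ along the segment from $\vy$ to $\vx$ — bounded by $O\parenth{\smoothness\vecnorm{\vx-\vy}{2}^2}$ because $\nabla U$ is $L$-Lipschitz — plus the term $\tfrac{\step}{4}\parenth{\vecnorm{\nabla U(\vy)}{2}^2-\vecnorm{\nabla U(\vx)}{2}^2}$, bounded by $O\parenth{\step\smoothness\vecnorm{\vx-\vy}{2}\parenth{\vecnorm{\nabla U(\vx)}{2}+\vecnorm{\nabla U(\vy)}{2}}}$.

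To finish I would write $\vy=\vx-\step\nabla U(\vx)+\sqrt{2\step}\,\xi$ with $\xi\sim\NORMAL(0,\Ind_\dims)$, so $\vecnorm{\vx-\vy}{2}\le\step\vecnorm{\nabla U(\vx)}{2}+\sqrt{2\step}\,\vecnorm{\xi}{2}$; for $\vx\in\truncball_s$, Assumption~\ref{A3} together with $L$-smoothness gives $\vecnorm{\nabla U(\vx)}{2}\le\smoothness\vecnorm{\vx}{2}\le\smoothness\,\radius(s)\sqrt{\dims/\scparam}$, and $\vecnorm{\nabla U(\vy)}{2}\le\vecnorm{\nabla U(\vx)}{2}+\smoothness\vecnorm{\vx-\vy}{2}$. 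I would then condition on $\mathcal{E}=\braces{\vecnorm{\xi}{2}^2\le\dims\,\scalar_\tvscalar}$, whose complement has probability at most $\tvscalar/16$ by the standard $\chi^2$ tail bound — which is precisely what fixes the form of $\scalar_\tvscalar$ — substitute the displays above into the two error terms, and verify that the three entries of the minimum defining $\stepbound(s,\tvscalar)$ are calibrated so that each resulting error contribution is at most a small constant times $\tvscalar$, giving $\abss{-\log r(\vx,\vy)}\le\tvscalar/16$ on $\mathcal{E}$; bounding $\parenth{1-r}_+\le 1$ on $\mathcal{E}^c$ then produces $\Ep{\vy\sim\proposal_\vx}{\parenth{1-r(\vx,\vy)}_+}\le\tvscalar/16+\tvscalar/16\le\tvscalar/8$ uniformly over $\vx\in\truncball_s$.

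The main obstacle is exactly this last bookkeeping step: threading the polynomial factors in $\step,\dims,\smoothness,\scparam$ and $\radius(s)$ through the quadrature-error and gradient-norm terms carefully enough to confirm that the three-way minimum in $\stepbound(s,\tvscalar)$, together with the truncation level $\scalar_\tvscalar$, is precisely what is needed to hit the target $\tvscalar/8$. This argument is the locally-nonconvex analogue of Lemma~1 of~\citep{dwivedi2018log}; the adaptation costs nothing extra because the rejection analysis uses only $L$-Lipschitz smoothness of $\nabla U$ and $\nabla U(0)=0$, with the nonconvexity entering solely through the radius $\radius(s)$ of the high-probability set $\truncball_s$ supplied by Lemma~\ref{lem:mass_truncball} — which is where the $e^{\mathcal{O}(\scparam R^2)}$ factor lives.
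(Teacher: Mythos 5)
Your proposal is correct and follows essentially the same route as the paper: Pinsker plus the closed-form Gaussian KL and the smoothness bound $\lrn{(\vx-h\nabla U(\vx))-(\vy-h\nabla U(\vy))}\le 2\lrn{\vx-\vy}$ for the first claim, and for the second the reduction of $\lrn{\proposal_\vx-\transition_\vx}_{\text{TV}}$ to the expected rejection probability, the decomposition of $-\log r(\vx,\vy)$ into two first-order Taylor remainders of $U$ plus the $\lrn{\nabla U}^2$ difference (each controlled by $L$-smoothness alone), the gradient bound $\lrn{\nabla U(\vx)}\le L\radius(s)\sqrt{d/m}$ on $\truncball_s$, the $\chi^2$ tail bound fixing $\scalar_\tvscalar$, and the calibration of the three entries of $\stepbound(s,\tvscalar)$. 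The only (immaterial) differences are bookkeeping: you use $(1-t)_+\le(-\log t)_+$ and a direct split over the good event where the paper uses Markov's inequality on $\min\{1,r\}$, and you invoke $\nabla U(0)=0$ rather than $\nabla U(\vx^*)=0$ to bound the gradient.
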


\begin{remark}
It can be seen that the constraint on the step size $h$ originates from \eqref{eq:transition_difference}, where the difference between the proposal and transition distributions are bounded by the acceptance rate (see proof of \eqref{eq:transition_difference}).
The resulting step size scaling with respect to the dimension $d$ is $h=\tilde{w}(s,\gamma)=\mathcal{O}(d^{-1})$ under our current assumption.
In a celebrated work~\citep{Gareth_optimal_scaling}, with extra assumptions on higher order smoothness and decomposability of the target distribution $p^*$, the log-acceptance rate was expanded to higher orders and a much better scaling of $h=\mathcal{O}(d^{-1/3})$ was obtained.
It would be of great theoretical interest to understand whether such scaling can be achieved without the decomposability assumption on $p^*$.
\end{remark}

\subsubsection{Supporting Proofs for Equation~\eqref{thm:MALA_convergence_main} of Theorem~\ref{thm:MCMC}} 
\label{ssub:proof of auxillary_lemmas_for_theorem_thm:mala_convergence}
\begin{proof}[Proof of Lemma~\ref{lem:normal_is_warm_start}]
The starting distribution $\NORMAL\parenth{0, \frac{1}{\smoothness}\Ind_\dims}$ has density
\begin{align*}
    \initial(\vx) = \parenth{\frac{\smoothness}{2\pi}}^{\dims/2} e^{-\dfrac{\smoothness \lrn{\vx}^2}{2}}.
\end{align*}
Taking the ratio with respect to the stationary distribution, we have
\begin{align*}
    \frac{\initial(\vx)}{\target(\vx)} &= \frac{\initial(\vx)}{\dfrac{1}{\int e^{-U(\vx)} \rd\vx} e^{-U(\vx)}} \\
    & \leq e^{16\smoothness R^2} \parenth{2\condition }^{\dims/2} \cdot \exp\parenth{-\smoothness \lrn{\vx}^2/2 + U(\vx)} \\
    & \leq e^{16\smoothness R^2} \parenth{2\condition }^{\dims/2}.\\
\end{align*}
The first inequality is because, according to Lemma~\ref{lemma:hat_U}, we have
\begin{align*}
    \int e^{-U(\vx)} \rd\vx \leq e^{16 \smoothness R^2} \cdot \int e^{-\dfrac{\scparam \lrn{\vx}^2}{4}} \rd\vx = e^{16 \smoothness R^2} \parenth{\frac{\scparam}{4\pi}}^{\dims/2}.
\end{align*}
\end{proof}

\begin{proof}[Proof of Lemma~\ref{lem:mass_truncball}]
This lemma relies on the concentration of the stationary distribution $\target$ around $0$. The concentration follows from the log-Sobolev constant shown in Proposition~\ref{thm:log_sobolev_constant}. The following lemma is a classical way to obtain concentration from the log-Sobolev inequality is based on Herbst argument (e.g. see Section 2.3 in~\citep{ledoux1999concentration}).
\begin{lemma}
    \label{eq:herbst_logsobolev_concentration}
    If $\stationary$ satisfies a log-Sobolev inequality with constant $\rho$ then every $1$-Lipschitz function $f$ is integrable with respect to $\stationary$ and satisfies the concentration inequality
    \begin{align*}
        \Prob_{\vx \sim \stationary}\brackets{f(\vx) > \Exs_\stationary\brackets{f} + t} \leq e^{- \rho t^2/2}.
    \end{align*}
\end{lemma}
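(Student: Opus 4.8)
The plan is to run the classical Herbst argument, which converts the log-Sobolev inequality into a sub-Gaussian bound on the Laplace transform of $f$ and then applies a Chernoff bound. First I would reduce to the case in which $f$ is bounded and smooth: given a general $1$-Lipschitz $f$, I would truncate it to $f^{(N)} = \max(\min(f, N), -N)$, which is still $1$-Lipschitz, and then mollify $f^{(N)}$ with a smooth bump of small radius to obtain a $C^\infty$ function whose Lipschitz constant remains at most $1$. The conclusion for $f$ --- including its $\stationary$-integrability, which is asserted in the statement --- would then follow by sending the mollification radius to $0$ and $N \to \infty$, combining monotone and dominated convergence with the uniform estimates obtained below.

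So assume $f$ is smooth, bounded, and satisfies $\lrn{\nabla f(\vx)} \leq 1$ everywhere, and fix $\lambda > 0$. Set $H(\lambda) = \Ep{\stationary}{e^{\lambda f}}$, which is finite and smooth in $\lambda$. The key step is to apply the log-Sobolev inequality to $g = e^{\lambda f}/H(\lambda)$, which is smooth, positive, and integrates to $1$ against $\target$. Since $\nabla g = \lambda (\nabla f)\, g$, we have $\lrn{\nabla g}^2/g = \lambda^2 \lrn{\nabla f}^2 g \leq \lambda^2 g$, so the log-Sobolev inequality gives $\Ep{\stationary}{g \ln g} \leq \lambda^2/(2\rho)$. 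Expanding $\Ep{\stationary}{g \ln g} = \lambda H'(\lambda)/H(\lambda) - \ln H(\lambda)$ and using the identity
\[
\frac{d}{d\lambda}\left(\frac{\ln H(\lambda)}{\lambda}\right) = \frac{1}{\lambda^2}\left(\frac{\lambda H'(\lambda)}{H(\lambda)} - \ln H(\lambda)\right),
\]
this becomes $\frac{d}{d\lambda}\left(\frac{1}{\lambda}\ln H(\lambda)\right) \leq \frac{1}{2\rho}$ for all $\lambda > 0$. Because $\ln H(\lambda) = \lambda \Ep{\stationary}{f} + O(\lambda^2)$ near $0$, the quantity $\frac{1}{\lambda}\ln H(\lambda)$ tends to $\Ep{\stationary}{f}$ as $\lambda \to 0^+$, so integrating the differential inequality from $0$ to $\lambda$ yields $\ln H(\lambda) \leq \lambda \Ep{\stationary}{f} + \lambda^2/(2\rho)$.

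To finish, I would apply Markov's inequality: for every $\lambda > 0$,
\[
\Prob_{\vx \sim \stationary}\brackets{f(\vx) > \Ep{\stationary}{f} + t} \leq e^{-\lambda t}\, H(\lambda)\, e^{-\lambda \Ep{\stationary}{f}} \leq e^{-\lambda t + \lambda^2/(2\rho)},
\]
and then optimize over $\lambda$; the minimum is attained at $\lambda = \rho t$ and equals $e^{-\rho t^2/2}$. Undoing the truncation and mollification reductions of the first paragraph then establishes the claim for a general $1$-Lipschitz $f$.

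The hard part will be the regularity and integrability bookkeeping rather than the Herbst computation itself: the log-Sobolev inequality as stated is only available for smooth test functions, so it cannot be fed directly a $g$ built from a merely Lipschitz $f$, and a priori $\Ep{\stationary}{e^{\lambda f}}$ might be infinite, which would also invalidate differentiating $H$ in $\lambda$. Both issues are resolved by the truncate--mollify--limit scheme, using that mollification does not increase the Lipschitz constant and that the Laplace-transform bounds are uniform in the approximation parameters; the integrability assertion in the lemma then drops out of the same limiting argument.
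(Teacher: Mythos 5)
Your proof is correct and is exactly the classical Herbst argument that the paper invokes by reference (it cites Section 2.3 of Ledoux's survey rather than writing out a proof): apply the log-Sobolev inequality to $g = e^{\lambda f}/\Exs_\stationary[e^{\lambda f}]$, integrate the resulting differential inequality for $\frac{1}{\lambda}\ln H(\lambda)$, and conclude by a Chernoff bound, with the truncation--mollification step handling regularity and integrability. No gaps.
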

Applying this lemma with $f$ being the projection to each coordinate and using union bound, we obtain that
\begin{align*}
    \Prob_{\vx \sim \stationary}\brackets{ \vecnorm{\vx - \Exs[\vx]}{2}^2 > \frac{2t\dims}{\logsobconstU}} \leq \dims e^{-t}.
\end{align*}
We define $\mathcal{B}_1 = \mathbb{B}\parenth{\Exs[\vx], \sqrt{2 \log(\frac{\dims}{s})\frac{\dims}{\logsobconstU}}}$. Taking $t = \log(\frac{\dims}{s})$, we obtain that
\begin{align*}
    \stationary(\mathcal{B}_1) \geq 1 - s.
\end{align*}
Using the results proved in Lemma~\ref{lemma:variance}, we can also turn this concentration around the mean to the concentration around $0$. According to Lemma~\ref{lemma:variance}, we have
\begin{align*}
    \Exs_{\vx \sim \stationary} \vecnorm{\vx}{2}^2 \leq 48 R^2 + \frac{4\dims}{\scparam}.
\end{align*}
Using Jensen's inequality, we obtain
\begin{align*}
    \vecnorm{\Exs_{\vx\sim \stationary}[\vx]}{2} \leq \Exs_{\vx\sim \stationary} \vecnorm{\vx}{2} \leq \sqrt{\Exs_{\vx\sim \stationary} \vecnorm{\vx}{2}^2} \leq \sqrt{48 R^2 + \frac{4\dims}{\scparam}}.
\end{align*}
We define $\mathcal{B}_2 = \mathbb{B}\parenth{0, \sqrt{48 R^2 + \frac{4\dims}{\scparam}} + \sqrt{2 \log(\frac{\dims}{s})\frac{\dims}{\logsobconstU} }}$. We deduce that
\begin{align*}
    \mathcal{B}_1 \subset \mathcal{B}_2 \subset \truncball_s.
\end{align*}
As a result, we obtain $\stationary(\truncball_s) \geq \stationary(\mathcal{B}_1) \geq 1 -s$ as claimed.
\end{proof}

\begin{proof}[Proof of Lemma~\ref{lem:log_sobolev_implies_isoperimetric}]
Lemma~\ref{lem:log_sobolev_implies_isoperimetric} shows that log-Sobolev inequality implies isoperimetric inequality with constants of the same order. It is pretty standard. Since we can't find a complete proof in the literature, we provide it for completeness. $\target$ satisfies the following log-Sobolev inequality, for any smooth $g: \real^\dims \rightarrow \real$.
\begin{align}
    \label{eq:log-Sobolev_ineq}
    2 \logsobconstU \brackets{\int_{\real^\dims} g \ln g \rd\stationary - \int_{\real^\dims} g \rd\stationary \cdot \ln\parenth{\int_{\real^\dims} g \rd\stationary}} \leq \int_{\real^\dims} \frac{\vecnorm{\nabla g}{2}^2}{g} \rd\stationary.
\end{align}
where
\begin{align*}
    \rd\stationary(\vx) = \target(\vx) \rd\vx.
\end{align*}
Replacing $g$ with $g^2$ in \eqref{eq:log-Sobolev_ineq}, for $g: \real^\dims \mapsto \real$, we obtain the equivalent form
\begin{align*}
    2 \rho_U \text{Ent}\parenth{g^2} \leq \int_{\real^\dims} \vecnorm{\nabla g}{2}^2 \rd\stationary,
\end{align*}
where
\begin{align*}
    \text{Ent}_{\target}(g^2) = \brackets{\int_{\real^\dims} g^2 \ln g^2 \rd\stationary - \int_{\real^\dims} g^2 \rd\stationary \cdot \ln\parenth{\int_{\real^\dims} g^2 \rd\stationary}}.
\end{align*}

It is well known that the log-Sobolev inequality implies the following Poincar\'e inequality with the same constant (e.g. \citep{bobkov2006modified}). For any smooth $g: \real^\dims \rightarrow \real$, we have
\begin{align}
    \label{eq:Poincare}
    \logsobconstU \text{Var}_{\target}(g) \leq \int_{\real^\dims} \vecnorm{\nabla g}{2}^2 \rd\stationary,
\end{align}
where
\begin{align*}
    \text{Var}_{\target}(g) = \int_{\real^\dims} g^2 \rd\stationary - \parenth{\int_{\real^\dims} g \rd\stationary}^2.
\end{align*}
This implication is based on the fact that the gradient operator is invariant to translation (i.e. for $c \in \real$, $\nabla(f+c) = \nabla f $) and
\begin{align*}
    \text{Ent}\parenth{\parenth{f+c}^2} \to 2 \text{Var}(f), \text{ as } c \to \infty.
\end{align*}
Next, we show that the isoperimetric constant can by lower bounded by the Poincar\'e constant. We denote $\Psi$ the isoperimetric constant defined as
\begin{align}
    \label{eq:isoperimetric_ineq_grad_form}
    \Psi = \sup_{A \subset \real^\dims, \text{ open}} \frac{\stationary^+(\partial A)}{\min{\stationary(A), 1 - \stationary(A)}},
\end{align}
where $\stationary^+(\partial A) = \lim_{\delta \to 0}\frac{\stationary\parenth{A + \delta} - \stationary\parenth{A}}{\delta}$.
Taking a sequence of smooth $\braces{g_k}_{k=1, \ldots \infty}$ with limit the indicator function of $A$ in equation~\eqref{eq:Poincare}, we obtain\footnote{Note that Buser's inequality~\citep{buser1982note} (Theorem 1.2), which would give $h \geq \parenth{\logsobconstU/10}^{1/2}$, does not directly apply here because of the possible negative curvature.}
\begin{align*}
    \Psi \geq \logsobconstU.
\end{align*}
Finally, it is easy to show that the infinitesimal version of the isoperimetric inequality in \eqref{eq:isoperimetric_ineq_grad_form} is equivalent to the partition version (see e.g.~\citep{bobkov2005entropy} Proposition 11.1 and~\citep{bobkov2007isoperimetric}). Let $A$ and $B$ be open disjoint subsets of $\real^\dims$, $C = \real^\dims \setminus \parenth{A \cup B}$, then
\begin{align}
    \label{eq:isoperimetric_ineq_partition_form}
    \stationary(C) \geq \logsobconstU \cdot d(A, B) \stationary(A) \stationary(B).
\end{align}

\end{proof}

In the following, we provide useful lemmas for proving Lemma~\ref{lem:MALA_convergence_warm_init}.


\begin{proof}[Proof of Lemma~\ref{lem:flow_lower_bound}]
The proof of this lemma follows directly from the proof of Lemma 2 in~\citep{dwivedi2018log}. The main difference in the setting is that the target distribution is no longer log-concave, however, the proof follows because the log-concavity was never used in the proof of this lemma. It is sufficient to replace the isoperimetric inequality with ours in \eqref{eq:isoperimetric_ineq_partition_form}.
\end{proof}

\begin{proof}[Proof of Lemma~\ref{lem:distribution_distance}]
We prove the two claims in this Lemma separately. In order to simplify notation, we drop the superscript from our notations of distributions $\transition_\vx^{\tagmala(\step)}$ and $\proposal_\vx^{\tagmala(\step)}$.
\end{proof}

\begin{proof}[Proof of \eqref{eq:proposal_difference}]
We first apply the Pinsker inequality~\citep{cover2012elements} to bound the total variation distance via KL-divergence.
\begin{align*}
    \vecnorm{\proposal_\vx - \proposal_\vy}{\text{TV}} \leq \sqrt{2 \kldiv{\proposal_\vx}{\proposal_\vy}}.
\end{align*}
Since our proposals before applying Metropolis filters follow multivariate normal distributions, we obtain closed form expressions for the KL divergence.
\begin{align*}
    \vecnorm{\proposal_\vx - \proposal_\vy}{\text{TV}}
    &\leq \sqrt{2 \kldiv{\proposal_\vx}{\proposal_\vy}} \\
    & = \frac{\vecnorm{\stationary_\vx - \stationary_\vy}{2}}{\sqrt{2\step}} \\
    & = \frac{\vecnorm{\parenth{\vx - \step\nabla \func(\vx)} - \parenth{\vy - \step\nabla \func(\vy)}}{2}}{\sqrt{2\step}}. \\
\end{align*}
Here we use the smoothness without using the convexity to bound the last term. We have
\begin{align*}
    \vecnorm{\parenth{\vx - \step \nabla \func(\vx) } - \parenth{\vy - \step \nabla \func(\vy)}}{2}
    &=  \vecnorm{\int_0^1 \brackets{\Ind_\dims - \step \nabla^2 \func(\vx + t(\vx-\vy))}(\vx- \vy) \rd t}{2} \\
    &\leq \int_0^1 \vecnorm{\brackets{\Ind_\dims - \step \nabla^2 \func(\vx + t(\vx-y))}(\vx- \vy)}{2} \rd t \\
    &\leq \sup_{t \in [0, 1]} \matsnorm{\Ind_\dims  - \step \nabla^2 \func(\vx+t(\vx-\vy))}{2} \vecnorm{\vx - \vy}{2} \\
    &\leq 2 \vecnorm{\vx - \vy}{2}.
\end{align*}
The last inequality follows from the fact that $\nabla^2 \func(\vz) \preceq \smoothness \Ind_\dims $ for all $\vz\in \real^\dims$. Note that we lose a 2 factor without using the convexity.
\end{proof}

\begin{proof}[Proof of \eqref{eq:transition_difference}]
We denote $p_\vx$ the density corresponding to the proposal distribution $\proposal_\vx = \NORMAL\parenth{\vx - \step \nabla \func(\vx), 2 \step \Ind_\dims}$. We have
\begin{align*}
    \vecnorm{\proposal_\vx - \transition_\vx}{\text{TV}}
    & = \frac{1}{2}\parenth{\transition_\vx\parenth{\braces{\vx}} + \int_{\real^\dims} p_\vx(\vz) \rd\vz - \int_{\real^\dims} \min\braces{1, \frac{\target(\vz) \cdot p_\vz(\vx)}{\target(\vx) \cdot p_\vx (\vz)}} p_\vx(\vz) \rd\vz } \\
    & = \frac{1}{2}\parenth{2 - 2 \int_{\real^\dims} \min\braces{1, \frac{\target(\vz) \cdot p_\vz(\vx)}{\target(\vx) \cdot p_\vx (\vz)}} p_\vx(\vz) \rd\vz} \\
    & \leq 1 - \Exs_{\vz \sim \proposal_\vx}\brackets{\min\braces{1, \frac{\target(\vz) \cdot p_\vz(\vx)}{\target(\vx) \cdot p_\vx (\vz)}}}.
\end{align*}
Applying Markov inequality, we know that
\begin{align*}
    \Exs_{\vz \sim \proposal_\vx}\brackets{\min\braces{1, \dfrac{\target(\vz) \cdot p_\vz(\vx)}{\target(\vx) \cdot p_\vx(\vz)}}} \geq \alpha \Prob\brackets{ \dfrac{\target(\vz) \cdot p_\vz(\vx)}{\target(\vx) \cdot p_\vx(\vz)} \geq \alpha } \text{ for all } \alpha \in (0, 1].
\end{align*}
It is sufficient to derive a high probability lower bound for the ratio $\dfrac{\target(\vz) \cdot p_\vz(\vx)}{\target(\vx) \cdot p_\vx(\vz)}$. Plugging the fact that $\target(\vx) \propto \exp(-\func(\vx))$ and $p_\vx(\vz) \propto \exp\parenth{- \vecnorm{\vx - \step \nabla \func(\vx) - \vz}{2}^2/(4\step)}$, we have
\begin{align*}
    \dfrac{\target(\vz) \cdot p_\vz(\vx)}{ \target(\vx) \cdot p_\vx(\vz)} = \exp\parenth{\frac{4\step \parenth{\func(\vx) - \func(\vz)} + \vecnorm{\vz - \vx + \step\nabla \func(\vx)}{2}^2- \vecnorm{\vx - \vz + \step \nabla \func(\vz)}{2}^2}{4\step}}.
\end{align*}
We then lower bound the term in the numerator of the exponent, without using the convexity of $\func$.
\begin{align*}
    \lefteqn{4 \step \parenth{\func(\vx) - \func(\vz)} + \vecnorm{\vz - \vx + \step \nabla \func(\vx)}{2}^2 - \vecnorm{\vx - \vz + \step \nabla \func(\vz)}{2}^2} \\
    &= 2\step \underbrace{(\func(\vx) - \func(\vz) - \parenth{\vx - \vz}\tp \nabla \func(\vx))}_{M_1} + 2\step \underbrace{(\func(\vx) - \func(\vz) - \parenth{\vx - \vz}\tp \nabla \func(\vz))}_{M_2} + \step^2 \underbrace{\parenth{\vecnorm{\nabla \func(\vx)}{2}^2 - \vecnorm{\nabla \func(\vz)}{2}^2}}_{M_3}.
\end{align*}
Using the fact that $\func$ is smooth, we have
\begin{align*}
    M_1 \geq -\dfrac{\smoothness}{2} \vecnorm{\vx - \vz}{2}^2 \text{ and } M_2 \geq - \frac{\smoothness}{2} \vecnorm{\vx- \vz}{2}^2.
\end{align*}
Again using the smoothness, we have
\begin{align*}
    M_3 = \vecnorm{\nabla \func(\vx)}{2}^2 - \vecnorm{\nabla \func(\vz)}{2}^2 = \angles{\nabla \func(\vx) + \nabla \func(\vz), \nabla \func(\vx) - \nabla \func(\vz)} \geq -\parenth{2 \vecnorm{\nabla \func(\vx)}{2} + \smoothness \vecnorm{\vx- \vz}{2}} \smoothness \vecnorm{\vx-\vz}{2}.
\end{align*}
Combining the bounds $M_1, M_2, M_3$, we have established that
\begin{align*}
    \frac{\target(\vz) \cdot p_\vz(\vx)}{\target(\vx) \cdot p_\vx(\vz)} \geq \exp\underbrace{\parenth{-\frac{1}{2} \smoothness \vecnorm{\vx-\vz}{2}^2 - \frac{\step}{4}\parenth{2\smoothness \vecnorm{\vx-\vz}{2} \vecnorm{\nabla \func(\vx)}{2} + \smoothness^2 \vecnorm{\vx - \vz}{2}^2}}}_{T}.
\end{align*}
In addition, using the fact that $\vz$ is a proposal, we have
\begin{align*}
    \vecnorm{\vx - \vz}{2} = \vecnorm{\step \nabla \func (\vx) + \sqrt{2\step} \xi }{2} \leq \step \vecnorm{f(\vx)}{2} + \sqrt{2\step}\vecnorm{\xi}{2}.
\end{align*}
Simplifying and using the fact that $\smoothness \step \leq 1$, we obtain
\begin{align*}
    T \geq -2 \smoothness \step^2 \vecnorm{\nabla \func (\vx)}{2}^2 - 3 \smoothness \step \vecnorm{\xi}{2}^2 - \smoothness\step \sqrt{\step} \vecnorm{\nabla \func(\vx)}{2} \vecnorm{\xi}{2}.
\end{align*}
Since $\vx \in \truncball$, we can bound the gradient roughly
\begin{align*}
    \vecnorm{\nabla \func(\vx)}{2} = \vecnorm{\nabla \func(\vx) - \nabla \func(\vx^*)}{2} \leq \smoothness \vecnorm{\vx - \vx^*}{2} \leq \smoothness \sqrt{\frac{\dims}{\scparam}} \radius(s) =: \maxgrad_s.
\end{align*}
$\vecnorm{\xi}{2}^2$ is bounded via standard $\chi^2$-variable tail bound. We have
\begin{align*}
    \Prob\brackets{\vecnorm{\xi}{2}^2 \leq \dims \alpha_\epsilon} \geq 1 - \frac{\epsilon}{16},
\end{align*}
for $\alpha_\epsilon = 1+ 2 \sqrt{\log(16/\epsilon)} + 2 \log(16/\epsilon)$. The choice of $\stepbound$ guarantees that for $\step \leq \stepbound$, we have
\begin{align*}
    \smoothness \step^2 \maxgrad_s^2 \leq \frac{\epsilon}{128}, \smoothness\step \dims \alpha_\epsilon \leq \frac{\epsilon}{96}, \text{ and } \smoothness \step \sqrt{\step} \maxgrad_s \sqrt{\dims \alpha_\epsilon} \leq \dfrac{\epsilon}{64}.
\end{align*}
Combining all these bound, we obtain
\begin{align*}
    \Prob\brackets{T \geq -\frac{\epsilon}{16}} \geq 1 - \frac{\epsilon}{16}.
\end{align*}
Using the fact that $e^{-\epsilon/16} \geq 1 - \epsilon / 16$, we have
\begin{align*}
    \Exs_{z\sim \proposal_\vx} \brackets{1, \frac{\target(\vz) \cdot p_\vz(\vx)}{\target(\vx) \cdot p_\vx(\vz)}} \geq 1 - \dfrac{\epsilon}{8}, \text{ for any } \epsilon \in (0, 1), \text{ and } \step \leq \stepbound.
\end{align*}
\end{proof}



\section{Proofs for Optimization}

We denote $\tilde{\nabla} U(\vx) = \{\nabla^n U(\vx) | n \in \mathcal{N}\}$ as shorthand for all $n$-th order derivative at point $\vx$. We consider iterative algorithm class $\mathcal{A}_\infty$ operating on a function $U: \R^d \rightarrow \R$ whose iterates has following form:
\begin{equation*}
\vx_t = g_t(\zeta, \tilde{\nabla} U(\vx_0), \ldots, \tilde{\nabla} U(\vx_{t-1}))
\end{equation*}
where $g_t$ is a mapping to $\R^d$. $\zeta$ is a random variable sampled from uniform distribuion over $[0, 1]$ (indepedent of $U$), and it contains infinitely many random bits.
We note standard optimization algorithms (either deterministic or randomized) which utilize gradient information or any $p$-th order information all fall in to this class of algorithms $\mathcal{A}_\infty$.



\begin{theorem}[Lower bound for optimization] \label{thm:lower_opt}
For any $R>0$, $L \geq 2m > 0$, probability $0 < p \leq 1$, and $\epsilon \leq \dfrac{L R^2}{64\left(2\pi^2+\pi\right)}$,
there exists an objective function $U$ satisfying the local nonconvexity Assumptions~\ref{A1}--\ref{A3} with constants $L$, $m$, and $R$,
such that any algorithm in $\mathcal{A}_\infty$ requires at least $\left\lfloor \left( \dfrac{R}{4}\sqrt{\dfrac{L}{2\pi^2+\pi}}\cdot \dfrac{1}{\sqrt{\epsilon}} - \dfrac12 \right)^d \right\rfloor=\Omega\left( p \cdot \left( LR^2/\epsilon \right)^{d/2}\right)$ iterations to guarantee
$P\left( \min_{\tau \le t}|U(\vx_{\tau}) - U(\vx^*)| < \epsilon \right) \geq p$.
\label{theorem:lower_bound}
\end{theorem}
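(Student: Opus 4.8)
The plan is to pair an explicit family of hard instances, all obeying Assumptions~\ref{A1}--\ref{A3} with the prescribed constants, with a packing/indistinguishability argument against the query class $\mathcal{A}_\infty$; the construction and the argument are independent. \emph{The construction:} I would pack $N$ pairwise-disjoint open balls $B_1,\dots,B_N$ (``cells''), each of radius $\rho=\Theta(\sqrt{\epsilon/L})$, on a grid inside $\ball(0,R/2)$; a routine count gives $N=\big\lfloor\big(\tfrac{R}{4}\sqrt{\tfrac{L}{2\pi^2+\pi}}\,\epsilon^{-1/2}-\tfrac12\big)^{d}\big\rfloor=\Theta\big((LR^2/\epsilon)^{d/2}\big)$, and the hypothesis $\epsilon\le LR^2/(64(2\pi^2+\pi))$ is precisely what forces $N\ge1$. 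I would then take a base function $U_0$ that (i) equals the constant $\epsilon$ on $\ball(0,R/2)$, (ii) equals $\tfrac m2\lrn{\vx}^2+\mathrm{const}$ on $\R^d\setminus\ball(0,R)$ — so it is exactly $m$-strongly convex there — and (iii) is joined across the annulus $R/2\le\lrn{\vx}\le R$ by a monotone radial ramp matching value, gradient and Hessian at both ends; that this ramp has Hessian $\preceq L\mathbb{I}$ is where $L\ge2m$ is used (its tangential eigenvalue is $\le2m$, the radial one handled by a smooth-enough profile). Then $U_0\in C^1$ has a Hessian everywhere, is $L$-Lipschitz smooth, $m$-strongly convex on $\R^d\setminus\ball(0,R)$, satisfies $\nabla U_0(0)=0$, and obeys $U_0\ge\epsilon$ throughout. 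For each $i$ I would choose a smooth bump $\psi_i\ge0$ supported in $B_i$ with $\psi_i(c_i)=\epsilon$ at the cell center $c_i$ and $\lrn{\nabla^2\psi_i}\le L$ — feasible exactly because $\rho=\Theta(\sqrt{\epsilon/L})$; a profile of the form $\tfrac\epsilon2\big(1+\cos(\pi\lrn{\vx-c_i}^2/\rho^2)\big)$, smoothed near $\partial B_i$ so its Hessian vanishes there, pins down the constant $2\pi^2+\pi$. The hard instances are $U_i:=U_0-\psi_i$.

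\emph{Admissibility and the accuracy gap.} I would check that each $U_i$ satisfies Assumptions~\ref{A1}--\ref{A3} with the same $L,m,R$: since $B_i\subset\ball(0,R/2)$ lies in the flat region, $\nabla^2U_i=-\nabla^2\psi_i$ on $\mathrm{supp}(\psi_i)$ and $\nabla^2U_i=\nabla^2U_0$ elsewhere, so $\lrn{\nabla^2U_i}\le L$; $U_i=U_0$ off $\ball(0,R)$ retains $m$-strong convexity; and $\nabla U_i(0)=0$. The minimizer is $\vx^*=c_i$ with $U_i(\vx^*)=0$, and the decisive gap statement holds: $U_i(\vx)-U_i(\vx^*)=U_0(\vx)\ge\epsilon$ for every $\vx\notin B_i$, whereas $U_i(\vx)-U_i(\vx^*)=\epsilon-\psi_i(\vx)<\epsilon$ exactly on the open ball $B_i$. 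Hence on instance $U_i$ a query point $\vx$ achieves $|U_i(\vx)-U_i(\vx^*)|<\epsilon$ if and only if $\vx\in B_i$.

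\emph{The packing argument.} I would fix any $A\in\mathcal{A}_\infty$ and condition on its internal randomness $\zeta$. Because $U_i$ coincides with $U_0$ — in function value and in every derivative — on $\R^d\setminus B_i$, the iterates of $A$ on $U_i$ equal those of $A$ on $U_0$ up to and including the first index at which a query falls inside $B_i$. So, over $t+1$ iterations, $A$ succeeds on $U_i$ (some $\vx_\tau\in B_i$, $\tau\le t$) if and only if the $U_0$-trajectory $\vx_0,\dots,\vx_t$ enters $B_i$; since these $t+1$ points fall in at most $t+1$ of the disjoint cells,
\[
\sum_{i=1}^{N}\mathbb{P}\big[A\text{ succeeds on }U_i\text{ within }t{+}1\text{ iterations}\big]=\mathbb{E}_\zeta\big[\#\{i : \vx_0,\dots,\vx_t\text{ under }U_0\text{ visit }B_i\}\big]\le t+1.
\]
Therefore, if $A$ is to guarantee success probability at least $p$ on every $U_i$, then $pN\le t+1$; picking $U:=U_{i^*}$ for an index $i^*$ with smallest success probability forces at least $pN=\Omega\big(p\,(LR^2/\epsilon)^{d/2}\big)$ iterations, which is the stated lower bound.

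\emph{Main obstacle.} The packing step is short and robust — it uses only that the instances are indistinguishable in value and all derivatives away from the single cell carrying the optimum. The real work, and the origin of the constant $2\pi^2+\pi$, is forcing every $U_i$ to lie \emph{exactly} within Assumptions~\ref{A1}--\ref{A3}: keeping $\psi_i$ strictly inside its cell with $\lrn{\nabla^2\psi_i}\le L$ and a Hessian defined everywhere (hence the trigonometric-plus-smoothing profile), and designing the annular ramp so it simultaneously matches the flat interior and the $\tfrac m2\lrn{\vx}^2$ exterior without exceeding smoothness $L$ — the same delicate gluing that underlies Lemma~\ref{lemma:hat_U}. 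I expect this constant-chasing in the construction, rather than the information-theoretic step, to be the bottleneck.
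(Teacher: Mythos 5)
Your proposal is correct and follows essentially the same route as the paper: pack $\Theta\big((LR^2/\epsilon)^{d/2}\big)$ disjoint balls of radius $\Theta(\sqrt{\epsilon/L})$ inside $\ball(0,R/2)$, hide a depth-$\epsilon$ cosine well in one of them atop a flat region glued to a quadratic exterior, and exploit that all queries (values and all derivatives) outside the special cell are identical across instances. The only difference is bookkeeping --- you bound $\sum_i \Prob[\text{success on }U_i]\le t+1$ by counting cells visited by the common $U_0$-trajectory, whereas the paper conditions on a uniformly random $i^*$ and argues sequentially (plus a separate reduction for iterates landing outside all cells); your version handles that case automatically but is not a genuinely different argument.
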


\subsection{Proof of Theorem~\ref{theorem:lower_bound}}

We constructively prove Theorem~\ref{theorem:lower_bound} by defining such a $U(\vx)$ in what follows.
We first
make use of the following lemma about packing numbers.
Again we denote $\ball(0,R)$ as the closed ball of radius $R$ centered at $0$.
\begin{lemma}[Packing number]
For $R > r > 0$, denote $\eta = \left\lfloor \left( \dfrac{R-r}{2r} \right)^d \right\rfloor$.
Then there exists set $\mathbb{X}_\eta = \{ \vx_1,\cdots \vx_\eta \}$, s.t. $\bigcup_{i=1}^\eta \ball(\vx_i, r) \subset \ball(0,R)$, and $\ball(\vx_i, r) \bigcap \ball(\vx_j, r) = \emptyset, \forall i\neq j$.
\label{lemma:packing_no}
\end{lemma}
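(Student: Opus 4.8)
The plan is to use a standard maximal-packing argument together with a covering/volume estimate; this is what produces the dimension-free constant $\bigl(\tfrac{R-r}{2r}\bigr)^d$, whereas an explicit lattice packing inside the ball would only give $\omega_d\bigl(\tfrac{R-r}{2r}\bigr)^d$, with $\omega_d$ the (for large $d$, small) volume of the unit Euclidean ball.

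First I would reduce the two desired conclusions to a single separation statement: it suffices to produce $\eta$ points $\vx_1,\dots,\vx_\eta \in \ball(0,R-r)$ that are pairwise at distance strictly greater than $2r$. Indeed, $\|\vx_i-\vx_j\| > 2r$ forces $\ball(\vx_i,r) \cap \ball(\vx_j,r) = \emptyset$ by the triangle inequality, and $\|\vx_i\| \le R-r$ forces $\ball(\vx_i,r) \subseteq \ball(0,R)$, again by the triangle inequality. (If $(R-r)/(2r) < 1$ then $\eta = 0$ and the claim is vacuous, so we may assume $R-r \ge 2r$.) Now let $\{\vx_1,\dots,\vx_M\} \subseteq \ball(0,R-r)$ be a set of maximal cardinality among those with pairwise distances $> 2r$; such a maximal set exists because the disjoint closed balls $\ball(\vx_i,r)$ all lie in $\ball(0,R)$, so $M \le (R/r)^d < \infty$ by comparing volumes, and a configuration of maximal size is then attained. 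The crucial step is to convert maximality into a covering: if some $\vy \in \ball(0,R-r)$ satisfied $\|\vy-\vx_i\|>2r$ for every $i$, then $\{\vx_1,\dots,\vx_M,\vy\}$ would be a strictly larger admissible set, a contradiction; hence $\ball(0,R-r) \subseteq \bigcup_{i=1}^M \ball(\vx_i,2r)$. Comparing Lebesgue volumes of the two sides and cancelling $\omega_d$ gives $(R-r)^d \le M\,(2r)^d$, i.e. $M \ge \bigl(\tfrac{R-r}{2r}\bigr)^d$; since $M$ is an integer, $M \ge \bigl\lfloor \bigl(\tfrac{R-r}{2r}\bigr)^d \bigr\rfloor = \eta$. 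Retaining any $\eta$ of the points $\vx_i$ finishes the argument.

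I do not expect a genuine obstacle here: the lemma is a routine packing bound, and the only points needing a little care are (i) insisting on strict separation $>2r$ rather than $\ge 2r$, so that the \emph{closed} balls in the statement are truly disjoint rather than merely sharing a boundary point, and (ii) the elementary integrality step passing from $M \ge \bigl(\tfrac{R-r}{2r}\bigr)^d$ to $M \ge \eta$.
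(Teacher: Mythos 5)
Your proof is correct and follows essentially the same route as the paper's: both arguments reduce to the chain ``packing number $\geq$ covering number $\geq$ volume ratio'' for $2r$-separated points inside $\ball(0,R-r)$, with your maximal-set construction simply spelling out the standard proof that a maximal packing yields a covering. The only refinement you add is the explicit care about strict separation $>2r$ so that the closed balls are genuinely disjoint, which the paper leaves implicit.
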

%
%
As shown in Fig.~\ref{fig:cartoon_1}, this Lemma~\ref{lemma:packing_no} guarantees the existence of the set $\{\vx_1,\cdots \vx_\eta\}$ so that $\eta$ balls of radius $r$ centered at $\vx_\eta$ are contained inside the larger ball of radius $R$ without intersecting with each other.

We hereby construct $U(\vx)$ that gives the lower bound.
If $\epsilon\geq \dfrac{L R^2}{36(2\pi^2+\pi)}$,
then
\[
T\geq1\geq
p \cdot \left\lfloor \left( \dfrac{R}{4}\sqrt{\dfrac{L}{2\pi^2+\pi}}\cdot \dfrac{1}{\sqrt{\epsilon}} - \dfrac12 \right)^d \right\rfloor, \quad \forall 0<p\leq 1.
\]
Otherwise, take $r = \sqrt{(2\pi^2 + \pi) \epsilon / L}$ in Lemma~\ref{lemma:packing_no}.
Then we have the $r$-packing number inside $\ball(0,R/2)$ to be
\[
\eta = \left\lfloor \left( \dfrac{R/2-r}{2r} \right)^d \right\rfloor
= \left\lfloor \left( \dfrac{R}{4}\sqrt{\dfrac{L}{2\pi^2+\pi}}\cdot \dfrac{1}{\sqrt{\epsilon}} - \dfrac12 \right)^d \right\rfloor \geq 1,
\]
such that there exists set $\mathbb{X}_\eta = \{ \vx_1,\cdots \vx_\eta \}$ satisfying $\bigcup_{i=1}^\eta \ball(\vx_i, r) \subset \ball(0, R)$ and $\forall i\neq j, \ball(\vx_i, r) \bigcap \ball(\vx_j, r) = \emptyset$.
Choose $i^* \in \left\{ 1, \cdots, \eta \right\}$ uniformly at random.
Let
\begin{align}
U(\vx) = \left\{
\begin{array}{l}
\dfrac{L r^2}{4\pi^2+2\pi} \cos\left( \dfrac{\pi}{r^2} \left(||\vx - \vx_{i^*}||_2^2 - r^2\right) \right) - \dfrac{L r^2}{4\pi^2+2\pi}, \quad ||\vx - \vx_{i^*}||_2 \leq r \\
0, \quad ||\vx - \vx_{i^*}||_2 > r, ||\vx||_2 \leq R/2 \\
m \left( ||\vx||_2 - R/2 \right)^2, ||\vx||_2 > R/2.
\end{array}
\right. \label{cost_func}
\end{align}

\begin{lemma}[Lipschitz smoothness and strong convexity]
Let $L\geq 2m$.
Then $U(\vx)$ is $L$-Lipschitz smooth and when $||\vx||_2 > 2R$, $U(\vx)$ is $m$-strongly convex.
\label{smooth_convex}
\end{lemma}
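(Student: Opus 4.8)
The plan is to treat $U$ as a $C^1$ function assembled from three regime formulas, bound its Hessian by $L$ on the open full‑measure set where the Hessian exists, deduce $L$‑Lipschitzness of $\nabla U$ from this, and separately verify $m$‑strong convexity on $\{\|\vx\|_2>2R\}$.

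\emph{Gluing into a $C^1$ function.} First I would check that the three pieces agree in value and gradient on the two interface spheres. On $\|\vx-\vx_{i^*}\|_2=r$ the inner (cosine) piece equals $\tfrac{Lr^2}{4\pi^2+2\pi}(\cos 0-1)=0$, and on $\|\vx\|_2=R/2$ the outer piece equals $m(R/2-R/2)^2=0$, so both match the middle piece $U\equiv 0$. Differentiating, the inner piece has $\nabla U(\vx)=-\tfrac{L}{2\pi+1}\sin\!\big(\tfrac{\pi}{r^2}(\|\vx-\vx_{i^*}\|_2^2-r^2)\big)(\vx-\vx_{i^*})$, which vanishes on $\|\vx-\vx_{i^*}\|_2=r$ because $\sin 0=0$; the outer piece has $\nabla U(\vx)=2m\big(1-\tfrac{R}{2\|\vx\|_2}\big)\vx$, which vanishes on $\|\vx\|_2=R/2$; again both match the zero gradient of the middle piece. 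Since the two interface spheres are disjoint and each regime is $C^\infty$ in its interior, this already gives $U\in C^1(\R^d)$.

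\emph{Hessian bound and $L$-smoothness.} Off the two spheres I would compute $\nabla^2 U$ regime by regime. The middle piece gives $0$; the outer piece gives $2m\big(1-\tfrac{R}{2\|\vx\|_2}\big)\mI+\tfrac{mR}{\|\vx\|_2^3}\vx\vx^\top$, which for $\|\vx\|_2>R/2$ is positive semidefinite with operator norm exactly $2m\le L$ (this is where the hypothesis $L\ge 2m$ is used); the inner piece gives a matrix of the form $-\tfrac{L}{2\pi+1}\big(\tfrac{2\pi}{r^2}\cos(\cdot)(\vx-\vx_{i^*})(\vx-\vx_{i^*})^\top+\sin(\cdot)\mI\big)$, whose operator norm, by Cauchy--Schwarz and $\|\vx-\vx_{i^*}\|_2\le r$ on the inner ball, is at most $\tfrac{L}{2\pi+1}\big(\tfrac{2\pi}{r^2}\|\vx-\vx_{i^*}\|_2^2+1\big)\le\tfrac{L}{2\pi+1}(2\pi+1)=L$. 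Hence $-L\mI\preceq\nabla^2 U\preceq L\mI$ wherever the Hessian exists. To convert this into $\|\nabla U(\vy)-\nabla U(\vx)\|_2\le L\|\vy-\vx\|_2$, observe that the segment $[\vx,\vy]$ meets each interface sphere in at most two points, so $t\mapsto\nabla U(\vx+t(\vy-\vx))$ is continuous on $[0,1]$ and $C^1$ off a finite set, hence absolutely continuous; integrating its almost-everywhere derivative $\nabla^2 U(\cdot)(\vy-\vx)$, whose norm is $\le L\|\vy-\vx\|_2$, gives the claim. This passage across the non-smooth interfaces is the one genuinely delicate step; everything else is routine differentiation plus the elementary inequality $\tfrac{2\pi}{r^2}\|\vx-\vx_{i^*}\|_2^2+1\le 2\pi+1$.

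\emph{Strong convexity on $\{\|\vx\|_2>2R\}$.} In this region only the outer piece is active, so $V(\vx):=U(\vx)-\tfrac m2\|\vx\|_2^2=h(\|\vx\|_2)$ with $h(t)=\tfrac m2 t^2-mRt+\tfrac{mR^2}{4}$, and $h$ is convex and increasing on $[R,\infty)\supset(2R,\infty)$. For any $\vx_1,\dots,\vx_k$ in this region whose convex combination $\vx=\sum_i\lambda_i\vx_i$ also lies in it, convexity of the norm gives $\|\vx\|_2\le\sum_i\lambda_i\|\vx_i\|_2$ (all quantities being $>2R>R$), and then monotonicity of $h$ followed by Jensen's inequality give $V(\vx)=h(\|\vx\|_2)\le h\big(\sum_i\lambda_i\|\vx_i\|_2\big)\le\sum_i\lambda_i h(\|\vx_i\|_2)=\sum_i\lambda_i V(\vx_i)$, which is exactly the convexity requirement of Assumption~\ref{A2} for $V$. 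Therefore $U$ is $m$-strongly convex on $\{\|\vx\|_2>2R\}$, completing the proof.
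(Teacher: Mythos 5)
Your proof is correct, and the core of it---the regime-by-regime Hessian computations, the operator-norm bound $\frac{L}{2\pi+1}\bigl(\frac{2\pi}{r^2}\lrn{\vx-\vx_{i^*}}^2+1\bigr)\le L$ for the cosine bump, and the exact norm $2m\le L$ for the outer quadratic---matches the paper's proof. Two points of difference are worth noting. First, you are more careful than the paper about the gluing: you verify value and gradient matching on the two interface spheres and then pass the a.e.\ Hessian bound through to a genuine Lipschitz bound on $\nabla U$ by integrating along segments; the paper silently identifies ``Hessian norm $\le L$ on each piece'' with $L$-smoothness. Second, your strong-convexity argument is genuinely different: you verify the multi-point convex-combination definition of Assumption~A2 directly, writing $U(\vx)-\frac m2\lrn{\vx}^2=h(\lrn{\vx})$ with $h$ convex and increasing on $[R,\infty)$ and combining the triangle inequality with Jensen. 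The paper instead constructs a global convex extension $U_3$ of the outer piece (quadratic inside $\ball(0,R)$), checks $\nabla^2\bigl(U_3-\frac12\lrn{\vx}^2\bigr)\succeq 0$ everywhere, and restricts; that route yields the slightly stronger conclusion that $U$ is $m$-strongly convex already for $\lrn{\vx}_2>R$, whereas your monotonicity-plus-Jensen argument proves exactly the stated claim on $\lrn{\vx}_2>2R$ (and in fact also works for $\lrn{\vx}_2>R$, since only $h'\ge 0$ on $[R,\infty)$ is needed) while avoiding the construction of an auxiliary extension. Both are valid; yours is the more elementary verification of the paper's nonstandard convexity definition on a nonconvex domain.
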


Now we prove that $\forall \; 0 < p \leq 1$, for any algorithm that inputs $\{ U(\vx), \nabla U(\vx), \cdots, \nabla^n U(\vx) \}, \forall n\in\mathcal{N}$, $\forall \; \epsilon < \dfrac{L R^2}{36(2\pi^2+\pi)}$, at least $T \geq p \cdot \eta$ steps are required so that
$P\left( |U(\vx^T) - U(\vx^*)| < \epsilon \right) \geq p$.

Note that for any $\vx^t \not\in \ball(\vx_{i^*}, r)$, $|U(\vx^t) - U(\vx^*)| \geq \dfrac{L r^2}{2\pi^2+\pi} = \epsilon$.
Therefore, probability that $U(\vx^t)$ is $\epsilon$ close to $U(\vx^*)$ is smaller than the probability of $\vx^t \in \ball(\vx_{i^*}, r)$:
\begin{align}
P(|U(\vx^t) - U(\vx^*)|<\epsilon)
&\leq P(\vx^t \in \ball(\vx_{i^*}, r)) \\ \nonumber
&\leq P\left(\vx^t \in \ball(\vx_{i^*}, r) \; \bigg| \; \vx^t \in \bigcup_{j=1}^\eta \ball(\vx_j, r) \right).
\end{align}

We first assume that $\forall t\leq T, \vx^t \in \bigcup_{j=1}^\eta \ball(\vx_j, r) $, then prove that breaking this assumption cannot obtain a better rate of convergence.

\begin{enumerate}
\item
\label{pf:pt_1}
Assume that $\forall t\leq T, \vx^t \in \bigcup_{j=1}^\eta \ball(\vx_j, r)$.
From the definition of $U(\vx)$, \eqref{cost_func}, we know that $\forall j\in \{ 1, \cdots, \eta \}, j\neq {i^*}$, $\forall \vx \in \ball(\vx_j, r)$, $U(\vx) = 0$, $\nabla U(\vx) = 0$, $\cdots$, $\nabla^n U(\vx) = 0$.
%
Hence $\vx^t \in \ball(\vx_j, r)$, $j\neq {i^*}$ only contains information that
${i^*} \in \{ 1,\cdots \eta \}\setminus \{j\}$.
Since $i$ is chosen uniformly at random from $\left\{ 1, \cdots, \eta \right\}$, for $T\leq\eta$
\[
P\left( \vx^T \not\in \ball(\vx_{i^*}, r) \; \Bigg| \; \forall t < T, \vx^t \in \bigcup_{\overset{j=1}{j\neq {i^*}}}^\eta \ball(\vx_j, r)  \right)
\geq \dfrac{\eta - T}{\eta - (T-1)}.
\]
Therefore,
\begin{align}
\lefteqn{P\left(\{\vx^1,\cdots,\vx^T\} \bigcap \ball(\vx_{i^*}, r) = \emptyset \; \bigg| \; \forall t\leq T, \vx^t \in \bigcup_{j=1}^\eta \ball(\vx_j, r)  \right)} \nonumber\\
&\geq \dfrac{\eta - 1}{\eta} \dfrac{\eta - 2}{\eta - 1} \cdots \dfrac{\eta - T}{\eta - (T-1)}
= \dfrac{\eta - T}{\eta}.
\end{align}
This implies: the probability that first passage time into set $\ball(\vx_{i^*}, r)$ is less than or equal to $T$ is:
\begin{align}
\lefteqn{P\left(\{\vx^1,\cdots,\vx^T\} \bigcap \ball(\vx_{i^*}, r) \neq \emptyset \; \bigg| \; \forall t\leq T, \vx^t \in \bigcup_{j=1}^\eta \ball(\vx_j, r) \right)} \nonumber\\
&= 1 - P\left(\{\vx^1,\cdots,\vx^T\} \bigcap \ball(\vx_{i^*}, r) = \emptyset \; \bigg| \; \forall t\leq T, \vx^t \in \bigcup_{j=1}^\eta \ball(\vx_j, r) \right) \nonumber\\
&\leq 1 - \dfrac{\eta - T}{\eta}
= \dfrac{T}{\eta}.
\end{align}
Therefore,
\begin{align}
p &\leq P(|U(\vx^T) - U(\vx^*)|<\epsilon) \nonumber\\
&\leq P\left(\vx^T \in \ball(\vx_{i^*}, r) \; \bigg| \; \vx^T \in \bigcup_{j=1}^\eta \ball(\vx_j, r) \right)  \nonumber\\
&\leq P\left(\{\vx^1,\cdots,\vx^T\} \bigcap \ball(\vx_{i^*}, r) \neq \emptyset \; \Big| \; \forall t\leq T, \vx^t \in \bigcup_{j=1}^\eta \ball(\vx_j, r) \right) \nonumber\\
& \leq \dfrac{T}{\eta},
\end{align}
\[
T \geq p \cdot \eta.
\]

\item
Suppose there exists an algorithm that output $\{\vx_1,\cdots,\vx^T\}$, where $\exists \; t\leq T, \; \vx^t \not\in \bigcup_{j=1}^\eta \ball(\vx_j, r)$ and finds $\vx_{i^*} + r \ball$ with probability $p$ within less than $p\cdot\eta$ steps.
Then design a corresponding algorithm that outputs $\{\vx_1,\cdots,\vx^T\} {\setminus} \left\{ \vx \big| \vx \not\in \bigcup_{j=1}^\eta \ball(\vx_j, r) \right\}$ so that $\forall \; t\leq T, \; \vx^t \in \bigcup_{j=1}^\eta \ball(\vx_j, r)$, and $\ball(\vx_{i^*}, r)$ is found with probability $p$ within less than $p\cdot\eta$ steps.
But this contradicts with \ref{pf:pt_1}.
\end{enumerate}

\subsubsection{Supporting Proofs for Theorem~\ref{theorem:lower_bound}}
\begin{proof}[Proof of Lemma~\ref{lemma:packing_no} (Packing number)]
Let $\mathcal{P}(r, \ball(0, R), || \cdot ||_2)$ be the $r$-packing number of $\ball(0, R)$; and $\mathcal{C}(r, \ball(0, R), || \cdot ||_2)$ be the $r$-covering number of $\ball(0, R)$.
One can follow the properties of packing and covering numbers to proved that: $\mathcal{P}(r, \ball(0, R), || \cdot ||_2) \geq \mathcal{C}(r, \ball(0, R), || \cdot ||_2) \geq \left\lfloor \left( \dfrac{R}{r} \right)^d \right\rfloor$.
Therefore, number of non-intersecting $r$-balls that can be contained in an $\ball(0, R)$ is $\mathcal{P}(2r, \ball(0, R-r), || \cdot ||_2) \geq \left\lfloor \left( \dfrac{R-r}{2r} \right)^d \right\rfloor$.
\end{proof}

\begin{proof}[Proof of Lemma~\ref{smooth_convex} (Lipschitz smoothness and strong convexity)]
We first prove that when $||\vx||_2 \leq R/2$, $U(\vx)$ is $L$-Lipschitz smooth.
We then prove that when $||\vx||_2 > R/2$, $U(\vx)$ is $2m$-Lipschitz smooth.
At last we prove that $U(\vx)$ is $m$-strongly convex for $||\vx||_2 > R$.
Since $L\geq 2m$, this proves Lemma~\ref{smooth_convex}.
\begin{itemize}
\item
Define $U_1(\vx) = \cos\left( \dfrac{\pi}{r^2} \left(||\vx - \vx_i||_2^2 - r^2\right) \right)$.
Then $U(\vx) = \dfrac{L r^2}{4\pi^2+2\pi} \left(U_1(\vx) - 1\right)$ when $||\vx - \vx_i||_2 \leq r$.

Hessian of $U_1$ is:
\begin{align}
H[U_1](\vx) &= - \dfrac{4\pi^2}{r^4} \cos\left( \dfrac{\pi}{r^2} \left(||\vx - \vx_i||_2^2 - r^2\right) \right) (\vx - \vx_i) (\vx - \vx_i)^\rT \nonumber\\
&- \dfrac{2\pi}{r^2} \sin\left( \dfrac{\pi}{r^2} \left(||\vx - \vx_i||_2^2 - r^2\right) \right) \mI. \nonumber
\end{align}
We first note that $\left|\left|(\vx - \vx_i) (\vx - \vx_i)^\rT\right|\right|_2 = || \vx - \vx_i ||_2^2 \leq r^2$.
Hence,
\begin{align}
|| H[U_1](\vx) ||_2 &\leq
\left|\left| \dfrac{4\pi^2}{r^4} \cos\left( \dfrac{\pi}{r^2} \left(||\vx - \vx_i||_2^2 - r^2\right) \right) (\vx - \vx_i) (\vx - \vx_i)^\rT \right|\right|_2 \nonumber\\
&+ \left|\left| \dfrac{2\pi}{r^2} \sin\left( \dfrac{\pi}{r^2} \left(||\vx - \vx_i||_2^2 - r^2\right) \right) \mI \right|\right|_2 \nonumber\\
&= \dfrac{4\pi^2+2\pi}{r^2}. \nonumber
\end{align}
Therefore, when $||\vx - \vx_i||_2 \leq r$, $U(\vx) = \dfrac{L r^2}{4\pi^2+2\pi} \left(U_1(\vx) - 1\right)$ is $L$-Lipschitz smooth.

When $||\vx - \vx_i||_2 > r$ and $||\vx||_2 \leq R$, $U(\vx)=0$ is also $L$-Lipschitz smooth, which leads to the result that $U(\vx)$ is $L$-Lipschitz smooth for $||\vx||_2 \leq R$.
\item
Define $U_2(\vx) = \left( ||\vx||_2 - R/2 \right)^2$.
Then $U(\vx) = m U_2(\vx)$ when $||\vx||_2 > R/2$.

\[
H[U_2](\vx) = 2 \left(1 - \dfrac{R}{2||\vx||_2}\right) \mI + \dfrac{R}{||\vx||_2^3} \vx \vx^\rT.
\]
Similar to above, it can be proven that $||\vx \vx^\rT||_2 = ||\vx||_2^2$.
Hence $||H[U_2](\vx)||_2 \leq 2\left| 1 - \dfrac{R}{2||\vx||_2} \right| + \dfrac{R}{||\vx||_2} = 2$.
Therefore, $m U_2(\vx)$ is $2m$-Lipschitz smooth for $||\vx||_2 > R/2$.


\item
Define
\[
U_3(\vx) = \left\{
\begin{array}{l}
U_2(\vx), \ \lrn{\vx}_2 > R \\
\dfrac{1}{2} \lrn{\vx}_2^2 + \dfrac{1}{8} R^2, \ \lrn{\vx}_2 \leq R
\end{array}
\right..
\]
Then
\[
H\left[U_3(\vx) - \dfrac{1}{2} \lrn{\vx}_2^2\right] =
\left\{
\begin{array}{l}
\left(1 - \dfrac{R}{||\vx||_2}\right) \mI + \dfrac{R}{||\vx||_2^3} \vx \vx^\rT, \ \lrn{\vx}_2 > R \\
0, \ \lrn{\vx}_2 \leq R
\end{array}
\right..
\]
For any $\vy\in\mathbb{R}^d$, $\vy^\rT \vx \vx^\rT \vy = (\vy^\rT \vx)^2 \geq 0$.
Therefore all eigenvalues of $\vx \vx^\rT$ are bigger than or equal to $0$.
Since $\mI$ can be simultaneously diagonalized with $\vx \vx^\rT$, $H\left[U_3(\vx) - \dfrac{1}{2} \lrn{x}_2^2\right] \succeq \left(1 - \dfrac{R}{||\vx||_2}\right) \mI \succeq 0$ when $||\vx||_2 > R$.
When $||\vx||_2 \leq R$, $H\left[U_3(\vx) - \dfrac{1}{2} \lrn{\vx}_2^2\right]=0$.
Also note that $U_3(\vx) - \dfrac{1}{2} \lrn{\vx}_2^2$ is continuously differentiable.
Hence $U_3(\vx) - \dfrac{1}{2} \lrn{\vx}_2^2$ is convex.

On the other hand, $U(\vx) = m U_3(\vx)$ when $||\vx||_2 > R$.
Following Assumption~\ref{A2}, this implies that $U(\vx) - \dfrac{m}{2} \lrn{\vx}_2^2$ is convex on $\R^d\setminus\ball(0,R)$.
Therefore, $U(\vx)$ is $m$-strongly convex on $\R^d\setminus\ball(0,R)$.
\end{itemize}
\end{proof}

\subsection{Proof of Corollary~\ref{lemma:relation}}
\begin{corollary}
\label{lemma:relation}
There exists an objective function $U$ that is
$m$-strongly convex outside of a region of radius $R$ and $L$-Lipschitz smooth,
such that for $\hat{\vx} \sim q^{*}_{\beta}$, it is required that $\beta=\widetilde{\Omega}\left(d/\epsilon\right)$ to have
$U(\hat{\vx})-U\left(\vx^*\right)<\epsilon$ for a constant probability.
Moreover, number of iterations required for the Langevin algorithms
is $K = e^{\widetilde{\mathcal{O}} \left( d \cdot LR^2/\epsilon \right)}$ to guarantee that $U(\vx^K)-U\left(\vx^*\right)<\epsilon$ for a constant probability.
\end{corollary}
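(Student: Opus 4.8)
The plan is to instantiate the hard function $U$ built in the proof of Theorem~\ref{theorem:lower_bound}, equation~\eqref{cost_func}, specialized to a single well located at $\vx^*=0$ (so that $\nabla U(0)=0$ holds automatically); as in that proof, Lemma~\ref{smooth_convex} shows this $U$ is $L$-Lipschitz smooth and $m$-strongly convex outside a region of radius $R$, hence satisfies Assumptions~\ref{A1}--\ref{A3}. With the same choice $r=\sqrt{(2\pi^2+\pi)\epsilon/L}=\Theta(\sqrt{\epsilon/L})$ we have $U(\vx^*)=-\epsilon$, $U\equiv 0$ on the plateau $\ball(0,R/2)\setminus\ball(0,r)$, and $U=m(\|\vx\|-R/2)^2$ for $\|\vx\|>R/2$. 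The first step is the elementary observation that the target set $G:=\{\vx:U(\vx)-U(\vx^*)<\epsilon\}$ is exactly the open well $\ball(0,r)$, because $U<0$ strictly inside it and $U\ge 0$ everywhere outside, and that $q^*_\beta(G)=\int_{\ball(0,r)}e^{-\beta U}\,\rd\vx\big/\int_{\R^d}e^{-\beta U}\,\rd\vx$.

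For the necessity of large $\beta$, I would bound this ratio by crude volume estimates: the numerator is at most $e^{\beta\epsilon}\,\mathrm{vol}(\ball(0,r))$ since $U\ge-\epsilon$, while the denominator is at least the plateau mass $\mathrm{vol}(\ball(0,R/2))-\mathrm{vol}(\ball(0,r))=\Theta((R/2)^d)$, valid because $r<R/4$ under the hypothesis $\epsilon\le\mathcal O(LR^2)$. Hence $q^*_\beta(G)\le 2e^{\beta\epsilon}(2r/R)^d$, so $q^*_\beta(G)\ge p$ for a constant $p$ forces $\beta\epsilon\ge d\ln(R/2r)-\ln(2/p)$, i.e. $\beta=\Omega\big(\tfrac d\epsilon\ln\tfrac{LR^2}{\epsilon}\big)=\widetilde\Omega(d/\epsilon)$. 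This is precisely the inverse-temperature analogue of the packing bound $T\ge p\eta$ in the proof of Theorem~\ref{theorem:lower_bound}: $e^{\beta\epsilon}$ is the ``budget'' and $\eta\asymp(R/2r)^d$ the number of decoy balls.

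I would then check that $\beta=\widetilde{\mathcal O}(d/\epsilon)$ is also sufficient for constant success probability: taking $\beta=c\,\tfrac d\epsilon\ln(R/2r)$ with an absolute constant $c$, a Laplace-type lower bound on $\int_{\ball(0,r)}e^{-\beta U}\,\rd\vx$ near the bottom (where $U(\vx)-U(\vx^*)\asymp\tfrac{L}{r^2}\|\vx\|^4$), combined with the upper bound $\int_{\R^d}e^{-\beta U}\le\int_{\ball(0,r)}e^{-\beta U}+\mathrm{vol}(\ball(0,R/2))+\int_{\|\vx\|>R/2}e^{-\beta m(\|\vx\|-R/2)^2}\,\rd\vx$, yields $q^*_\beta(G)\ge p_0$ for a fixed $p_0>0$. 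Bounding the last (strongly convex) integral needs the outer Gaussian spread $\sqrt{d/(\beta m)}$ to be $\lesssim R$, i.e. $\beta mR^2\gtrsim d$, which does hold since $\beta=\widetilde\Theta(d/\epsilon)$ and $\epsilon\le\mathcal O(LR^2)$. This balancing act — choosing $\beta$ large enough that $q^*_\beta$ genuinely concentrates inside the width-$\Theta(\sqrt{\epsilon/L})$ well at the global minimum and that the strongly convex tail does not swamp the partition function, yet no larger than $\widetilde{\mathcal O}(d/\epsilon)$ — is the main obstacle.

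Finally, I would run ULA (or MALA) with target $q^*_\beta\propto e^{-\beta U}$ from the initialization $\NORMAL\big(0,\tfrac1{\beta L}\Ind_d\big)$. The potential $\beta U$ is $(\beta L)$-Lipschitz smooth and $(\beta m)$-strongly convex outside radius $R$, with unchanged condition number $\kappa$, so Theorem~\ref{thm:MCMC} gives a mixing time to total-variation accuracy $\delta:=p_0/2$ of $\exp(\mathcal O(\beta LR^2))\cdot\mathrm{poly}(d,\kappa)$; since $\beta LR^2=\widetilde\Theta(dLR^2/\epsilon)\gtrsim d$ dominates the polynomial factors, this is $K=\exp(\widetilde{\mathcal O}(dLR^2/\epsilon))$. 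For the iterate $\vx^K$ produced after that many steps, $\|\mathrm{law}(\vx^K)-q^*_\beta\|_{\mathrm{TV}}\le\delta$, hence $\Prob[U(\vx^K)-U(\vx^*)<\epsilon]\ge q^*_\beta(G)-\delta\ge p_0/2$, a constant, which establishes the claim.
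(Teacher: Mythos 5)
Your proposal is correct and follows essentially the same route as the paper: the same hard instance from \eqref{cost_func} with $r=\sqrt{(2\pi^2+\pi)\epsilon/L}$, the same volume-ratio bound $q^*_\beta(\ball(\vx^*,r))\le e^{\beta\epsilon}(2r/R)^d$ forcing $\beta=\widetilde\Omega(d/\epsilon)$, and the same final step of applying Theorem~\ref{thm:MCMC} to the rescaled potential $\beta U$ (Lipschitz constant $\beta L$) together with a total-variation argument. The one addition you make — verifying that $\beta=\widetilde{\mathcal{O}}(d/\epsilon)$ also \emph{suffices} for $q^*_\beta$ to place constant mass in the well — is a step the paper leaves implicit but which is genuinely needed for the ``moreover'' clause, so it strengthens rather than departs from the paper's argument.
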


To use Langevin algorithm to attain optimal value with probability $p$, we separate the optimization problem into two:
one is to find a parameter $\beta$ such that $\hat{\vx} \sim q^*_\beta \propto e^{-\beta U}$ has probability $p$ of being close to the optimum $\vx^*$ (i.e., $P(U(\hat{\vx})-U\left(\vx^*\right)<\epsilon)\geq p$);
another is to sample from a distribution $q^{K}_{\beta}$ after $K$-th iteration so that it is $\delta$-close to $q^*_\beta$, for $\delta \leq p/2$ in TV distance.
Then by the definition of TV distance, $\vx^K \sim q^{K}_{\beta}$ will have probability $p/2$ of being close to the optimum $\vx^*$.

\begin{proof}[Proof of Corollary~\ref{lemma:relation}]
We take $U$ as the one defined in \eqref{cost_func} and similarly take $r=\sqrt{(2\pi^2+\pi)\epsilon/L}$.
Then $\vx^* = \arg\min_{\vx\in\mathbb{R}^d} U(\vx) = \vx_{i^*}$ and $\min_{\vx\in\mathbb{R}^d} U(\vx) = -\dfrac{L r^2}{2\pi^2+\pi} = -\epsilon$.
For $U(\hat{\vx})-U\left(\vx^*\right)<\epsilon$, it is required that $\lrn{\hat{\vx}-\vx^*} \leq r$.

If $\hat{\vx}$ follows the law of $q^{*}_{\beta}$, then denote the associated probability measure $\rd \Pi^*_{\beta} = q^{*}_{\beta} \rd \hat{\vx}$.
We then estimate the probability that $\hat{\vx}\in\ball\left(\vx^*, r\right)$
\begin{align}
P\left(\lrn{\hat{\vx}-\vx^*} \leq r\right)
&= \Pi^*_{\beta} \big( \ball\left(\vx^*, r\right) \big)
\nonumber\\
&= \dfrac{ \int_{\ball\left(\vx^*, r\right)} e^{-\beta U(\vx)} \rd \vx }
{ \int_{\ball\left(\vx^*, r\right)} e^{-\beta U(\vx)} \rd \vx
+ \int_{\ball\left(0, R/2\right) \setminus\ball\left(\vx^*, r\right) } e^{-\beta U(\vx)} \rd \vx
+ \int_{\mathbb{R/2}^d\setminus\ball\left(0, R/2\right)} e^{-\beta U(\vx)} \rd \vx }
\nonumber\\
&= \dfrac{ \int_{\ball\left(\vx^*, r\right)} e^{-\beta U(\vx)} \rd \vx }
{ \int_{\ball\left(\vx^*, r\right)} e^{-\beta U(\vx)} \rd \vx
+ \int_{\ball\left(0, R/2\right) \setminus\ball\left(\vx^*, r\right) } 1 \ \rd \vx
+ \int_{\mathbb{R/2}^d\setminus\ball\left(0, R/2\right)} e^{-\beta U(\vx)} \rd \vx }
\nonumber\\
&= \dfrac{ \int_{\ball\left(\vx^*, r\right)} e^{-\beta U(\vx)} \rd \vx }
{ \int_{\ball\left(\vx^*, r\right)} \left( e^{-\beta U(\vx)} - 1 \right) \rd \vx
+ \int_{\ball\left(0, R/2\right)} 1 \ \rd \vx
+ \int_{\mathbb{R/2}^d\setminus\ball\left(0, R/2\right)} e^{-\beta U(\vx)} \rd \vx }
\nonumber\\
&\leq \dfrac{ \int_{\ball\left(\vx^*, r\right)} e^{-\beta U(\vx)} \rd \vx }
{ \int_{\ball\left(0, R/2\right)} 1 \ \rd \vx }
\nonumber\\
&\leq \dfrac{ e^{-\min_{\lrn{\vx-\vx^*} \leq r} \beta U(\vx)} \int_{\ball\left(\vx^*, r\right)} 1 \ \rd \vx }
{ \int_{\ball\left(0, R/2\right)} 1 \ \rd \vx }
\nonumber\\
&= e^{\beta \epsilon} \dfrac{ \int_{\ball\left(\vx^*, r\right)} 1 \ \rd \vx }
{ \int_{\ball\left(0, R/2\right)} 1 \ \rd \vx }
\nonumber\\
&= e^{\beta \epsilon} \left(\dfrac{2r}{R}\right)^d.
\end{align}
To obtain that $P(U(\hat{\vx})-U\left(\vx^*\right)<\epsilon)=P\left(\lrn{\hat{\vx}-\vx^*} \leq r\right)\geq p$,
we need that
\[
e^{\beta \epsilon} \left(\dfrac{2r}{R}\right)^d \geq p.
\]
Therefore,
\[
\beta
\geq \dfrac{1}{\epsilon} \ln p + \dfrac{d}{\epsilon} \ln\left(\dfrac{R}{2r}\right)
= \dfrac{1}{\epsilon} \ln p + \dfrac{1}{2} \dfrac{d}{\epsilon} \ln\left(\dfrac{1}{4(2\pi^2+\pi)} \dfrac{LR^2}{\epsilon}\right).
\]

To use the Langevin algorithms to search for optimum, we are actually using $\vx^K$, which follows the sampled distribution $q^{K}_{\beta}$ at $K$-th step.
And we are taking $K$ large enough so that $\lrn{ q^{K}_{\beta} - q^{*}_{\beta} }_{{\text TV}} \leq \delta$, for $\delta \leq p/2$.
Then, for a large enough $K$, we can have
\begin{align}
\lefteqn{\left| P\left(\lrn{\vx^K-\vx^*} \leq r\right) - P\left(\lrn{\hat{\vx}-\vx^*} \leq r\right) \right|}
\nonumber\\
&= \left| \Pi^K_{\beta} \big( \ball\left(\vx^*, r\right) \big) - \Pi^*_{\beta} \big( \ball\left(\vx^*, r\right) \big) \right|
\nonumber\\
&\leq \sup_{A} \left| \Pi^K_{\beta}(A) - \Pi^*_{\beta}(A) \right|
\nonumber\\
&= \lrn{ q^{K}_{\beta} - q^{*}_{\beta} }_{{\text TV}}
\nonumber\\
&\leq \delta,
\end{align}
which guarantees that $P\left(\lrn{\vx^K-\vx^*} \leq r\right) \geq p/2$.

We directly obtain from Theorem~\ref{thm:MCMC} that for the objective function $\beta U$ with Lipschitz constant $\beta L \geq \dfrac{L}{\epsilon} \ln p + \dfrac{d}{2} \dfrac{L}{\epsilon} \ln\left(\dfrac{1}{4(2\pi^2+\pi)} \dfrac{LR^2}{\epsilon}\right)$, we need to iterate $e^{\widetilde{\mathcal{O}} \left( d \cdot LR^2/\epsilon \right)}$ steps to guarantee convergence.

\end{proof}

\section{Proofs for Gaussian Mixture Models}

\label{sec:GMM_proof}

Consider the problem of inferring mean parameters $\pmu=(\mu_1,\cdots,\mu_M)\in\mathbb{R}^{d\times M}$ in a Gaussian mixture model with $M$ mixtures from $N$ data $\vy=(y_1,\cdots,y_N)$:
\begin{align}
&p\left(y_n | \pmu\right) = \sum_{i=1}^M \dfrac{\lambda_i}{Z_i} \exp\left( - \dfrac{1}{2} (y_n - \mu_i)^\rT \Sigma_i^{-1} (y_n - \mu_i) \right)
+ \left( 1 - \sum_{i=1}^M \lambda_i \right) p_0(y_n),
\label{eq:GMM}
\end{align}
where
$Z_i$ are the normalization constants and $\sum_{i=1}^M \lambda_i \leq 1$.
For succinctness, we consider in this section the cases where covariances $\Sigma_i$ are isotropic and uniform across all mixture components: $\Sigma_i = \Sigma = \sigma^2 \mI$.
 $p_0(y_n)$ represents crude observations of the data (e.g., data may be distributed inside a region or may have sub-Gaussian tail behavior).
The objective function is given by the log posterior distribution: $U(\pmu) = -\log p(\pmu) - \sum_{n=1}^N \log p\left(y_n | \pmu\right)$. Assume data are distributed in a bounded region ($\lrn{y_n}\leq R$) and take $p_0(y_n)= \ind{\|y_n\|\leq R} / Z_0$ to describe this observation.

We also take the prior to be
\begin{align}
p(\pmu) \propto \exp\left( - m \left(\|\pmu\|_F - {\sqrt{M}} R\right)^2 \ind{\|\pmu\|_F \geq {\sqrt{M}} R} \right).
\label{eq:GMM_prior}
\end{align}

\subsection{Proofs for Smoothness}
\begin{fact}
For the Gaussian mixture model defined in~\eqref{eq:GMM},
define
\begin{align}
\alpha = \dfrac{1}{\sigma^2}\max\left\{ 2\sup_{\mu\in\{\mu_1,\cdots,\mu_M\}} \sum_{n=1}^N \dfrac{\lrn{\mu-y_n}^2}{\sigma^2} \exp\left( - || \mu - y_n ||^2\big/2\sigma^2 \right) ,  \sup_{\mu\in\{\mu_1,\cdots,\mu_M\}} \sum_{n=1}^N \exp\left( - || \mu - y_n ||^2\big/2\sigma^2 \right) \right\}. \label{eq:lambda_i_accurate_1}
\end{align}
If we take
$\lambda_i = \dfrac{\dfrac{l}{\alpha} Z_i}{ Z_0 + \dfrac{l}{\alpha} \sum_{j=1}^M Z_j}$,
then the log-likelihood $- \sum_{n=1}^N \log p\left(y_n | \pmu\right)$ is $l$-Lipschitz smooth.
\label{fact:GMM_smoothness}
\end{fact}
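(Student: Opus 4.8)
The plan is to collapse the mixture into a clean form, reduce the claim to a uniform operator-norm bound on one Hessian, and then exploit that this Hessian decomposes into blocks indexed by the components $\mu_1,\dots,\mu_M$. First I would observe that the assumption $\lrn{y_n}\le R$ gives $p_0(y_n)=1/Z_0$, and the prescribed weights $\lambda_i=\tfrac{(l/\alpha)Z_i}{Z_0+(l/\alpha)\sum_j Z_j}$ make $\lambda_i/Z_i=(l/\alpha)\,b$ and $(1-\sum_i\lambda_i)/Z_0=b$ with the single positive constant $b=\big(Z_0+(l/\alpha)\sum_j Z_j\big)^{-1}$. Writing $g_{n,i}=\exp(-\lrn{y_n-\mu_i}^2/2\sigma^2)$, $S_n(\pmu)=\sum_i g_{n,i}$, and $t:=l/\alpha$, this gives $p(y_n\mid\pmu)=b\big(1+tS_n(\pmu)\big)$, so $-\sum_n\log p(y_n\mid\pmu)$ differs from $\sum_n\log(1+tS_n)$ by a constant; it therefore suffices to prove $-l\,\mI\preceq\nabla^2\!\big[\sum_n\log(1+tS_n)\big]\preceq l\,\mI$ for all $\pmu$.

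The core computation is the Hessian of $f_n:=\log(1+tS_n)$, namely $\nabla^2 f_n=\tfrac{t}{1+tS_n}\nabla^2 S_n-\tfrac{t^2}{(1+tS_n)^2}\nabla S_n(\nabla S_n)^\rT$. Since $S_n$ is separable across the $\mu_i$, $\nabla^2 S_n$ is block diagonal with $i$-th block $g_{n,i}\big(\tfrac{(y_n-\mu_i)(y_n-\mu_i)^\rT}{\sigma^4}-\tfrac{\mI}{\sigma^2}\big)$, and $\nabla S_n$ stacks the blocks $g_{n,i}(y_n-\mu_i)/\sigma^2$. Setting $w_{n,i}=\tfrac{t g_{n,i}}{1+tS_n}\in(0,1)$, which satisfy $\sum_i w_{n,i}=\tfrac{tS_n}{1+tS_n}<1$, and letting $\tilde v_{n,i}$ be the stacked vector equal to $(y_n-\mu_i)/\sigma^2$ in block $i$ and zero elsewhere, the Hessian rearranges to $\nabla^2 f_n=M_n-\tfrac{1}{\sigma^2}\mathrm{diag}_i(w_{n,i}\mI)$ with
\[
M_n=\sum_i w_{n,i}\,\tilde v_{n,i}\tilde v_{n,i}^\rT-\Big(\sum_i w_{n,i}\tilde v_{n,i}\Big)\Big(\sum_i w_{n,i}\tilde v_{n,i}\Big)^{\!\rT}.
\]
The key point is $M_n\succeq0$: with $a_i=w_{n,i}\tilde v_{n,i}$ it reads $\sum_i w_{n,i}^{-1}a_ia_i^\rT-(\sum_i a_i)(\sum_i a_i)^\rT$, and for any $z$ the Cauchy--Schwarz inequality gives $\big(z^\rT\!\sum_i a_i\big)^2\le\big(\sum_i w_{n,i}\big)\big(\sum_i w_{n,i}^{-1}(z^\rT a_i)^2\big)\le\sum_i w_{n,i}^{-1}(z^\rT a_i)^2$.

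Given this structure, the two matrix bounds follow by summing over $n$ and estimating block operator norms with the quantities $\alpha$ is built to control (using $w_{n,i}\le t g_{n,i}$, from $1+tS_n\ge1$). For the lower bound, $M_n\succeq0$ leaves $\nabla^2\!\big[\sum_n f_n\big]\succeq-\tfrac1{\sigma^2}\mathrm{diag}_i\big(\sum_n w_{n,i}\mI\big)$, and $\sum_n w_{n,i}\le t\sum_n g_{n,i}\le t\sigma^2\alpha$ forces every block to be $\succeq-l\,\mI$ (this uses the branch $\tfrac1{\sigma^2}\sup_\mu\sum_n g(\mu,y_n)\le\alpha$). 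For the upper bound, discarding the two negative pieces gives $\nabla^2\!\big[\sum_n f_n\big]\preceq\mathrm{diag}_i\big(\sum_n w_{n,i}(y_n-\mu_i)(y_n-\mu_i)^\rT/\sigma^4\big)$, whose $i$-th block has norm at most $\sum_n w_{n,i}\tfrac{\lrn{y_n-\mu_i}^2}{\sigma^4}\le\tfrac{t}{\sigma^2}\sum_n\tfrac{\lrn{y_n-\mu_i}^2}{\sigma^2}g_{n,i}\le l/2$ (this uses the branch $\tfrac2{\sigma^2}\sup_\mu\sum_n\tfrac{\lrn{\mu-y_n}^2}{\sigma^2}g(\mu,y_n)\le\alpha$). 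Combining, $\lrn{\nabla^2[\sum_n f_n]}_{\mathrm{op}}\le l$, i.e.\ the log-likelihood is $l$-Lipschitz smooth.

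The hard part will be getting the structural identity right — recognizing $\nabla^2 f_n$ as the positive semidefinite matrix $M_n$ minus a nonnegative block-diagonal term, and verifying $M_n\succeq0$ via Cauchy--Schwarz using $\sum_i w_{n,i}<1$; after that everything reduces to routine block-norm bookkeeping calibrated precisely to the two quantities in the maximum defining $\alpha$. A secondary point is reading $\sup_{\mu\in\{\mu_1,\dots,\mu_M\}}$ as a supremum over all admissible component locations (that is, over $\mathbb R^d$), which is exactly what makes the Hessian estimate hold uniformly in $\pmu$ rather than only at a fixed configuration.
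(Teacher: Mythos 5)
Your proposal is correct and follows essentially the same route as the paper: write the likelihood as a constant times $1+tS_n$, compute the blockwise Hessian, use the fact that the responsibilities $w_{n,i}$ (the paper's $\gamma_{i,n}$) sum to less than one because of the background component, and calibrate the resulting block norms against the two branches of $\alpha$. Your packaging of the cross terms as a positive semidefinite ``covariance'' matrix $M_n\succeq 0$ via Cauchy--Schwarz is the same estimate the paper obtains by bounding $\sum_{i,j}\gamma_{i,n}\gamma_{j,n}A_iA_j$ with AM--GM, and your reading of $\sup_{\mu}$ as ranging over all of $\mathbb{R}^d$ matches the paper's intent (it restates $\alpha$ with an unrestricted supremum in its proof).
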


\begin{proof}[Proof of Fact~\ref{fact:GMM_smoothness}]
Define the mixture components: $W_{i,n}=\dfrac{\lambda_i}{Z_i} \exp\left( - \dfrac{1}{2}|| y_n - \mu_i ||^2/\sigma^2 \right)$ and $C_n=\left( 1 - \sum_{i=1}^M \lambda_i \right) p_0(y_n)$.
Since all the data $\{y_n\}$ are distributed in $\ball(0,R)$, $p_0(y_n)=\dfrac{1}{Z_0} \ind{\|y_n\|\leq R}=\dfrac{1}{Z_0}$.
We can plug in the expression of $\lambda_i = \dfrac{\dfrac{l}{\alpha} Z_i}{ Z_0 + \dfrac{l}{\alpha} \sum_{j=1}^M Z_j}$ and obtain for any $n=1,\cdots,N$:
\begin{align}
C_n = C &= \dfrac{1}{Z_0} \left( 1 - \sum_{i=1}^M \lambda_i \right) \nonumber\\
&= \dfrac{1}{Z_0} \left( 1 - \dfrac{ \dfrac{l}{\alpha} \sum_{i=1}^M Z_i }{ Z_0 + \dfrac{l}{\alpha} \sum_{j=1}^M Z_j} \right) \nonumber\\
&= \dfrac{1}{ Z_0 + \dfrac{l}{\alpha} \sum_{j=1}^M Z_j}. \label{eq:def_C_simplify}
\end{align}
Then we can use $C$ to simplify the expression of $\lambda_i$ for $i=1,\cdots,M$:
\[
\lambda_i = \dfrac{l}{\alpha} C Z_i.
\]

We also represent the objective function as:
\[
U(\pmu)=-\log p(\pmu) - \sum_{n=1}^N \log p\left(y_n | \pmu\right)
=-\log p(\pmu) - \sum_{n=1}^N \log\left(\sum_{i=1}^M W_{i,n} + C\right),
\]
and define
\[
\gamma_{i,n} = \dfrac{W_{i,n}}{\sum_{k=1}^M W_{k,n} + C}.
\]
One can find that
\[
- \nabla_{\mu_i} \log p\left(y_n | \pmu\right) = \dfrac{W_{j,n}}{\sum_{j=1}^M W_{j,n} + C} \dfrac{\mu_i-y_n}{\sigma^2}
= \gamma_{i,n} \dfrac{\mu_i-y_n}{\sigma^2},
\]
and
\begin{align*}
- \nabla^2_{\mu_i, \mu_j} \log p\left(y_n | \pmu\right) &=
\left\{
\begin{array}{l}
\dfrac{\gamma_{i,n}}{\sigma^2} \mI + (\gamma_{i,n}^2 - \gamma_{i,n}) \dfrac{(\mu_i-y_n)(\mu_i-y_n)^T}{\sigma^4}, \quad i=j \\
\gamma_{i,n}\gamma_{j,n} \dfrac{(\mu_i-y_n)(\mu_j-y_n)^T}{\sigma^4}, \quad i\neq j
\end{array}
\right. .
\end{align*}
For any vector $\vv$,
\begin{align*}
\lefteqn{- \vv^T \nabla^2_{\pmu^2} \log p\left(y_n | \pmu\right) \vv}
\\
&= \sum_{i=1}^M \dfrac{\gamma_{i,n}}{\sigma^2} v_i^T v_i
- \sum_{i=1}^M \gamma_{i,n} \left[v_i^T\left(\dfrac{\mu_i-y_n}{\sigma^2}\right)\right]^2 \\
&+ \sum_{i=1}^M \sum_{j=1}^M \gamma_{i,n} \gamma_{j,n} \left[v_i^T\left(\dfrac{\mu_i-y_n}{\sigma^2}\right)\right] \left[v_j^T\left(\dfrac{\mu_j-y_n}{\sigma^2}\right)\right].
\end{align*}
Since $\sum_{i=1}^M \gamma_{i,n} = \sum_{i=1}^M \dfrac{W_{i,n}}{\sum_{k=1}^M W_{k,n} + C} \leq 1$,
\begin{align*}
\lefteqn{\left| \sum_{i=1}^M \sum_{j=1}^M \gamma_{i,n} \gamma_{j,n} \left[v_i^T\left(\dfrac{\mu_i-y_n}{\sigma^2}\right)\right] \left[v_j^T\left(\dfrac{\mu_j-y_n}{\sigma^2}\right)\right] \right|}
\\ &\leq
\dfrac{1}{2} \sum_{i=1}^M \sum_{j=1}^M \gamma_{i,n} \gamma_{j,n} \left(\left[v_i^T\left(\dfrac{\mu_i-y_n}{\sigma^2}\right)\right]^2 + \left[v_j^T\left(\dfrac{\mu_j-y_n}{\sigma^2}\right)\right]^2\right)
\\ &\leq
\gamma_{i,n} \left[v_i^T\left(\dfrac{\mu_i-y_n}{\sigma^2}\right)\right]^2 .
\end{align*}
Therefore,
\begin{align*}
{\rm diag}\left( \dfrac{\gamma_{i,n}}{\sigma^2}\left(1-2\dfrac{\|\mu_i-y_n\|^2}{\sigma^2} \right) \mI \right)
\preceq \nabla^2_{\pmu^2} \log p\left(y_n | \pmu\right) \preceq
{\rm diag} \left( \dfrac{\gamma_{i,n}}{\sigma^2} \mI \right).
\end{align*}

Since $\{W_{i,n}\}$ are positive,
\[
\gamma_{i,n} = \dfrac{W_{i,n}}{\sum_{j=1}^M W_{j,n} + C}
\leq \dfrac{W_{i,n}}{C}
= \dfrac{\lambda_i}{C Z_i} \exp\left( - || \mu_i - y_n ||^2\big/2\sigma^2 \right).
\]
Since
\begin{align}
\alpha = \dfrac{1}{\sigma^2}\max\left\{ 2\sup_{\mu} \sum_{n=1}^N \dfrac{\lrn{\mu-y_n}^2}{\sigma^2} \exp\left( - || \mu - y_n ||^2\big/2\sigma^2 \right) ,  \sup_{\mu} \sum_{n=1}^N \exp\left( - || \mu - y_n ||^2\big/2\sigma^2 \right) \right\}. \label{eq:lambda_i_accurate_2}
\end{align}
and
\begin{align}
\lambda_i = \dfrac{l}{\alpha} C Z_i, \label{eq:lambda_i}
\end{align}
log-likelihood $- \sum_{n=1}^N \log p\left(y_n | \pmu\right)$ is $l$-Lipschitz smooth.
It can be seen from~\eqref{eq:lambda_i_accurate_2} that if one uses a loose upper bound for $\alpha$, we can simply take $\lambda_i$ to be $\dfrac{l}{2} \dfrac{C Z_i \sigma^2}{N}$.
\end{proof}

\subsection{Proofs for the EM Algorithm}
\label{sec:dataset}

\begin{figure}
\centering
\includegraphics[scale=0.4]{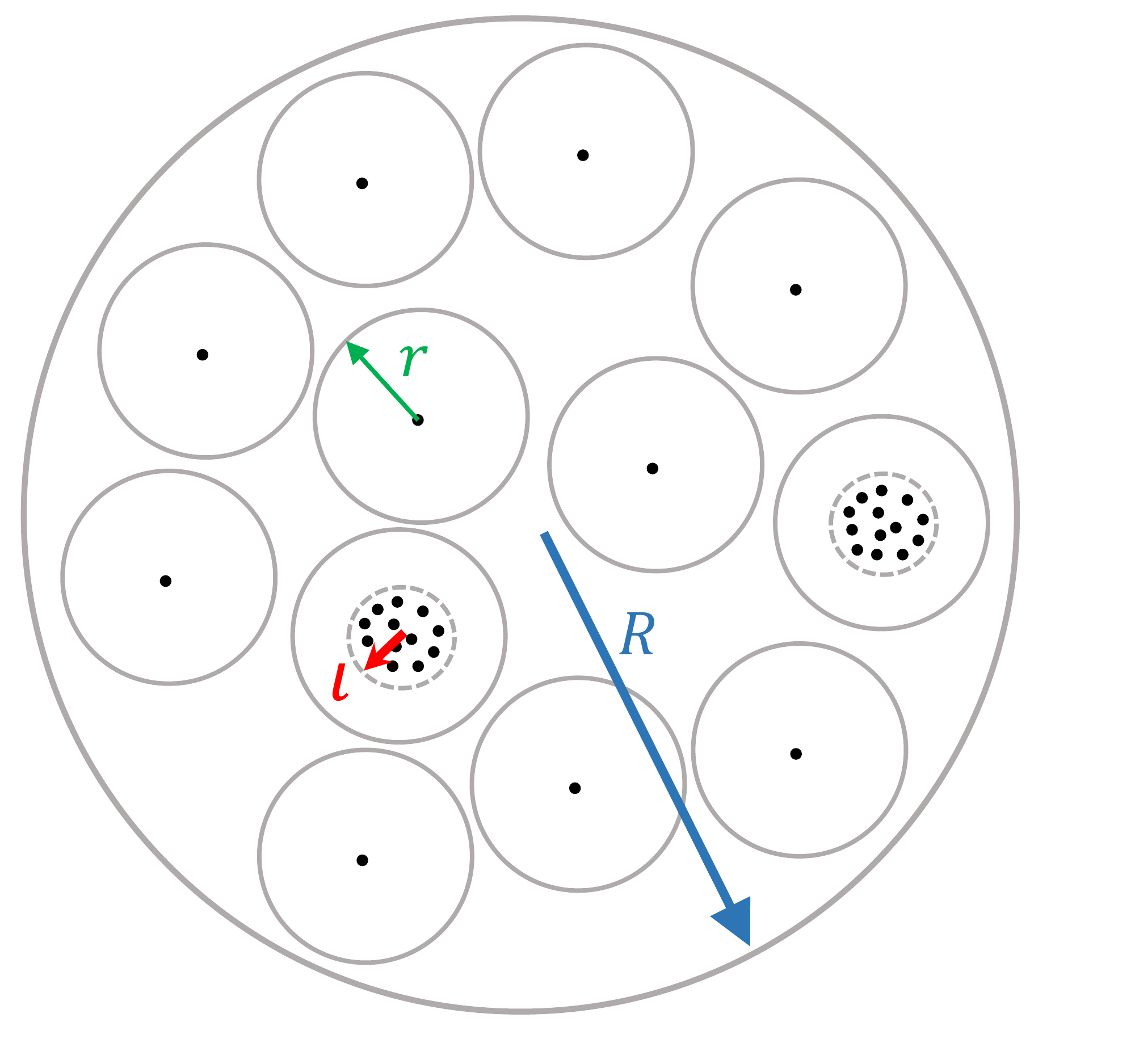}
\caption{Cartoon of an exemplified dataset.}
\label{fig:cartoon_2}
\end{figure}

We prove in the following Lemma that there exists a dataset $(y_1,\cdots,y_N)$ and variance $\sigma^2$ with the previous setting that takes $K\geq\min\{\mathcal{O}(d^{1/\epsilon}), \mathcal{O}(d^d)\}$ steps for the EM algorithm to converge if one initializes the algorithm close to the given data points.
\begin{lemma}
Let the objective function $U(\pmu) = -\log p(\pmu) - \sum_{n=1}^N \log p\left(y_n | \pmu\right)$ with prior $p(\pmu)$ and likelihood $p\left(y_n | \pmu\right)$ defined in \eqref{eq:GMM_prior} and \eqref{eq:GMM}.
Take the parameters $\lambda_i$ so that the log-likelihood is Lipschitz smooth with Lipschitz constant
$L=1/16$,
strong convexity constant $m=1/64$ outside of region with radius $R=1/2$, and number of mixtures $M=\log_2 d$.
Then there exists a dataset $(y_1,\cdots,y_N)$ and variance $\sigma^2$ so that the EM algorithm will take $K\geq\min\{\mathcal{O}(d^{1/\epsilon}), \mathcal{O}(d^d)\}$ queries to converge to $\mathcal{O}(\epsilon)$ close to the optimum if one randomly initializes the algorithm $0.01$ close to the given data points.
\label{lemma:EM_converge}
\end{lemma}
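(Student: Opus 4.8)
The plan is to build, for each $d$, an explicit dataset $(y_1,\dots,y_N)\subset\ball(0,R)$ and model variance $\sigma^2$ — with the mixture weights $\lambda_i$ then fixed by Fact~\ref{fact:GMM_smoothness} so that the log-likelihood is exactly $L=\tfrac1{16}$-smooth — for which the posterior $U(\pmu)=-\log p(\pmu)-\sum_n\log p(y_n\mid\pmu)$ satisfies Assumptions~\ref{A1}--\ref{A3} with the stated constants ($L=\tfrac1{16}$, $m=\tfrac1{64}$, $R=\tfrac12$, $M=\log_2 d$, so $MR^2=\Theta(\log d)$ and the strongly convex region has radius $2R\sqrt M=\Theta(\sqrt{\log d})$), yet carries an enormous number of spurious local optima. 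Concretely I would spread the data over a well-separated family of locations $\{z_1,\dots,z_\eta\}\subset\ball(0,R)$ in $\R^d$ — essentially a packing whose cardinality $\eta$ is taken as large as the accuracy/dimension constraints permit — place a small cluster of data points at each $z_j$, and choose $\sigma$ small relative to $\min_{i\ne j}\lrn{z_i-z_j}$ so that the Gaussian mass of one cluster is negligible at every other. A mild $\Omega(\epsilon)$-scale asymmetry in the configuration (using, e.g., the prior \eqref{eq:GMM_prior}, which is flat on $\lrn{\pmu}_F<\sqrt M R$ and quadratic just outside) then singles out one set $S^\ast$ of $M$ locations so that the minimizer $\pmu^\ast$ of $U$ occupies $S^\ast$ and every configuration occupying a different $M$-set is $\Omega(\epsilon)$-larger in objective value; making this quantitative against the explicit constants is part of the bookkeeping.

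Next I would analyse the EM trajectory from any initialization $\pmu_0$ with each $\mu_i^0$ within $0.01$ of a data point, and show it converges in $\mathcal{O}(\log(1/\epsilon))$ steps to a fixed point in which each $\mu_i$ sits within $\mathcal{O}(\sqrt\epsilon)$ of a single cluster, the occupied clusters being exactly the ``rounding'' of $\pmu_0$. Three ingredients: (i) because $\sigma$ is tiny relative to the packing distance, the E-step responsibilities $\gamma_{i,n}$ are within $\exp(-\Omega(\min_{i\ne j}\lrn{z_i-z_j}^2/\sigma^2))$ of $\{0,1\}$ whenever the means are near distinct clusters; (ii) by the M-step — a responsibility-weighted average, with the flat interior of the prior contributing no drift — a mean near a cluster is pulled to that cluster's centroid and cannot leave its basin; (iii) a mean not within $\mathcal{O}(\sigma)$ of any cluster captures exponentially little mass and is frozen by the prior. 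Hence EM never transports a mean across basins, so its output depends only on which clusters $\pmu_0$ rounds to; in particular EM reaches an $\epsilon$-optimal point only if $\pmu_0$ already rounds to $S^\ast$.

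It then remains to estimate the probability $p$ that the standard ``pick $M$ random data points'' initialization rounds to $S^\ast$. Since every data location is equally weighted, this is $p = 1/\binom{\eta}{M}$ up to lower-order factors, so $\log_d(1/p)=\Theta(\log_2\eta)$ (using $M=\log_2 d$). By Fact~\ref{fact:GMM_smoothness}, holding $L=\tfrac1{16}$ makes each spurious well's depth of order $L\sigma^2$, which must exceed $\epsilon$; this forces $\sigma\gtrsim\sqrt{\epsilon/L}$, hence a separation of that order, hence $\eta\lesssim(LR^2/\epsilon)^{d/2}$, while achieving the $d^{1/\epsilon}$ rate only requires $\eta\gtrsim 2^{1/\epsilon}$. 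Choosing $\eta=\min\{2^{1/\epsilon},(LR^2/\epsilon)^{d/2}\}$ gives $\log_d(1/p)\ge\min\{1/\epsilon,\,d\}$, the two regimes reflecting which cap is binding. A restart argument finishes: a single run of EM from a non-optimal start stalls, so any EM scheme must restart, and placing the initialization in the minimizer's basin with constant probability needs $\Omega(1/p)$ restarts, hence at least $\Omega(d^{\min\{1/\epsilon,d\}})=\min\{\mathcal{O}(d^{1/\epsilon}),\mathcal{O}(d^d)\}$ queries; this also matches Theorem~\ref{theorem:lower_bound} applied with nonconvex radius $\Theta(\sqrt{\log d})$.

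The main obstacle is the joint feasibility of all the quantitative constraints: $\sigma$ must be small enough for cluster isolation and small enough that the spurious optima are genuinely $\Omega(\epsilon)$-deep, yet — through Fact~\ref{fact:GMM_smoothness} — large enough that pinning the smoothness at $L=\tfrac1{16}$ does not drive the weights $\lambda_i$ so small that the mixture components (and hence the wells) effectively vanish; simultaneously $\eta$ must be pushed to the maximum compatible with a packing of $\ball(0,R)$ in $\R^d$ and with the radius constraint $\lrn{\pmu}_F\le\sqrt M R$, and it is precisely the interaction of these two limits that produces the $\min\{1/\epsilon,d\}$ exponent. The other delicate point is verifying that EM genuinely stalls at every iteration (not merely the first), including the mild boundary effect that the perturbed $\pmu_0$ may sit just outside $\sqrt M R$ where the prior becomes active, and rendering the $\Omega(\epsilon)$-gap asymmetry quantitative against the specific constants $L=\tfrac1{16}$, $m=\tfrac1{64}$, $R=\tfrac12$.
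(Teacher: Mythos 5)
Your overall strategy coincides with the paper's: a packing of well-separated locations in $\ball(0,R)$, a variance $\sigma$ small enough that the E-step responsibilities are essentially $0/1$ across locations, a freezing lemma showing each mean stays within the basin of the data point it was initialized near (this is exactly the role of Lemma~\ref{lemma:EM_fix_point}), a combinatorial count of the exponentially many EM fixed points, and a trade-off between the depth of the spurious wells and the number of packing locations that yields the two regimes $N\sim 2^{1/\epsilon}$ and $N\sim 2^{d}$ and hence $K\geq\min\{\mathcal{O}(d^{1/\epsilon}),\mathcal{O}(d^{d})\}$ with $M=\log_2 d$.

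There is, however, one concrete gap: the mechanism that singles out the distinguished $M$-subset $S^{\ast}$. You place an identical small cluster at \emph{every} packing location and then appeal to ``a mild $\Omega(\epsilon)$-scale asymmetry\ldots using, e.g., the prior \eqref{eq:GMM_prior}''. That prior is radially symmetric and identically flat on $\{\lrn{\pmu}_F<\sqrt{M}R\}$, so it cannot distinguish one $M$-subset of interior packing locations from another; with identical clusters everywhere, all $\binom{\eta}{M}$ configurations are exact ties, there is no $\Omega(\epsilon)$ gap between the global optimum and the other fixed points, and the lower bound collapses (EM would be $\epsilon$-optimal from essentially any initialization). The paper creates the asymmetry in the data itself: of the $N-9M$ separated singleton locations, $M$ randomly chosen ones each receive $9$ additional points within $\sigma/2$, so a mean sitting on a cluster of $10$ points gains roughly $9\cdot\Theta(\sigma^{2})$ in log-likelihood over a mean sitting on a singleton --- the per-point contribution is $\Theta(\sigma^{2})$ because Fact~\ref{fact:GMM_smoothness} forces $\lambda_i/Z_i=\Theta(\sigma^{2})$ to keep $L=1/16$. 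This is also what pins down the quantitative relation $\epsilon=\mathcal{O}(\sigma^{2})=\mathcal{O}(1/\log_2 N)$ that you defer to ``bookkeeping'': it is precisely this relation, combined with the packing cap $N\leq 2^{d}$ at \emph{constant} separation $0.1$ (the paper does not shrink the separation with $\sigma$ as you propose), that produces the exponent $\min\{1/\epsilon,d\}$. Once the asymmetry is repaired in this way, the remainder of your argument goes through along the same lines as the paper's.
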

Proof of Lemma~\ref{lemma:EM_converge} shares similar traits as that in~\citep{Chi_GMM,Mathias_GMM}.

Directly invoking Theorem~\ref{thm:MCMC}, we know that the Langevin algorithms converge within $K \leq \widetilde{\mathcal{O}}\left(d^3/\epsilon\right)$ and
$K \leq \widetilde{\mathcal{O}}\left(d^3 \ln^2\left(1/\epsilon\right)\right)$
steps, respectively.

\begin{proof}[Proof of Lemma~\ref{lemma:EM_converge}]
Consider a dataset with $N$ number of $d$-dimensional data points, $y_n\in\mathbb{R}^d$, $n=1,\cdots,N$, described below.
We suppose that it is modeled with $M<N$ mixture components
in the Gaussian mixture model~\eqref{eq:GMM}.

For the first $N - 9M$ points, let $\|y_n\|\leq 0.45$, and $\|y_k-y_l\|\geq 0.11$, where $n,k,l\in\{1, \cdots, N - 9M\}$ and $k\neq l$.
From Lemma~\ref{lemma:packing_no}, we know that when $N\leq2^{d}$, this setting is feasible.
For the next $9M$ points, first select $M$ different indices $\{i_1,\cdots,i_{M}\}$ from $\{1, \cdots, N - 9M\}$ uniformly at random.
Then for $n\in\{ N - 9M + 9(k-1) + 1 , \cdots, N - 9M + 9k \}$ ($k\in\{1,\cdots,{M}\}$), $\|y_n-y_{i_k}\|\leq \sigma/2$.

By this setting, $\forall y_n$, $\|y_n\|\leq 0.5$.
Furthermore, when $n,\hat{n}\in\{ N - 9M + 9(k-1) + 1 , \cdots, N - 9M + 9k \}\cup\{i_k\}$, $\|y_{n}-y_{\hat{n}}\| \leq \sigma/2$;
otherwise, $\|y_{n}-y_{\hat{n}}\| \geq 0.1$ for $n\neq\hat{n}$.
We depict a cartoon of this dataset in Fig.~\ref{fig:cartoon_2}.

Since it can be observed that all the data are distributed in $\ball(0,0.5)$, we let $p_0(y_n) = \dfrac{1}{Z_0} \ind{\|y_n\|\leq 0.5} = \dfrac{\Gamma(d/2+1)}{(2\pi)^{d/2}} \ind{\|y_n\|\leq 0.5}$.
Inclusion of $p_0$ provides a better description of the data, since they are mostly distributed uniformly in $\ball(0,0.5)$, with some concentrated around the chosen $M$ centers.
Then according to \eqref{eq:GMM_prior}, we set the prior to be:
\begin{align*}
p(\pmu) \propto \exp\left( - \dfrac{\left(\|\pmu\|_F - {\sqrt{M}}/{2}\right)^2}{64} \ind{\|\pmu\|_F \geq {\sqrt{M}}/{2}} \right),
\end{align*}
where $\|\pmu\|_F = \sqrt{\sum_{i=1}^M \|\mu_i\|_2^2}$.
Note that in this setting, the positions of local minima are exactly the same as the Gaussian mixture model that does not include prior observation $p_0(y)$ and prior belief $p(\pmu)$.

We take $\lambda_i=\dfrac{1}{64\alpha} C Z_i$ (using notations defined in~\eqref{eq:lambda_i} and~\eqref{eq:def_C_simplify}).
Then the objective function defined via the log posterior:
\begin{align}
U(\pmu)&= -\log p(\pmu) - \sum_{n=1}^N \log p\left(y_n | \pmu\right)
\nonumber\\
&= -\log p(\pmu)
- \sum_{n=1}^N \log\left(\sum_{i=1}^M \dfrac{\lambda_i}{Z_i} \exp\left( - \dfrac{1}{2}|| y_n - \mu_i ||^2/\sigma^2 \right) + C \right)
\nonumber\\
&= \dfrac{\left(\|\pmu\|_F - {\sqrt{M}}/{2}\right)^2}{64} \ind{\|\pmu\|_F \geq {\sqrt{M}}/{2}} \nonumber\\
& - \sum_{n=1}^N \log\left(\sum_{i=1}^M \dfrac{1}{64\alpha} \exp\left( - \dfrac{1}{2}|| y_n - \mu_i ||^2/\sigma^2 \right) + 1\right) + \widetilde{C} \label{eq:potential_specific}
\end{align}
has Lipschitz smoothness $L\leq1/32$.
In what follows, we take $\sigma=\sigma=\dfrac{0.01}{\sqrt{\log_2 N}}$.

It can be seen that $\alpha$ in \eqref{eq:lambda_i_accurate_1} is bounded as: $\alpha \leq \dfrac{50}{\sigma^2}$.
Then $\lambda_i=\dfrac{1}{3200} C Z_i \sigma^2$. 
It can also be checked that the objective function $U$ is also $m\geq1/64$ strongly convex for $\|\pmu\|_F \geq \sqrt{M}$.


We then estimate number of fixed points for ${\|\pmu\|_F\leq{\sqrt{M}}/{2}}$ when running the EM algorithm.
If we run the EM algorithm starting with ${\|\pmu^{(t)}\|_F\leq{\sqrt{M}}/{2}}$, we first compute the weights for each component using old value $\pmu^{(t)}$ (in E step):
\begin{align}
\gamma_{i,n}^{(t)} &= \dfrac{\dfrac{\lambda_i}{Z_i} \exp\left( - \dfrac{1}{2}|| y_n - \mu_i^{(t)} ||^2/\sigma^2 \right)}{\sum_{j=1}^M \dfrac{\lambda_j}{Z_j} \exp\left( - \dfrac{1}{2}|| y_n - \mu_j^{(t)} ||^2/\sigma^2 \right) + C} \nonumber\\
&= \dfrac{\dfrac{\sigma^2}{3200} \exp\left( - \dfrac{1}{2}|| y_n - \mu_i^{(t)} ||^2/\sigma^2 \right)}{\sum_{j=1}^M \dfrac{\sigma^2}{3200} \exp\left( - \dfrac{1}{2}|| y_n - \mu_j^{(t)} ||^2/\sigma^2 \right) + 1}. \label{eq:gamma_expression}
\end{align}
We then update $\pmu$ (in M step):
\begin{align*}
\mu_i^{(t+1)} = \sum_{n=1}^N \dfrac{\gamma_{i,n}^{(t)}}{ \sum_{\hat{n}=1}^N \gamma_{i,\hat{n}}^{(t)} } y_n.
\end{align*}

We prove in Lemma~\ref{lemma:EM_fix_point} that $\forall y_{n_i}, n_i\in\{1,\cdots,N/2\}$, if $\|\mu_i^{(0)} - y_{n_i}\| \leq 0.01$, then $\|\mu_i^{(\tau)} - y_{n_i}\| \leq 0.01$, $\forall \tau>0$.
Therefore, any $M$ combinations of $N-9M$ data points is a fixed point for $\pmu$.

\begin{lemma}
Suppose we run the EM algorithm with the dataset specified in the beginning of Sec.~\ref{sec:dataset} for $T$ steps.
If we initialized each component of $\pmu$ with $\|\mu_i^{(0)} - y_{n_i}\| \leq {0.01}$ for ${n_i}\in\{1,\cdots,N/2\}$, then $\|\mu_i^{(\tau)} - y_{n_i}\| \leq {0.01}$, $\forall \tau>0$. \label{lemma:EM_fix_point}
\end{lemma}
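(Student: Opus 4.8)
The plan is to prove Lemma~\ref{lemma:EM_fix_point} by induction on $\tau$, the base case $\tau = 0$ being exactly the hypothesis on the initialization. For the inductive step I fix a component index $i$, assume $\|\mu_j^{(t)} - y_{n_j}\| \le 0.01$ for every $j$, and aim to show $\|\mu_i^{(t+1)} - y_{n_i}\| \le 0.01$. The first move is to split the data set, relative to the current iterate $\mu_i^{(t)}$, into a \emph{near} group and a \emph{far} group. Since $n_i \in \{1,\dots,N/2\} \subseteq \{1,\dots,N-9M\}$ (valid because the construction may assume $N \ge 18M$), the point $y_{n_i}$ is one of the well-separated data points, so every other spread-out point is at distance at least $0.11$ from it, while every clustered point lies within $\sigma/2$ of some center $y_{i_k}$. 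Combining these separation facts with the inductive bound $\|\mu_i^{(t)} - y_{n_i}\| \le 0.01$ and the triangle inequality, I would verify that
\[
\{1,\dots,N\} = S_i \cup F_i, \qquad S_i = \{n : \|y_n - y_{n_i}\| \le \sigma/2\}, \qquad F_i = \{n : \|y_n - \mu_i^{(t)}\| \ge 0.09\},
\]
where $n_i \in S_i$ (as are the nine points clustered at $y_{n_i}$ when $n_i$ happens to be one of the centers), and where $S_i \cap F_i = \emptyset$ because $\sigma/2 + 0.01 < 0.09$.

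Next I use the M-step identity $\mu_i^{(t+1)} = \sum_n w_{i,n} y_n$ with convex weights $w_{i,n} = \gamma_{i,n}^{(t)} / \sum_{\hat n}\gamma_{i,\hat n}^{(t)} \ge 0$ summing to one, so that
\[
\mu_i^{(t+1)} - y_{n_i} = \sum_{n \in S_i} w_{i,n}(y_n - y_{n_i}) + \sum_{n \in F_i} w_{i,n}(y_n - y_{n_i}),
\]
and I bound the two pieces separately. The near piece has norm at most $\sigma/2 \le 0.005$ because each of its summands has norm at most $\sigma/2$. For the far piece, all data lie in $\ball(0,\tfrac12)$, so $\|y_n - y_{n_i}\| \le 1$ and it suffices to bound $\sum_{n\in F_i} w_{i,n} \le \sum_{n\in F_i}\gamma_{i,n}^{(t)} / \gamma_{i,n_i}^{(t)}$. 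Here I plug in the closed form \eqref{eq:gamma_expression}, writing $\gamma_{i,n}^{(t)} = a_n / D_n$ with $a_n = \tfrac{\sigma^2}{3200}\exp(-\|y_n-\mu_i^{(t)}\|^2/2\sigma^2)$ and with the denominators satisfying $1 \le D_n \le 1 + \tfrac{M\sigma^2}{3200} \le 2$; the ratio then obeys $\gamma_{i,n}^{(t)}/\gamma_{i,n_i}^{(t)} \le 2 a_n/a_{n_i} \le 2\exp(-(0.09^2-0.01^2)/(2\sigma^2)) = 2\exp(-0.004/\sigma^2)$, so that $\sum_{n\in F_i} w_{i,n} \le 2N\exp(-0.004/\sigma^2)$.

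Finally, substituting $\sigma^2 = 10^{-4}/\log_2 N$ turns $0.004/\sigma^2$ into $40\log_2 N$, so the far tail is at most $2N^{\,1 - 40/\ln 2}$, which is far below $0.005$ for every $N \ge 2$. Hence $\|\mu_i^{(t+1)} - y_{n_i}\| \le 0.005 + 2N^{\,1-40/\ln 2} \le 0.01$, closing the induction. I expect the only genuinely delicate part to be the constant bookkeeping: confirming that the near/far partition is exhaustive under the inductive hypothesis, that the near-cluster contribution is indeed controlled by $\sigma/2 \le 0.005$, and that the very small choice of $\sigma$ (of order $1/\sqrt{\log N}$) is precisely what forces the far tail under the remaining budget of $0.005$. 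None of this is deep, but arranging all the constants on the correct side of $0.01$ requires care, and one must also keep track of the harmless side constraints ($N \ge 18M$, $N \ge 2$) that make the argument go through.
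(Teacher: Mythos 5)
Your proposal is correct and follows essentially the same route as the paper's proof: induction on $\tau$, using the closed form of $\gamma_{i,n}^{(t)}$ with the tiny variance $\sigma^2 = 10^{-4}/\log_2 N$ to show that points far from $\mu_i^{(t)}$ receive exponentially negligible responsibility (of order $N^{-c}$ for large $c$), so the M-step keeps $\mu_i$ pinned near $y_{n_i}$. The only difference is organizational—your single near/far partition $S_i \cup F_i$ handles the isolated-point and cluster cases uniformly, whereas the paper treats them as two separate cases with an auxiliary weighted average $y_{i_k}^{avg}$—and your constant bookkeeping ($\sigma/2 \le 0.005$ for the near piece, $2N^{1-40/\ln 2}$ for the far tail) checks out.
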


We note that the global minima $\pmu^*=(\mu_1^*,\cdots,\mu_M^*)$ will have $\forall i\in\{1,\cdots,M\}$, $\mu_i^*\in\bigcup_{k=1}^M \Omega_k$, where we denote $\Omega_k = \{ N - 9M + 9(k-1) + 1 , \cdots, N - 9M + 9k \}\cup\{i_k\}$.
It can also be checked from~\eqref{eq:potential_specific} that the difference $\epsilon$ between the global minima and any local minimum $\bar{\pmu}$ that has $\exists i\in\{1,\cdots,M\}$, s.t. $\bar{\mu}_i\notin\bigcup_{k=1}^M \Omega_k$ scales with $N$ as $\epsilon = \mathcal{O}(\sigma^2) = \mathcal{O}\left(\dfrac{1}{\log_2 N}\right)$.
Therefore, if one randomly initialize from the dataset, to attain
global minima
with probability $p$, at least
$K = p \cdot
\left(
\begin{array}{c}
N \\
M
\end{array}
\right)
\bigg/
\left(
\begin{array}{c}
10M \\
M
\end{array}
\right)
\geq
p \cdot \left(\dfrac{N}{10M}\right)^M$
re-initializations are required.
Let $N\gg M$.
Then the number of re-initializations are of order $K=\mathcal{O}(p \cdot N^M)$.

Note that we have taken $M=\log_2 d$.
For $\epsilon>\mathcal{O}(1/d)$, take $N=\mathcal{O}\left(2^{1/\epsilon}\right)$.
Then $T=\mathcal{O}\left(d^{1/\epsilon}\right)$.
For $\epsilon\leq\mathcal{O}(1/d)$, take $N=2^{d}$.
Then $T=\mathcal{O}(d^{d})$.
So $T=\min\left\{\mathcal{O}\left(d^{1/\epsilon}\right),\mathcal{O}(d^{d})\right\}$.
\end{proof}

\begin{remark}
It can be similarly proven that the gradient descent algorithm with its stepsize tuned according to the Lipschitz smoothness has the same behavior if initialized randomly from the dataset.
\end{remark}

\begin{proof}[Proof of Lemma~\ref{lemma:EM_fix_point}]
We prove for each component $\mu_i$ using induction over $t\in\{0,\cdots,\tau\}$.
First assume that $\|\mu_i^{(t)} - y_{n_i}\| \leq {0.01}$.

Then we observe from~\eqref{eq:gamma_expression} that $\forall i,n$,
\begin{align*}
\gamma_{i,n}^{(t)} = \Bigg(&
\sum_{j=1}^M \exp\left( \dfrac{1}{2}|| y_n - \mu_i ||^2/\sigma^2 - \dfrac{1}{2}|| y_n - \mu_j ||^2/\sigma^2 \right) \\
&+ \dfrac{3200}{\sigma^2} \exp\left( \dfrac{1}{2}|| y_n - \mu_i ||^2/\sigma^2 \right)
\Bigg)^{-1}.
\end{align*}
Since $\sum_{j=1}^M \exp\left(- \dfrac{1}{2}|| y_n - \mu_j ||^2/\sigma^2 \right) \leq M \leq 3200/\sigma^2$,
\begin{align}
\dfrac{\sigma^2}{6400} \exp\left( - \dfrac{1}{2}|| y_n - \mu_i ||^2/\sigma^2 \right)
\leq \gamma_{i,n}^{(t)} \leq
\dfrac{\sigma^2}{3200} \exp\left( - \dfrac{1}{2}|| y_n - \mu_i ||^2/\sigma^2 \right). 
\end{align}
Therefore, when $\|\mu_i^{(t)} - y_n\|\leq {0.01}$, $\gamma_{i,n}^{(t)}\geq\dfrac{\sigma^2}{6400} N^{-1/2}$;
when $\|\mu_i^{(t)} - y_n\|\leq {0.015}$, $\gamma_{i,n}^{(t)}\geq\dfrac{\sigma^2}{6400} N^{-9/8}$;
when $\|\mu_i^{(t)} - y_n\|\geq {0.1}$, $\gamma_{i,n}^{(t)}\leq\dfrac{\sigma^2}{3200} N^{-50}$.

\begin{itemize}
\item
For ${n_i}\in\{1,\cdots,N-9M\}\setminus\{i_1,\cdots,i_M\}$,
\begin{align*}
\|\mu_i^{(t+1)} - y_{n_i}\| &\leq
\dfrac{ \gamma_{i,{n_i}}^{(t)} }{ \sum_{\hat{n}=1}^N \gamma_{i,\hat{n}}^{(t)} }  \|y_{n_i} - y_{n_i}\|
+ \dfrac{ \sum_{\tilde{n}\neq {n_i}} \gamma_{i,\tilde{n}}^{(t)} }{ \sum_{\hat{n}=1}^N \gamma_{i,\hat{n}}^{(t)} }  \|y_{\tilde{n}} - y_{n_i}\|
\\
&= \dfrac{ \sum_{\tilde{n}\neq {n_i}} \gamma_{i,\tilde{n}}^{(t)} }{ \sum_{\hat{n}=1}^N \gamma_{i,\hat{n}}^{(t)} }  \|y_{\tilde{n}} - y_{n_i}\|.
\end{align*}
Since $\|\mu_i^{(t)} - y_{n_i}\| \leq {0.01}$ and $\|\mu_i^{(t)} - y_{\hat{n}}\| \geq {0.1}$, $\forall\hat{n}\neq n_i$ (and that $N\geq2$),
\begin{align}
\dfrac{ \sum_{\tilde{n}\neq {n_i}} \gamma_{i,\tilde{n}}^{(t)} }{ \sum_{\hat{n}=1}^N \gamma_{i,\hat{n}}^{(t)} } \leq
\dfrac{\sum_{\hat{n} \neq {n_i}} \gamma_{i,\hat{n}}^{(t)}}{\gamma_{i,{n_i}}^{(t)}}
\leq
\dfrac{ N\cdot\dfrac{\sigma^2}{3200} N^{-50} }{ \dfrac{\sigma^2}{6400} N^{-1/2} }
\leq
10^{-10}. \label{eq:mixture_comp_ratio}
\end{align}
Hence
\begin{align*}
\|\mu_i^{(t+1)} - y_{n_i}\|
&\leq \dfrac{ \sum_{\tilde{n}\neq n_i} \gamma_{i,\tilde{n}}^{(t)} }{ \sum_{\hat{n}=1}^N \gamma_{i,\hat{n}}^{(t)} }  \|y_{\tilde{n}} - y_{n_i}\|
\\
&\leq 2\cdot10^{-10}\sup_{\hat{n}}\|y_{\hat{n}}\|
\leq 10^{-10}
\leq {0.01}.
\end{align*}

\item
Denote $\Omega_k = \{ N - 9M + 9(k-1) + 1 , \cdots, N - 9M + 9k \}\cup\{i_k\}$.
For $n_i \in \Omega_k$, $\forall k\in\{1,\cdots,M\}$,
\[
\|\mu_i^{(t+1)} - y_{n_i}\| \leq
\|\mu_i^{(t+1)} - y_{i_k}\| + \|y_{n_i} - y_{i_k}\| \leq
\|\mu_i^{(t+1)} - y_{i_k}\| + \dfrac{\sigma}{2}.
\]
And
\begin{align*}
\|\mu_i^{(t+1)} - y_{i_k}\| \leq
\left\| \sum_{\tilde{n}\in\Omega_k} \dfrac{ \gamma_{i,\tilde{n}}^{(t)} }{ \sum_{\hat{n}=1}^N \gamma_{i,\hat{n}}^{(t)} } \
\left(y_{\tilde{n}} - y_{i_k} \right) \right\|
+ \sum_{\tilde{n}\notin\Omega_k} \dfrac{\gamma_{i,\tilde{n}}^{(t)}}{ \sum_{\hat{n}=1}^N \gamma_{i,\hat{n}}^{(t)} } \|y_{\tilde{n}} - y_{i_k}\|.
\end{align*}
Define
\begin{align*}
{y_{i_k}^{avg}} =
\dfrac{\sum_{\tilde{n}\in\Omega_k} \gamma_{i,\tilde{n}}^{(t)} }{ \sum_{\hat{n}\in\Omega_k} \gamma_{i,\hat{n}}^{(t)} } y_{\tilde{n}}.
\end{align*}
Then
\begin{align*}
\|\mu_i^{(t+1)} - y_{i_k}\| \leq
\dfrac{ \sum_{\tilde{n}\in\Omega_k} \gamma_{i,\tilde{n}}^{(t)} }{ \sum_{\hat{n}=1}^N \gamma_{i,\hat{n}}^{(t)} }
\left\| {y_{i_k}^{avg}} - y_{i_k} \right\|
+ \dfrac{ \sum_{\tilde{n}\notin\Omega_k} \gamma_{i,\tilde{n}}^{(t)}}{ \sum_{\hat{n}=1}^N \gamma_{i,\hat{n}}^{(t)} } \|y_{\tilde{n}} - y_{i_k}\|.
\end{align*}

Since $\sup\limits_{\tilde{n}\in\Omega_k} \|y_{\tilde{n}} - y_{i_k}\| \leq \sigma/2$, $\|{y_{i_k}^{avg}} - y_{i_k}\|\leq \sigma/2$.
And for any $\tilde{n}\in\Omega_k$, we use induction assumption and $\sup\limits_{\tilde{n}\in\Omega_k} \|y_{\tilde{n}} - y_{i_k}\| \leq \sigma/2$ to obtain that
\[
\|\mu_i^{(t)} - y_{\tilde{n}}\| \leq
\|\mu_i^{(t)} - y_{n_i}\| + \|y_{n_i} - y_{i_k}\| + \|y_{i_k} - y_{\tilde{n}}\| \leq
0.1 + \dfrac{\sigma}{2} + \dfrac{\sigma}{2}
\leq 0.015.
\]
Hence $\gamma_{i,\tilde{n}}^{(t)}\geq\dfrac{\sigma^2}{4N} N^{-9/8}$.
Then similar to~\eqref{eq:mixture_comp_ratio},
\begin{align*}
\dfrac{ \sum_{\tilde{n}\notin \Omega_k} \gamma_{i,\tilde{n}}^{(t)} }{ \sum_{\hat{n}=1}^N \gamma_{i,\hat{n}}^{(t)} }
\leq
\dfrac{ \sum_{\tilde{n}\notin \Omega_k} \gamma_{i,\tilde{n}}^{(t)} }{ \sum_{\hat{n}\in \Omega_k} \gamma_{i,\hat{n}}^{(t)} }
\leq 10^{-10}.
\end{align*}

Therefore,
\begin{align*}
\|\mu_i^{(t+1)} - y_{n_i}\| &\leq
\dfrac{ \sum_{\tilde{n}\in\Omega_k} \gamma_{i,\tilde{n}}^{(t)} }{ \sum_{\hat{n}=1}^N \gamma_{i,\hat{n}}^{(t)} }
\left\| {y_{i_k}^{avg}} - y_{i_k} \right\|
+ \dfrac{ \sum_{\tilde{n}\notin\Omega_k} \gamma_{i,\tilde{n}}^{(t)}}{ \sum_{\hat{n}=1}^N \gamma_{i,\hat{n}}^{(t)} } \|y_{\tilde{n}} - y_{i_k}\| + \dfrac{\sigma}{2}
\\ &\leq
\|{y_{i_k}^{avg}} - y_{i_k}\| + 10^{-10} \cdot 1 + \dfrac{\sigma}{2}
\leq \sigma + 10^{-10}
\leq 0.01.
\end{align*}
\end{itemize}

It follows from induction that if $\|\mu_i^{(0)} - y_{i_k}\| \leq 0.01$, then $\|\mu_i^{(\tau)} - y_{i_k}\|\leq 0.01$, $\forall \tau>0$.
\end{proof}

\section{Detailed Experimental Settings for Gaussian Mixture Models}

We consider the same problem as that in Supplement~\ref{sec:GMM_proof} of inferring mean parameters $\pmu=(\mu_1,\cdots,\mu_M)\in\mathbb{R}^{d\times M}$ in a Gaussian mixture model with $M$ mixtures from $N$ data points $\vy=(y_1,\cdots,y_N)$:
\begin{align}
&p\left(y_n | \pmu\right) = \sum_{i=1}^M \dfrac{\lambda_i}{Z_i} \exp\left( - \dfrac{1}{2} (y_n - \mu_i)^\rT \Sigma_i^{-1} (y_n - \mu_i) \right)
+ \left( 1 - \sum_{i=1}^M \lambda_i \right) p_0(y_n),
\label{eq:GMM}
\end{align}
where
the covariances $\Sigma_i$ are isotropic and uniform across all mixture components: $\Sigma_i = \Sigma = \sigma^2 \mI$.
The constant mixture $p_0(y_n) = \ind{\|y_n\|\leq R} / Z_0$ represents crude observations of the data, which are distributed in a bounded region: $\lrn{y_n}\leq R$.
The objective function is given by the log posterior distribution: $U(\pmu) = -\log p(\pmu) - \sum_{n=1}^N \log p\left(y_n | \pmu\right)$,
where we take the prior to be
\begin{align}
p(\pmu) \propto \exp\left( - m \left(\|\pmu\|_F - {\sqrt{M}} R\right)^2 \ind{\|\pmu\|_F \geq {\sqrt{M}} R} \right).
\label{eq:GMM_prior}
\end{align}
\begin{figure}
\centering
\includegraphics[scale=0.35]{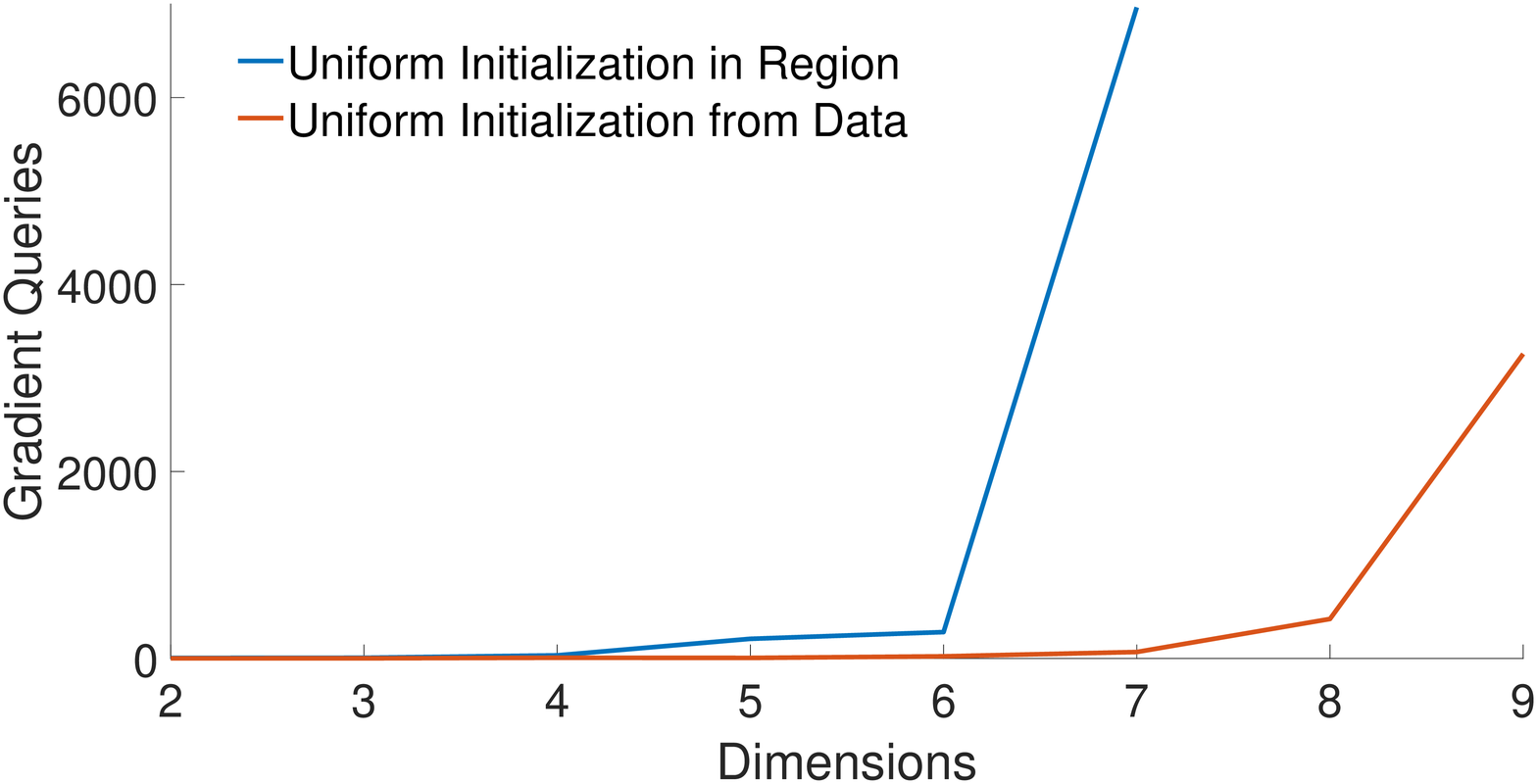}
\caption{Experimental results: scaling of the number of gradient queries required for EM with random initialization uniformly in the ball of radius $R$ and uniformly from the data.}
\label{fig:GMM_experiment2}
\end{figure}

Similar to the setting in Supplement~\ref{sec:dataset}, we take $\dfrac{\lambda_i}{Z_i}$ to be $\sigma^2/1000$, where the variance $\sigma = 1/\sqrt{d}$, so that the mixtures are well separated from each other.

We consider a synthetic dataset, $\{y_1,\cdots,y_N\}$, with sparse entries: only $\lfloor \log_2 d \rfloor$ of the entries in each data point $y_n$ are nonzero.
Indices of the nonzero entries are uniformly distributed over the set $\{1,\cdots,d\}$.
All the nonzero entries follow a uniform distribution on $[-1,1]$.
Also assume that the number of mixtures $M = \lfloor \log_2 d \rfloor$.
Hence the radius containing the data $R=2 \sqrt{M \lfloor \log_2 d \rfloor} = 2 \lfloor \log_2 d \rfloor$.
We generate $N=2^d$ data points following this rule.

We let the dimension $d$ range from $2$ to $32$ and recorded the number of gradient entries required for EM (with random initialization from the data) and ULA to converge.
The results were averaged over $20$ trials of experiments.
When dimension $d\geq10$, too many gradient queries are required for EM to converge, so that an accurate estimate of convergence time is not available.

For EM, we measured its accuracy in terms of the objective function value $U$ and require $U(\pmu_K) - U(\widehat{\pmu^*}) < 10^{-6}$ to conclude that $\pmu_K$ has converged close enough to $\pmu^*$.
For ULA, we measured its accuracy in terms of both the expected objective function value $\E{U(\pmu)}$ (or equivalently the cross entropy between the sampled distribution and the posterior) and the expected mean parameters $\E{\pmu}$.
We required both $\left| \frac{1}{K}\sum_{k=1}^K U(\pmu_k) - \widehat{\Ep{p^*}{U(\pmu)}} \right| < 10^{-6}$ and $ \left\| \frac{1}{K}\sum_{k=1}^K \pmu_k - \widehat{\Ep{p^*}{\pmu}} \right\|_F < 10^{-3}$ (which are of comparable scales) to assess the convergence of the sampling algorithm.

To estimate the reference value $\widehat{\pmu^*}\in\mathbb{R}^{d\times M}$, we run EM $1000$ times longer than the number of required steps found for the previous experiment with dimension $d-1$.
If estimates from $20$ different initializations differed by less than $10^{-8}$, we  accepted $\widehat{\pmu^*}$.
Otherwise, we increased the number of steps by $10$ times.
We similarly estimated $\widehat{\Ep{p^*}{U(\pmu)}}$ and $\widehat{\Ep{p^*}{\pmu}}$ by long runs of ULA (also $1000$ times longer than the number of required steps found for dimension $d-1$).
If estimates from $20$ different initializations differed by less than $10^{-8}$ for $\widehat{\Ep{p^*}{U(\pmu)}}$ and $10^{-5}$ for $\widehat{\Ep{p^*}{\pmu}}$, we accepted the estimates.
Otherwise, we increased the number of steps by $10$ times.

We also compared EM with random initialization uniformly in the ball of radius $R$ against that with uniform initialization from the data points.
We observed in Fig.~\ref{fig:GMM_experiment2} that initializing uniformly in the ball of radius $R$ leads to poorer convergence, implying that there are more local minima of $U$ than merely those nearby the data.

\end{appendices}

\bibliographystyle{plain}
\bibliography{CompleteSampling}

\end{document}